\pgfplotsset{compat=1.18}
\begin{document}

\title{Data-Driven DRO and Economic Decision Theory: An Analytical Synthesis With Bayesian Nonparametric Advancements}

\author{\name Nicola Bariletto \email nicola.bariletto@utexas.edu \\
       \addr Department of Statistics and Data Sciences\\
       University Texas at Austin\\
       Austin, TX 78712, USA
       \AND
       \name Khai Nguyen \email khainb@utexas.edu \\
       \addr Department of Statistics and Data Sciences\\
       University Texas at Austin\\
       Austin, TX 78712, USA
       \AND
       \name Nhat Ho \email minhnhat@utexas.edu \\
       \addr Department of Statistics and Data Sciences\\
       University Texas at Austin\\
       Austin, TX 78712, USA}

\editor{My editor}

\maketitle

\begin{abstract}
We develop an analytical synthesis that bridges data-driven Distributionally Robust Optimization (DRO) and Economic Decision Theory under Ambiguity (DTA). By reinterpreting standard regularization and DRO techniques as data-driven counterparts of ambiguity-averse decision models, we provide a unified framework that clarifies their intrinsic connections. Building on this synthesis, we propose a novel DRO approach that leverages a popular DTA model of smooth ambiguity-averse preferences together with tools from Bayesian nonparametric statistics. Our baseline framework employs Dirichlet Process (DP) posteriors, which naturally extend to heterogeneous data sources via Hierarchical Dirichlet Processes (HDPs), and can be further refined to induce outlier robustness through a procedure that selectively filters poorly-fitting observations during training. Theoretical performance guarantees and convergence results, together with extensive simulations and real-data experiments, illustrate the method's favorable performance in terms of prediction accuracy and stability.
\end{abstract}

\begin{keywords}
  Data-Driven Optimization, Distributionally Robust Optimization, Economic Decision Theory, Ambiguity Aversion, Bayesian Nonparametrics.
\end{keywords}

\section{Introduction}

Optimization lies at the heart of numerous scientific disciplines and practical applications, such as engineering, statistics, machine learning, and economics. This centrality stems from the fact that many key problems in these fields can be reframed as optimizing an objective function encoding domain-specific modeling choices, the goals of the optimization process, and potentially empirical evidence, such as data.

A prominent example is the reformulation of many \textbf{data-driven statistical learning} tasks as the minimization of a risk function:
\begin{equation*}
    \min_{\theta \in \Theta} \mathcal{R}_p(\theta), \quad \textnormal{where } \mathcal{R}_p(\theta) := \mathbb{E}_{\xi \sim p}[h(\theta, \xi)],
\end{equation*}
where $\theta$ denotes a parameter to be optimized over the space $\Theta\subseteq \mathbb R^d$, $\xi$ represents data distributed according to $p$ and belonging to the sample space $\Xi\subseteq \mathbb R^{k}$, and $h$ is an integrable loss function. For instance, a binary classification task can be recast in the above framework as follows: $\xi = (y, x)\in\{-1,1\}\times\mathbb R^{k-1}$ represents a label-feature pair, while the logit loss
\begin{equation*}
    h(\theta, \xi) = \log(1+\exp(-y \cdot r_\theta(x))
\end{equation*}
can be employed to link the label to the features, where $r_\theta:\mathbb R^{k-1}\to \mathbb R$ is a function (e.g., a neural network) parametrized by $\theta$ (e.g., the network weights and biases). As for the data distribution $p$ needed to compute the risk, it would be ideal to equate it to the true data-generating mechanism $p_\star$ and find a theoretical optimizer
\begin{equation*}
    \theta_\star\in\arg\max_{\theta\in\Theta} \mathcal R_{p_\star}(\theta).
\end{equation*}
However, $p_\star$ is rarely known in practice, and instead one has access to a sample $\boldsymbol{\xi}^n = (\xi_1,\dots,\xi_n)\overset{\textnormal{iid}}{\sim} p_\star$. In this situation, the associated \textit{empirical risk minimization} (ERM) problem is often solved in practice:
\begin{equation*}
    \min_{\theta\in\Theta} \mathcal R_{p_{\boldsymbol{\xi}^n}} (\theta), \quad \textnormal{where } p_{\boldsymbol{\xi}^n} := \frac{1}{n}\sum_{i=1}^n \delta_{\xi_i.}
\end{equation*}

In a similar fashion, classical \textbf{economic decision-making} problems under risk are often framed as expected utility maximization:
\begin{equation*}
    \max_{\theta \in \Theta} \mathbb{E}_{\xi \sim p}[u(\theta, \xi)],
\end{equation*}  
where $\theta$ represents an action chosen from a set of available options $\Theta$ (e.g., wealth allocation profiles in a portfolio), $\xi$ models a random outcome in the decision environment (e.g., as determined by fluctuations in financial markets), and $u$ is a utility function defined over actions $\theta \in \Theta$ for each state $\xi \in \Xi$ (e.g., the enjoyment of investment returns from allocation $\theta$ under financial conditions $\xi$).

The aim of this article is to explore connections between (i) data-driven optimization and (ii) economic decision theory, in the context of recent advances within the two fields that developed in parallel and in a largely unrelated fashion: distributionally robust optimization (DRO) and decision theory under ambiguity (DTA). Broadly speaking, DRO departs from the ERM paradigm in the attempt to manage the uncertainty generated by the empirical distribution $p_{\boldsymbol{\xi}^n}$ as a proxy for $p_\star$. To hedge against such uncertainty, DRO often employs a worst-case approach, whereby $p$ is selected as the distribution yielding the highest risk $\mathcal R_p(\theta)$ within some carefully chosen set of probability models $\mathcal Q$. On the other hand, DTA considers the situation in which the decision-maker (DM) does not possess enough information to pick a single model of the environment $p$, but contemplates a whole set $\mathcal Q$ of plausible data-generating processes (a situation termed model uncertainty or \textit{ambiguity}). Different choice axioms, then, lead to different utility criteria that reflect the ambiguity aversion of the DM. 

In light of these analogies, our goals are threefold:
\begin{enumerate}
    \item To provide a heuristic yet comprehensive introduction to the axiomatic foundations of popular DTA models, tailored to the machine learning research community;
    \item To reinterpret a broad class of regularization and DRO methods as data-driven adaptations of DTA models;
    \item To argue that the scope of DRO can be naturally extended by drawing insights from the DTA literature. In particular, we exemplify this thesis with a recently introduced DRO framework that adapts a popular DTA model to the data-driven context via Bayesian nonparametric techniques \citep{bariletto2024bayesian}. While we build upon this paper, previously published at NeurIPS, as our starting point, our aim is to significantly broaden the scope of the original contribution by (i) placing it within a more comprehensive analytical framework that, as described in the previous points, relates DTA to DRO, (ii) advancing the methodology proposed in that work by accounting for more complex data dependence structures, and (iii) proposing a refinement that further robustifies the method in the presence of outliers.
\end{enumerate}

\paragraph{Related Literature.} Before proceeding with our treatment of the subject, we would like to acknowledge that prior works in the DRO literature (see, for instance, discussions in \cite{kuhn2024distributionallyrobustoptimization, ben2010soft, rahimian2022frameworks}) and robust statistics \citep{holmes2016robuststatistics, cerreia2013ambiguity} have noted connections between distributional robustness and and ambiguity aversion in decision-making. However, these connections have primarily been noted in passing, often with a narrow focus on the max-min preferences of \cite{gilboa1989maxmin} (cf. Subsection \ref{sub:ambiguity_averse_preferences} below). In contrast, our aim is to take a more systematic and explicit approach, precisely delineating these connections and demonstrating how they can be leveraged to derive novel DRO criteria. Specifically, by arguing in detail that DRO methods can be viewed as data-driven instantiations of ambiguity-averse decision-theoretic models, we show that these connections naturally lead to DRO frameworks derived from DTA models that have not previously been applied in data-driven settings.
\bigskip

The remainder of the article is organized as follows. In Section~\ref{sec:DTA}, we provide an overview of the axiomatic foundations of DTA and introduce several popular models from the literature. Section~\ref{sec:data_driven_optim} presents a framework for data-driven optimization that emphasizes the correspondence between the DTA models introduced earlier and popular data-driven optimization methods, particularly highlighting the relationship between DRO methods and ambiguity-averse preference models. In Section~\ref{sec:smooth_DRO}, we introduce an additional class of ambiguity-averse preferences that is well-known in the DTA literature, and we propose a method to adapt these preferences to the data-driven DRO context using Bayesian nonparametric techniques. To familiarize the reader with the essential tools from Bayesian nonparametrics, Section~\ref{sec:HDP} reviews the key properties of our primary nonparametric Bayesian tool, the Hierarchical Dirichlet Process (HDP) prior.  Section~\ref{sec:performance_guarantees} presents theoretical performance guarantees for our methodology, while Section~\ref{sec:MC_approx} proposes tractable approximation strategies yielding a surrogate criterion that is efficiently optimized via gradient-based methods, as demonstrated in Section~\ref{sec:gradient_optimization}. Sections~\ref{sec:DP_experiments} and \ref{sec:HDP_experiments} present the results of numerical experiments employing our method, and Section~\ref{sec:DORO} proposes an outlier-robust refinement.  Section~\ref{sec:conclusion} concludes the main text of the article. The supplementary material contains proofs and additional theoretical results (Appendix~\ref{app:theorems}), further background on topics and algorithms used in the paper (Appendix~\ref{app:background}), detailed descriptions of numerical experiments and their results (Appendices~\ref{app:experiments_DP_DRO}, \ref{app:experiments_HDP_DRO}, and \ref{app:experiments_DORO}), and information regarding our computing resources (Appendix~\ref{app:computing_resources}).

\section{The Axiomatic Approach to Decision-Making Under Ambiguity}\label{sec:DTA}

In this section, we provide a heuristic overview of the axiomatic framework in which much of DTA has been developed over the past decades. An (economic) decision-maker (DM) is assumed to operate in the following decision environment \citep{anscombe1963definition}. The DM's uncertainty is modeled by a set of \textit{ambiguous} states of the world $S$, equipped with a sigma-algebra $\mathcal{B}_S$, reflecting the DM's imperfect knowledge about what the true state is. Moreover, there is a convex space of consequences $C$.\footnote{On an interpretive note, in the DTA literature, the convexity of the consequence space $C$ is often understood as representing a space of lotteries over utility-relevant outcomes. While this interpretation could be used to frame our discussion, doing so would introduce an additional layer to the preference-utility representations that is typically absent in the data-driven contexts of primary interest. Therefore, to maintain consistency and simplicity in our exposition, we treat $C$ directly as the space of utility-relevant outcomes or consequences, without interpreting it as a space of probabilities over another underlying outcome space.} The DM selects an act $f: S \to C$ belonging to $\mathcal{F}$, the set of all simple measurable functions from $S$ to $C$. Returning to the portfolio allocation example, the DM can be viewed as choosing among a set of possible allocations, each corresponding to a mapping $f$ from states of the world $s \in S$ (e.g., financial market conditions) to outcomes $c \in C$ (e.g., investment returns). For instance, a portfolio allocation that heavily favors stocks issued by technology companies might correspond to a mapping $f$ that produces high returns $c$ during periods $s$ of tech optimism in the market.

Having established this abstract decision environment, the DM evaluates options based on their preferences, encoded by a binary relation $\succeq$ on $\mathcal{F}$. For all $f, g \in \mathcal{F}$, $f \succeq g$ indicates that the DM prefers act $f$ over act $g$. Additionally, we write $f \succ g$ if $f \succeq g$ but $g \not\succeq f$ (strict preference), and $f \sim g$ if both $f \succeq g$ and $g \succeq f$ (indifference). Moreover, for any $c \in C$, with an abuse of notation we write $c \in \mathcal{F}$ to denote the constant act equal to $c$ for all $s \in S$. By virtue of this embedding of $C$ into $\mathcal F$, the binary relation $\succeq$ can be naturally extended to $C$.

As we will elaborate next, a cornerstone of the decision theory literature has been the derivation of utility representations of the DM's preferences by imposing axioms on $\succeq$. As it will become clear in the sequel, the power of this approach lies in its ability to transform simple, interpretable, and mostly falsifiable preference axioms into a utility criterion, facilitating the direct comparison of alternative acts and enabling optimal choices based on real-number representations of their relative ranking.

\subsection{The Expected Utility Paradigm}\label{sub:expected_utility}

In the above framework, the classical expected utility theory criterion \citep{neumann1947theory, savage1954foundations, anscombe1963definition} can be axiomatically obtained as follows. If the preference relation $\succeq$ on $\mathcal F$ satisfies the following axioms:\footnote{Note that, while the first three axioms have a substantive interpretation, the last two, although plausible, only appear for technical purposes}
\begin{enumerate}[label=(A\arabic*)]
    \item \label{ax:weak_order} $\succeq$ is a \textit{weak order}, i.e., it is complete and transitive;
    \item \label{ax:monotonicity} $\succeq$ is \textit{monotonic}, i.e., for all $f,g\in \mathcal F$, if $f(s)\succeq g(s)$ for all $s\in S$, then $f\succeq g$;
    \item \label{ax:independence} $\succeq$ satisfies \textit{independence}, i.e., for all $f,g,h\in\mathcal F$ and $\delta \in (0,1)$,
    \begin{equation*}
        f \succ g \implies \delta f + (1-\delta) h \succ \delta g + (1-\delta) h;
    \end{equation*}
    \item \label{ax:archimedean} $\succeq$ satisfies the \textit{Archimedean property}, i.e., for all $f,g,h\in\mathcal F$ such that $f\succ g\succ h$, there exist $\delta, \varepsilon\in(0,1)$ such that
    \begin{equation*}
        \delta f + (1-\delta) h \succ g \succ \varepsilon f + (1-\varepsilon) h.
    \end{equation*}
    \item \label{ax:nondegeneracy} $\succeq$ is \textit{non-degenerate}, i.e., $f\succ g$ for some $f,g\in\mathcal F$;
\end{enumerate}
then\footnote{All of the utility representations that we report in this Section were originally formulated in a stronger way as ``if-and-only-if'' characterizations. However, we report them here in a looser form to be able to omit details that are not essential to our treatment.} there exists a non-constant utility function $u : C \to \mathbb R$ and a unique probability measure $Q$ on $\mathcal B_S$ such that, for all $f,g\in\mathcal F$,
\begin{equation*}
    f\succeq g \iff \int_S u(f(s))\, Q(\mathrm ds) \geq \int_S u(g(s))\, Q(\mathrm ds).
\end{equation*}
In other words, the elementary binary preference $\succeq$ can be represented by the utility criterion
\begin{equation*}
    V(f) := \int_S u(f(s))\, Q(\mathrm ds),
\end{equation*}
which the DM maximizes to choose optimally in $\mathcal F$. This representation suggests that, based on their subjective preference $\succeq$, the DM acts as though: (i) they are capable of forming a precise and unique probability assessment $Q$ over the states of the world $s$, and (ii) they make choices by maximizing a utility function $u$ defined over consequences $f(s)$, averaged using the probability assessment $Q$. Therefore, even though the DM faces a condition of ambiguity, they exhibit strong confidence in specifying a complete and unambiguous probabilistic assessment. They resolve all uncertainty by treating this assessment as if it were an objective probability---such as a known and well-specified data-generating process or model of the world---and compute expected utility values accordingly. In this sense, the preference relation can be interpreted as being \textit{ambiguity-neutral}.

\subsection{Ambiguity-Averse Preferences}\label{sub:ambiguity_averse_preferences}

The neutral attitude towards ambiguity displayed by expected utility preferences may not align with common-sense expectations or empirical observations of human preferences.\footnote{See, for example, the celebrated paradox of \cite{ellsberg1961risk}: when faced with a choice between two urns, each containing 100 balls that are either green or red, most people prefer to bet on an urn with a known composition of 50 green and 50 red balls, rather than on an urn with an unknown proportion of green and red balls.} Indeed, it is reasonable to expect a DM to exhibit a negative attitude towards ambiguity, favoring acts that are relatively “less ambiguous” (i.e., those associated with less variable rewards across uncertain states).

The key axiom on $\succeq$ that leads to this undesirable neutrality is the independence axiom (axiom~\ref{ax:independence}). Specifically, mixing an act $h$ with two different acts $f$ and $g$ can produce new acts $f' \equiv \delta f + (1-\delta) h$ and $g' \equiv \delta g + (1-\delta) h$ such that $f'$ implies a higher degree of ambiguity than $g'$, even if $f \succ g$. This could potentially reverse the original preference relation if the DM is ambiguity-averse. Furthermore, ambiguity aversion may prompt the DM to prefer hedging against ambiguity by mixing across uncertain acts.

As a result, an extensive body of literature has explored various relaxations of Axiom~\ref{ax:independence} that lead to ambiguity-averse utility criteria. \cite{cerreia2011uncertainty} synthesized these contributions by identifying that the key modification to Axiom~\ref{ax:independence} is the introduction of the following condition:

\begin{enumerate}[label=(A\arabic*)]
    \setcounter{enumi}{5}
    \item \label{ax:amb_aversion} The preference relation $\succeq$ exhibits \textit{ambiguity aversion}, meaning that for all $f,g \in \mathcal{F}$ and $\delta \in (0,1)$,
    \begin{equation*}
        f \sim g \implies \delta f + (1-\delta) g \succeq f.
    \end{equation*}
\end{enumerate}
Intuitively, Axiom~\ref{ax:amb_aversion} reflects the decision maker's (DM's) preference for diversification across uncertain acts: if they are indifferent between $f$ and $g$, then hedging against uncertainty by mixing them should be weakly preferred. Combined with additional axioms---for instance introducing weaker forms of independence compared to Axiom~\ref{ax:independence}---Axiom~\ref{ax:amb_aversion} underpins several widely used DTA models, which we review below.

\paragraph{Max-Min Expected Utility Preferences.} \cite{gilboa1989maxmin} made one of the first fundamental steps in leveraging Axiom~\ref{ax:amb_aversion} to obtain meaningful utility representations that incorporate ambiguity aversion. Specifically, imposing Axiom~\ref{ax:amb_aversion} and a weaker form of Axiom~\ref{ax:independence}, they showed that there exist a non-constant utility function $u:C \to \mathbb R$ and a convex compact set $\mathcal Q$ of probability measures on $\mathcal B_S$ such that the preference relation is represented by the criterion
\begin{align}\label{eq:gilboaschmeidler_criterion}
    V_1(f) & :=  \min_{Q\in \mathcal Q} \int_S u(f(s))\, Q(\mathrm ds).
\end{align}
Intuitively, the DM deems plausible a whole set $\mathcal Q$ of distributions over states $s$, and chooses the optimal act $f\in\mathcal F$ according to a worst-case calculation: whatever $f$ the DM selects, they make their decision according to the lowest possible expected utility level induced by that act within the ambiguity set $\mathcal Q$. This can also be understood through the metaphor of a game against a malignant Nature, which chooses the state of the world adversarially after the DM has selected an act $f$.

It is also interesting to note that the size of $\mathcal Q$ encodes information both on the level of ambiguity faced by the DM ---the larger $Q$, the less unambiguous information the DM has---as well as on their degree of ambiguity aversion---the larger $\mathcal Q$, the more cautiously the DM chooses among ambiguous acts \citep{ghirardato2004differentiating}.

%

%

\paragraph{Variational Preferences.} A more general class of ambiguity-averse preferences was axiomatized by \cite{maccheroni2006ambiguity} by further weakening Axiom~\ref{ax:independence}. In particular, Axiom~\ref{ax:amb_aversion}, together with this weaker form of independence, leads to a preference relation $\succeq$ that is represented by the following utility criterion:
%
%
\begin{equation*}
    V_2(f) := \min_{Q\in\mathcal P_S} \left\{\int_S u(f(s))\, Q(\mathrm ds) + c(Q) \right\},
\end{equation*}
where $\mathcal P_S$ is the space of probability measures on $\mathcal B_S$ and $c:\mathcal P_S\to\mathbb [0,\infty]$ is a convex and lower-semicontinuous function. Note that, if
\begin{equation*}
    c(Q) = \delta(Q;\mathcal Q) := \begin{cases}
        0 & \textnormal{if } Q\in \mathcal Q,\\
        \infty & \textnormal{if } Q\not\in \mathcal Q,
    \end{cases}
\end{equation*}
we recover the max-min expected utility criterion $V_1$.

These preferences, known as \textit{variational preferences}, can be interpreted as follows: the DM maximizes a worst-case penalized expected utility criterion, where the function $c$ imposes a penalty based on the distribution $Q$ used to compute expectations. A sharp penalization, such as $c = \delta(\cdot; \mathcal{Q})$, leads to the max-mincriterion proposed by \cite{gilboa1989maxmin}. However, smoother penalties are also accommodated, such as $c(Q) = \mathcal{D}(Q, P)$, where $P$ is a reference distribution and $\mathcal{D}$ is a probability metric; see, e.g., \cite{hansen2001robust}.

\section{Data-Driven Risk Minimization}\label{sec:data_driven_optim}

Similar to the DTA framework, data-driven learning typically involves optimizing a numeric criterion. However, rather than maximizing a utility function, one minimizes a risk function $\mathcal{R}_p(\theta) = \mathbb E_{\xi\sim p}[h(\theta, \xi)]$, where $p$ represents an uncertain distribution used to average the loss $h(\theta, \xi)$ over data $\xi\in\Xi$. To continue the parallel with DTA, $p$ can be interpreted as analogous to an uncertain state $s$, while the parameter $\theta$ (the decision variable) defines a mapping that associates distributions with risk values, $p \mapsto \mathcal{R}_p(\theta)$, similar to how an act $f$ induces a mapping from states to utility values, $s\mapsto u(f(s))$. In what follows, we show that, in light of this interpretation, the three DTA models explored in the previous Section correspond to popular classes of data-driven optimization.
\subsection{Regularized ERM as Ambiguity-Neutral Expected Utility Optimization}

Just as the expected utility criterion, described in Subsection~\ref{sub:expected_utility}, averages utility profiles $u(f(s))$ using a unique probability measure $Q$ on $\mathcal B_S$, one can define a probability measure $Q_{\boldsymbol{\xi}^n}$ on $\mathcal B_{\mathcal P_\Xi}$\footnote{$\mathcal P_\Xi$ denotes the set of probability measures on $\mathcal B_\Xi$, endowed with the Borel sigma-algebra $\mathcal B_{\mathcal P_\Xi}$ generated by the topology of weak convergence.} and average $\mathcal R_p(\theta)$ accordingly---we explicitly highlight the dependence of the measure $Q_{\boldsymbol{\xi}^n}$ on the available data sample $\boldsymbol{\xi}^n=(\xi_1, \dots, \xi_n)$, making the optimization problem data-driven. This leads to the following formulation:
\begin{equation*}
    \min_{\theta\in\Theta} \mathbb E_{p\sim Q_{\boldsymbol{\xi}^n}}[\mathcal R_p(\theta)].
\end{equation*}
In this framework, recalling that $p_{\boldsymbol{\xi}^n}$ denotes the empirical distribution based on $\boldsymbol{\xi}^n$, the ERM approach can be interpreted as choosing $Q_{\boldsymbol{\xi}^n}(\mathrm dp) = \delta_{p_{\boldsymbol{\xi}^n}}(\mathrm dp)$, leading to the standard data-driven optimization problem
\begin{equation*}
    \min_{\theta\in\Theta} \mathcal R_{p_{\boldsymbol{\xi}^n}}(\theta), \quad \textnormal{where } \mathcal R_{p_{\boldsymbol{\xi}^n}}(\theta) \equiv \frac{1}{n} \sum_{i=1}^n h(\theta, \xi_i).
\end{equation*}

A more general and insightful choice of $Q_{\boldsymbol{\xi}^n}$ \citep{lyddon2018nonparametric, wang2022distributional, bariletto2024bayesian} is the Dirichlet Process (DP) posterior \citep{ferguson1973bayesian} with a prior centering measure $p_0\in\mathcal P_\Xi$ and a prior concentration parameter $\alpha\in (0,\infty)$. We briefly recall that the DP with parameters $(\alpha, p_0)$ is the distribution of a random probability measure $p$ on $\mathcal B_\Xi$ (denoted as $p\sim\textnormal{DP}(\alpha, p_0)$), satisfying $(p(A_1),\dots,p(A_k))\sim\textnormal{Dirichlet}(\alpha p_0(A_1), \dots, \alpha p_0(A_k))$ for any finite measurable partition $\{A_1,\dots,A_k\}$ of $\Xi$. Furthermore, given the observed sample $\boldsymbol{\xi}^n$, the conditional distribution of $p$ remains a DP with updated parameters $(\alpha + n, \frac{n}{\alpha + n} p_{\boldsymbol{\xi}^n} + \frac{\alpha}{\alpha + n}p_0)$.

\cite{blackwell1973ferguson} provided a fundamental characterization of the DP predictive distribution through the following infinite P\'olya urn scheme:
\begin{equation*}
    \int_{\mathcal P_\Xi} p(\mathrm d\xi)\, Q_{\boldsymbol{\xi}^n}(\mathrm dp) = \frac{n}{\alpha + n} p_{\boldsymbol{\xi}^n}(\mathrm d\xi) + \frac{\alpha}{\alpha + n}p_0(\mathrm d\xi).
\end{equation*}
This shows that, after observing the sample $\boldsymbol{\xi}^n$, the expected value of the random probability measure $p$ is a convex combination of the empirical distribution and the prior centering distribution. Equivalently, the next observation $\xi_{n+1}$ is predicted to match one of the observed data points $\xi_i$ with probability $1/(\alpha + n)$ for $i=1,\dots,n$, or to be a new draw from $p_0$ with probability $\alpha/(\alpha + n)$. Consequently, averaging the risk with respect to $Q_{\boldsymbol{\xi}^n}$ leads to the following criterion:
\begin{equation}\label{eq:ambiguity_neutral_criterion}
    \mathbb E_{p\sim Q_{\boldsymbol{\xi}^n}}[\mathcal R_p(\theta)] = \frac{n}{\alpha + n} \mathcal R_{p_{\boldsymbol{\xi}^n}}(\theta) + \frac{\alpha}{\alpha + n}\mathcal R_{p_0}(\theta).
\end{equation}
This expression reveals that, under this choice of $Q_{\boldsymbol{\xi}^n}$, the expected utility framework results in \textit{regularized ERM}, where a balance is achieved between the empirical risk $\mathcal R_{p_{\boldsymbol{\xi}^n}}(\theta)$ and the prior-induced regularization term $\mathcal R_{p_0}(\theta)$. The extent of regularization is governed by the DP prior concentration parameter $\alpha$ and the sample size $n$: as $n$ increases (respectively, as $\alpha$ increases), greater weight is placed on the empirical risk term (respectively, the regularization term). Notably, the standard ERM criterion is recovered in the limit as $\alpha \to 0$.

As demonstrated in the following Proposition, an interesting special case arises in linear regression, where a specific choice of $p_0$ leads to an optimization procedure equivalent to \emph{Ridge regression} \citep{hoerl1970ridge}. While simple, this result is remarkable as it provides an alternative Bayesian interpretation of Ridge regression, distinct from the traditional one: instead of placing a parametric normal prior on the Gaussian regression coefficients $\theta$ and finding the posterior mode, Ridge regression is obtained by placing a nonparametric DP prior on the data-generating process within a risk minimization framework, and averaging the risk according to the corresponding posterior.

\begin{proposition}[\cite{bariletto2024bayesian}]\label{pro:equivalence_regularization}
    Assume each data point $\xi=(y,x)\in\mathbb R^{d+1}$ consists of a response $y\in\mathbb R$ and a vector of inputs $x\in\mathbb R^d$, with $h(\theta, \xi) = (y-x^\top \theta)^2$ for $\theta\in\mathbb R^d$. Choosing $Q_{\boldsymbol{\xi}^n}$ as a DP with concentration parameter $\alpha + n$ and centering distribution $\frac{n}{\alpha + n} p_{\boldsymbol{\xi}^n} + \frac{\alpha}{\alpha + n} \mathcal N(0, I_{d+1})$, Equation~\eqref{eq:ambiguity_neutral_criterion} simplifies to Ridge regression:
    \begin{equation*}
        \min_{\theta\in\mathbb R^d} \left\{\frac{1}{n}\sum_{i=1}^n (y_i - x_i^\top \theta)^2 +\frac{\alpha}{n}\Vert \theta\Vert_2^2\right\}.
    \end{equation*}
\end{proposition}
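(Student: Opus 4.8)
The plan is to start from Equation~\eqref{eq:ambiguity_neutral_criterion}, which already decomposes the ambiguity-neutral criterion as a convex combination of the empirical risk and the prior risk. With the quadratic loss $h(\theta,\xi) = (y - x^\top\theta)^2$, the empirical risk term is immediately $\frac1n\sum_{i=1}^n (y_i - x_i^\top\theta)^2$, so the only real work is to evaluate the prior-risk term $\mathcal R_{p_0}(\theta)$ when $p_0 = \mathcal N(0, I_{d+1})$, i.e., when $\xi = (y,x)$ is a standard Gaussian vector in $\mathbb R^{d+1}$ with all components independent of mean zero and unit variance.

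First I would compute $\mathbb E_{\xi\sim p_0}[(y - x^\top\theta)^2]$ by expanding the square: $\mathbb E[y^2] - 2\,\mathbb E[y\, x^\top\theta] + \mathbb E[(x^\top\theta)^2]$. Under independence and the zero-mean, identity-covariance structure, $\mathbb E[y^2] = 1$, the cross term $\mathbb E[y\,x^\top\theta] = \mathbb E[y]\,\mathbb E[x]^\top\theta = 0$, and $\mathbb E[(x^\top\theta)^2] = \theta^\top\mathbb E[xx^\top]\theta = \theta^\top I_d\,\theta = \Vert\theta\Vert_2^2$. Hence $\mathcal R_{p_0}(\theta) = 1 + \Vert\theta\Vert_2^2$.

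Next I would substitute back. The excerpt's Proposition takes $Q_{\boldsymbol{\xi}^n}$ to be the DP with concentration $\alpha + n$ and centering $\frac{n}{\alpha+n}p_{\boldsymbol{\xi}^n} + \frac{\alpha}{\alpha+n}\mathcal N(0,I_{d+1})$; by the Blackwell--MacQueen predictive identity quoted just before the Proposition, this is exactly the DP posterior with original parameters $(\alpha, \mathcal N(0,I_{d+1}))$ after seeing $\boldsymbol{\xi}^n$, so Equation~\eqref{eq:ambiguity_neutral_criterion} applies verbatim with $p_0 = \mathcal N(0,I_{d+1})$. Plugging in gives
\begin{equation*}
    \mathbb E_{p\sim Q_{\boldsymbol{\xi}^n}}[\mathcal R_p(\theta)] = \frac{n}{\alpha+n}\cdot\frac1n\sum_{i=1}^n (y_i - x_i^\top\theta)^2 + \frac{\alpha}{\alpha+n}\bigl(1 + \Vert\theta\Vert_2^2\bigr).
\end{equation*}
The additive constant $\frac{\alpha}{\alpha+n}$ does not affect the $\arg\min$ over $\theta$, and multiplying the whole objective by the positive constant $\frac{\alpha+n}{n}$ (again irrelevant to the minimizer) turns it into $\frac1n\sum_{i=1}^n (y_i - x_i^\top\theta)^2 + \frac{\alpha}{n}\Vert\theta\Vert_2^2$, which is the stated Ridge objective.

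There is essentially no hard step here: the result is a direct computation once one notices that the Gaussian prior contributes exactly the squared $\ell_2$-norm penalty (up to an irrelevant constant) and that rescaling by $\frac{\alpha+n}{n}$ reconciles the coefficients. The only point requiring a word of care is justifying that minimizing $\mathbb E_{p\sim Q_{\boldsymbol{\xi}^n}}[\mathcal R_p(\theta)]$ is equivalent to minimizing the displayed Ridge objective despite the constant offset and the global positive rescaling, which I would state explicitly since both operations preserve $\arg\min$.
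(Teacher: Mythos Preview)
Your proof is correct and follows essentially the same route as the paper: both compute $\mathcal R_{p_0}(\theta)=1+\Vert\theta\Vert_2^2$ under $p_0=\mathcal N(0,I_{d+1})$, substitute into Equation~\eqref{eq:ambiguity_neutral_criterion}, and then observe that the affine map $r\mapsto \frac{\alpha+n}{n}r-\frac{\alpha}{n}$ (equivalently, dropping the additive constant and rescaling by a positive factor) preserves the minimizer. The only cosmetic difference is that the paper obtains the prior risk by noting $y-\theta^\top x\sim\mathcal N(0,1+\Vert\theta\Vert_2^2)$, whereas you expand the square and use the second-moment structure directly.
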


\subsection{DRO as Ambiguity-Averse Utility Optimization}

Just as adapting the ambiguity-neutral expected utility paradigm to a data-driven setting leads to regularized ERM, we are now ready to demonstrate that the ambiguity-averse models naturally corresponds to widely used DRO methods. Intuitively, the DM's uncertainty about the true model of the world parallels the empirical researcher's uncertainty in selecting a single data-driven probability distribution as a proxy for the data-generating process when computing the risk. While expected utility maximization and regularized ERM address this uncertainty by averaging over utility or risk profiles, ambiguity-averse criteria and DRO frameworks acknowledge this uncertainty explicitly and incorporate aversion to it within the optimization process. In what follows, we illustrate how popular DRO frameworks can be interpreted as data-driven instantiations of the ambiguity-averse decision models discussed in Subsection~\ref{sub:ambiguity_averse_preferences}.

\paragraph{DRO Based On Ambiguity Sets.} Analogous to the max-min utility criterion $V_1$ of \cite{gilboa1989maxmin} presented in Equation~\eqref{eq:gilboaschmeidler_criterion}, the most common formulation of DRO seeks to solve the data-driven optimization problem  
\begin{equation}\label{eq:minmax_dro_criterion}
    \min_{\theta\in\Theta} \max_{p\in\mathcal P(\boldsymbol{\xi}^n)} \mathcal R_p(\theta)
\end{equation}  
for a data-driven ambiguity set $\mathcal P(\boldsymbol{\xi}^n) \subseteq \mathcal P_{\Xi}$.  

Two clarifications are in order. First, since we now face a minimization problem, the $\min$ operator in Equation~\eqref{eq:gilboaschmeidler_criterion} is replaced with a $\max$. Second, to maintain consistency with the previous analogies between DTA and DRO---where we interpreted the function $u$ in the utility criterion as the risk function $\mathcal R$ in the data-driven setting---the problem should be written as  
\begin{equation*}
    \min_{\theta\in\Theta} \max_{Q\in\mathcal Q(\boldsymbol{\xi}^n)} \mathbb{E}_{p\sim Q}[\mathcal{R}_p(\theta)]
\end{equation*}  
for some set $\mathcal Q(\boldsymbol{\xi}^n)$ of probability measures on $\mathcal B_{\mathcal P_{\Xi}}$. However, for any $Q\in\mathcal Q(\boldsymbol{\xi}^n)$, denoting  
\begin{equation*}
    p_Q(\mathrm{d}\xi) := \int_{\mathcal P_{\Xi}} p(\mathrm{d}\xi) \,Q(\mathrm{d}p),
\end{equation*}  
we can always rewrite  
\begin{equation*}
    \mathbb{E}_{p\sim Q}[\mathcal{R}_p(\theta)] = \mathbb{E}_{p\sim Q}[\mathbb{E}_{\xi\sim p}[h(\theta,\xi)]] \equiv \mathbb{E}_{\xi\sim p_Q}[h(\theta,\xi)] = \mathcal{R}_{p_Q}(\theta).
\end{equation*}  
Thus, we opt for the formulation in Equation~\eqref{eq:minmax_dro_criterion} as it leads to leaner and somewhat more general notation.\footnote{Indeed, Equation~\eqref{eq:minmax_dro_criterion} does not restrict the set $\mathcal P(\boldsymbol{\xi}^n)$ to be of the form $\left\{ \int_{\mathcal P_\Xi} q(\mathrm{d}\xi) Q(\mathrm{d}q) : Q\in\mathcal Q(\boldsymbol{\xi}^n) \right\}$.}

A significant portion of the DRO literature has been dedicated to defining suitable versions of the ambiguity set $\mathcal P(\boldsymbol{\xi}^n)$ and analyzing the properties of the corresponding optimization procedures. For example, $\mathcal P(\boldsymbol{\xi}^n)$ can be constructed to include all distributions $p$ that satisfy certain moment conditions \citep{scarf1958minmax, cai2023dro, gallego1993newsboy, elghaoui2003worstcase, natarajan2010robust, li2018closedform, yue2006newsvendor}, or all distributions within an $\varepsilon$-distance from a reference (e.g., the empirical) distribution $\hat p$, with various choices for the distance metric $\mathcal D(p, \hat p)$ \citep{ben-tal2013robust, wang2016likelihood, jiang2018risk, duchi2021learning, pflug2007ambiguity, pflug2012investment, mohajerin2018data, lasserre2021distributionally, staib2019kernel}. For additional references on the theory and applications of these methods, we direct the reader to the comprehensive reviews by \cite{rahimian2022frameworks} and \cite{kuhn2024distributionallyrobustoptimization}.

\paragraph{DRO Based On Soft Constraints.} Although less common, data-driven counterparts of the variational preferences discussed in Subection \ref{sub:ambiguity_averse_preferences} have been explored in the DRO literature. Specifically, if $\mathcal D$ denotes a divergence function on $\mathcal P_{\Xi}$, one may consider solving the problem  
\begin{equation}\label{eq:DRO_soft}
    \min_{\theta \in \Theta} \max_{p \in \mathcal P_{\Xi}} \left\{\mathcal R_p(\theta) - \lambda \mathcal D(p, \hat{p})\right\},
\end{equation}
where $\hat{p}$ is a reference distribution informed by the data (e.g., $\hat{p} = p_{\boldsymbol{\xi}^n}$) and $\lambda>0$. This formulation can be viewed as a softer alternative to pursuing the more commonly used ambiguity-set strategy  
\begin{equation*}
    \min_{\theta \in \Theta} \max_{p \in \mathcal P(\boldsymbol{\xi}^n)} \mathcal R_{p}(\theta), \quad \mathcal P(\boldsymbol{\xi}^n) = \{p \in \mathcal P_{\Xi} : \mathcal D(p, \hat{p}) \leq \varepsilon\},
\end{equation*}
for some $\varepsilon > 0$. Indeed, instead of equally weighting all distributions within an $\varepsilon$-neighborhood of $\hat{p}$ and completely ignoring distributions outside this neighborhood, the criterion in Equation~\eqref{eq:DRO_soft}---akin to variational preferences---allows to penalize distributions $p$ based on their distance from $\hat{p}$, with such penalty varying smoothly with $\mathcal D(p, \hat{p})$ and being moderated by the hyperparameter $\lambda$; see for instance \cite{zhang2024short, ben2010soft}.

\section{Smooth Ambiguity-Averse Preferences Meet Bayesian Nonparametrics}\label{sec:smooth_DRO}

\cite{cerreia2011uncertainty} demonstrate that another class of ambiguity-averse preferences are the smooth preferences of \cite{klibanoff2005smooth}. According to this model, similarly to min-max and variational preferences, the DM considers a set $\mathcal{Q}\subseteq \mathcal P_S$ of possible models of the world. However, instead of selecting the optimal act $f \in \mathcal{F}$ based on a highly pessimistic criterion that corresponds to the expected utility computed according to an adversarially chosen $Q\in\mathcal Q$, the DM (i) forms a probability assessment $\Pi$ over models in $\mathcal{Q}$ and (ii) optimizes according to the criterion
\begin{equation}\label{eq:smooth_preferences_utility}
    V_3(f) = \int_{\mathcal{Q}} \psi\left(\int_S u(f(s))\, Q(\mathrm{d}s)\right) \Pi(\mathrm{d}Q),
\end{equation}
where $\psi: \mathbb{R} \to \mathbb{R}$ is a concave and strictly increasing function. The intuition behind why this criterion induces ambiguity aversion is as follows: the expected utilities in the set
\begin{equation*}
    \mathcal{U}_f := \left\{\int_S u(f(s))\, Q(\mathrm{d}s) : Q \in \mathcal{Q}\right\}
\end{equation*}
are averaged according to $\Pi$ after being transformed through a concave function $\psi$, which induces ambiguity aversion in the same way that a concave utility function $u$ induces risk aversion (that is, interpreting $f(s)$ as a known lottery, a concave $u$ encodes aversion to the objective uncertainty, or risk, associated to $f(s)$; see, e.g., \cite{mas-colell1996microeconomictheory}). The more ``spread out'' the values in $\mathcal{U}_f$---that is, the more ambiguous the act $f$ is---the less likely the DM is to choose act $f$ relative to alternatives in $\mathcal{F}$.

This mechanism can be effectively illustrated with the following simple example: assume the DM chooses between two acts $\mathcal{F} = \{f_1, f_2\}$ and considers two models $\mathcal{Q} = \{Q_1, Q_2\}$ by assigning them equal probability $\Pi = 0.5 \delta_{Q_1} + 0.5 \delta_{Q_2}$. Furthermore, suppose the expected utility values $U_{Q_j}(f_i) := \int_S u(f_i(s))\, Q_j(\mathrm{d}s)$, for $i, j \in \{1, 2\}$, satisfy
\begin{equation*}
    U_\star = \int_{\mathcal{Q}} U_Q(f_1)\, \Pi(\mathrm{d}Q) = \int_{\mathcal{Q}} U_Q(f_2)\, \Pi(\mathrm{d}Q),
\end{equation*}
as depicted on the horizontal axis of Figure \ref{fig:psi_graph} (note that, for any measurable function $G:\mathcal Q\to \mathbb R$, $\int_{\mathcal{Q}} G(Q)\, \Pi(\mathrm{d}Q)$ is simply a formal expression for the average of $G(Q_1)$ and $G(Q_2)$, given the assumed form of $\Pi$). Clearly, $f_1$ and $f_2$ yield the same average reward $U_\star$ under $\Pi$, although the rewards $U_{Q_1}(f_2)$ and $U_{Q_2}(f_2)$ are less dispersed compared to those associated with $f_1$. Consequently, transforming these $Q$-dependent rewards $U_{Q}(f_i)$ through the strictly concave function $\psi$ before averaging them with respect to $\Pi$, results in a strict preference for $f_2$ over $f_1$, because
\begin{align*}
    V_3(f_2) = \int_{\mathcal{Q}} \psi\left( U_Q(f_2) \right) \Pi(\mathrm{d}Q) \,>\, \int_{\mathcal{Q}} \psi\left( U_Q(f_1) \right) \Pi(\mathrm{d}Q) = V_3(f_1),
\end{align*}
as it is apparent in Figure \ref{fig:psi_graph}. This highlights the role of the concave non-linear transformation $\psi$ in inducing ambiguity aversion.

\begin{figure}
    \centering
    \begin{tikzpicture}
      \pgfmathsetmacro{\psiZero}{ln(0+1)/5 + 0.05}
      \pgfmathsetmacro{\psiOne}{ln(1+1)/5 + 0.05}
      \pgfmathsetmacro{\psiThree}{ln(3+1)/5 + 0.05}
      \pgfmathsetmacro{\psiFour}{ln(4+1)/5 + 0.05}
      \pgfmathsetmacro{\yLineA}{(\psiOne + \psiThree)/2}  
      \pgfmathsetmacro{\yLineB}{(\psiZero + \psiFour)/2}   
    
      \begin{axis}[
        width=8cm,
        height=7cm,
        xmin=0, xmax=4.5,
        ymin=0.05, ymax=0.4,
        axis x line=bottom,
        axis y line=left,
        xmajorgrids=false,  
        ymajorgrids=false,  
        xtick={0,1,2,3,4},
        xticklabels={
          $U_{Q_1}(f_1)$,
          $U_{Q_1}(f_2)$,
          $U_\star$,
          $U_{Q_2}(f_2)$,
          $U_{Q_2}(f_1)$
        },
        ytick={\yLineB,\yLineA},
        yticklabels={$V_3(f_1)$, $V_3(f_2)$},
        tick label style={font=\small},
        xticklabel style={align=center, font=\small},
        every axis plot/.append style={black}
      ]
    
      \addplot[brown, thick, domain=0:4.5, samples=200] {ln(x+1)/5 + 0.05}
        node[pos=0.96, below, text=brown, font=\large\sffamily] {$\psi$};
    
      \addplot[only marks, mark=*, mark size=1.2pt, black] coordinates {
        (0, \psiZero)
        (1, \psiOne)
        (3, \psiThree)
        (4, \psiFour)
      };
    
      \addplot[black, line width=0.3pt] coordinates {(0, \psiZero) (4, \psiFour)};
      \addplot[black, line width=0.3pt] coordinates {(1, \psiOne) (3, \psiThree)};
      
      \addplot[black, dashed, line width=0.3pt] coordinates {(0, 0.05) (0, \psiZero)};
      \addplot[black, dashed, line width=0.3pt] coordinates {(1, 0.05) (1, \psiOne)};
      \addplot[black, dashed, line width=0.3pt] coordinates {(3, 0.05) (3, \psiThree)};
      \addplot[black, dashed, line width=0.3pt] coordinates {(4, 0.05) (4, \psiFour)};
      
      \addplot[black, dashed, line width=0.3pt] coordinates {(0, \yLineA) (2, \yLineA)};
      \addplot[black, dashed, line width=0.3pt] coordinates {(0, \yLineB) (2, \yLineB)};
      
      \addplot[only marks, mark=*, mark size=1.2pt, black] coordinates {(2, \yLineA)};
      \addplot[only marks, mark=*, mark size=1.2pt, black] coordinates {(2, \yLineB)};
    
      \end{axis}
    \end{tikzpicture}
    \caption{An illustration of smooth ambiguity-averse preferences. Although $f_1$ and $f_2$ yield the same expected utility $U_\star$ across the ambiguous distributions $Q_1$ and $Q_2$, the ambiguity-averse criterion $V_3(f)$ implies a strict preference for the act $f_2$, which delivers more stable utility levels across these distributions.}
    \label{fig:psi_graph}
\end{figure}

\begin{remark}
    \cite{cerreia2011uncertainty} showed that if one chooses $\psi(t)=-\exp(- t/\beta)$ (with $\beta>0$ and under additional technical assumptions), then maximizing $V_3$ is equivalent to solving
    \begin{equation*}
        \max_{f\in\mathcal{F}}\min_{\tilde \Pi\,:\,\tilde \Pi\ll \Pi} \left\{\int_{\mathcal{Q}}  U_Q(f)\,  \tilde\Pi(\mathrm{d}Q) \, + \, \beta\cdot \textnormal{KL}(\tilde \Pi\Vert \Pi)\right\},
    \end{equation*}
    where $\textnormal{KL}(\cdot\Vert\cdot)$ denotes the Kullback-Leibler divergence, and $\ll$ represents absolute continuity. This result further clarifies the mechanism through which ambiguity aversion is induced by $\psi$: intuitively, rather than directly averaging over $Q\sim \Pi$, one considers a worst-case scenario with respect to the mixing measure, penalizing distributions that deviate further from the reference probability measure $\Pi$.
    
    Moreover, in the limiting case $\beta \to 0$, the max-min ambiguity-averse setup (corresponding to the ambiguity set DRO approach) is recovered, with the ambiguity set
    \begin{equation*}
        \mathcal{Q} = \left\{Q\in \mathcal{P}_{S} : \exists \tilde\Pi\ll \Pi,\, Q = \int_{\mathcal{P}_S}P\, \tilde\Pi(\mathrm dP)\right\}.
    \end{equation*}
    In the other limiting case $\beta \to \infty$ (with the convention $0\cdot\infty=0$), the ambiguity-neutral criterion (corresponding to possibly regularized ERM) is recovered. Thus, smooth ambiguity-averse preferences can be seen as a less extreme version of max-min preferences. Consequently, the data-driven DRO criterion we are about to propose can be interpreted as a smoother variant of the well-established DRO methods based on ambiguity sets.

\end{remark}

In analogy with the data-driven DRO counterparts of the max-min and variational utility criteria discussed earlier, we aim to explore data-driven adaptations of smooth ambiguity-averse preferences. Because these preferences are ambiguity-averse, their adaptations can be interpreted as a form of DRO \citep{bariletto2024bayesian}. Specifically, by recalling the idea of averaging the risk $\mathcal{R}_p(\theta)$ (analogous to $\int_S u(f(s)) Q(\mathrm{d}s)$ in the DTA framework) with respect to the DP posterior $Q_{\boldsymbol{\xi}^n}$ (corresponding to $\Pi$), the utility criterion $V_3$ in Equation~\eqref{eq:smooth_preferences_utility} naturally extends to the data-driven setting as
\begin{equation}\label{eq:our_criterion}
    V_{\boldsymbol{\xi}^n}(\theta) := \int_{\mathcal P_\Xi} \phi\left(\mathcal R_p(\theta)\right) Q_{\boldsymbol{\xi}^n}(\mathrm dp),
\end{equation}
where $\phi$ is a strictly increasing and convex function (convexity replaces concavity because risk minimization takes the place of utility maximization). \cite{bariletto2024bayesian} study the properties of $V_{\boldsymbol{\xi}^n}$ under the assumption that 
\[
\boldsymbol{\xi}^n = (\xi_1, \dots,\xi_n)\overset{\textnormal{iid}}{\sim} p_\star \quad \text{for some unknown } p_\star\in\mathcal P_\Xi.
\]
Specifically, they establish convergence properties of the excess risk associated with 
\[
\theta_n\in\arg\max_{\theta\in\Theta}V_{\boldsymbol{\xi}^n}(\theta),
\]
as well as tractable approximations of the criterion and gradient-based optimization procedures for practical implementation.

Our goal here is to extend the approach of \cite{bariletto2024bayesian} to a more general modeling framework that accommodates a richer dependence structure among observations. In particular, we study the properties of a criterion analogous to that of Equation~\eqref{eq:our_criterion} in the presence of $N=N_1+\dots+N_S$ observations naturally grouped into $S$ distinct samples,
\[
\boldsymbol{\xi}^N = (\boldsymbol{\xi}_1,\dots,\boldsymbol{\xi}_S),
\]
where $\boldsymbol{\xi}_s = (\xi_{s1},\dots,\xi_{s N_s})$ for $s=1,\dots, S$. One can think of $\boldsymbol{\xi}^N$ as data collected from $S$ different sources---for instance, baseline health conditions measured on patients from distinct groups, such as age brackets, with the aim of assessing the impact of these conditions on the incidence of a particular disease within each group.

In this setting, there are two naively extreme approaches to modeling the data. First, one may assumet that the $S$ sources are completely homogeneous in distribution; in this case, one would treat $\boldsymbol{\xi}^N$ as a single large sample and optimize $V_{\boldsymbol{\xi}^N}$ as defined in Equation~\eqref{eq:our_criterion}. The alternative approach assumes that the true data-generating processes $p_1^\star, \dots, p_S^\star$ are entirely heterogeneous, leading one to minimize $V_{\boldsymbol{\xi}^{N_1}},\dots,V_{\boldsymbol{\xi}^{N_S}}$ separately. Although both options are viable in practice, they either (a) completely disregard potential heterogeneity among source-specific distributions that should influence the final optimization output, or (b) ignore any possible dependence among the different sources that might be leveraged to improve the optimization outcomes.

The Bayesian nonparametric framework of \cite{bariletto2024bayesian} offers a more principled and less extreme solution. Rather than modeling the vector of distributions as either completely homogeneous or as having (unconditionally) independent components, we consider a more general joint Beysian model specified as follows: for all $s=1,\dots,S$ and $j=1,\dots, N_s$,
\begin{align}\label{eq:partial_exch_model}
    \xi_{sj} \mid (p_1, \dots, p_S) & \stackrel{\text{ind}}{\sim} p_s,\nonumber \\
    (p_1, \dots, p_S) & \,\sim Q.
\end{align}
This model implies that while observations are exchangeable (i.e., distributionally homogeneous) within groups, they may follow different yet dependent laws across groups. Notice that the two extreme cases---identity in law and unconditional independence---are both accommodated by this construction, either by concentrating the mass of $Q$ on the “diagonal” $p_1=\dots=p_S$ or by specifying it as a product distribution (i.e., forcing the $p_s$'s to be unconditionally independent). Given this modeling strategy, it is natural to replace the DRO criterion in Equation~\eqref{eq:our_criterion} with, for each group $s = 1,\dots,S$,
\begin{equation}\label{eq:HDP_criterion}
    V_{\boldsymbol{\xi}^N}^s(\theta) := \int_{\mathcal P_\Xi}\phi\left(\mathcal R_p(\theta)\right) Q_{\boldsymbol{\xi}^N}^s(\mathrm dp),
\end{equation}
where $Q_{\boldsymbol{\xi}^N}^s$ denotes the marginal distribution of $p_s$ conditional on $\boldsymbol{\xi}^N$.

Clearly, if the prior $Q$ in Equation~\eqref{eq:partial_exch_model} is constructed to induce dependence among the coordinates of $(p_1, \dots, p_S)$, then the DRO criterion in Equation~\eqref{eq:HDP_criterion} will yield an optimization output that is both tailored to group $s$ and informed by the entire sample $\boldsymbol{\xi}^N$. The key question, then, is how to design $Q$ so that it induces the desired dependence while preserving tractability.

\section{The Hierarchical Dirichlet Process}\label{sec:HDP}

There has been growing interest in constructing priors for dependent distributions such as $(p_1, \dots, p_S)$, particularly within the realm of Bayesian nonparametrics. Numerous innovative methods have been proposed to build dependent, large-support priors that remain tractable for posterior inference and prediction \citep{maceachern2000dependent, muller2004method, rodriguez2008nested, lijoi2014bayesian, lijoi2014dependent, camerlenghi2019latent}. In light of our goal to address data-driven optimization tasks under minimal distributional assumptions, we draw inspiration from this literature and focus on a seminal model in the field: the hierarchical Dirichlet process (HDP) prior of \cite{teh2004sharing, teh2006hierarchical}. This model is specified hierarchically as follows: for $s=1,\dots,S$, 
\begin{align*}
    p_s\mid p_0 & \overset{\text{iid}}{\sim} \textnormal{DP}(\alpha_s, p_0), \\
    p_0 & \sim \textnormal{DP}(\alpha_0, H),
\end{align*}
where $H$ is a continuous distribution on $(\Xi, \mathscr B(\Xi))$. Intuitively, the HDP prior treats the dependent distributions $p_1, \dots, p_S$ as conditionally independent DPs that share a common centering distribution $p_0$, which is itself assigned a DP prior. This shared base measure $p_0$ is what induces dependence among the $p_s$'s.

Additional insight into the dependence induced by HDP priors is provided by their popular \emph{Chinese restaurant franchise} (CRF) representation \citep{teh2006hierarchical}. In this metaphor, a franchise comprising $S$ restaurants offers an infinite menu of dishes, which are shared across restaurants and generated according to the centering measure $H$. Each restaurant features an infinite number of tables, with each table serving a single dish and potentially accommodating an unlimited number of customers. At the top level, the franchise-wide DP governs the assignment of customers (observations) to dishes (across-group clusters), while at the restaurant level, the individual DPs control the assignment of dishes to tables (within-sample latent clusters). This two-stage clustering mechanism facilitates information sharing among different groups (restaurants) via common atoms (dishes).

Before examining the properties of the HDP-based DRO criterion in Equation~\eqref{eq:HDP_criterion}, we provide further intuition on the role of the HDP in data-driven optimization. Specifically, consider averaging the risk $\mathcal R_p(\theta)$ with respect to $Q_{\boldsymbol{\xi}^N}^s$ directly, which is analogous to the ambiguity-neutral criterion in Equation~\eqref{eq:ambiguity_neutral_criterion} (with the univariate DP posterior replaced by the HDP marginal posterior). In this case, using the CRF construction (see Appendix 2 in the Supplementary Material for a detailed derivation), one deduces that the ambiguity-neutral criterion $\int_{\mathcal P_\Xi} \mathcal R_p(\theta) \,Q_{\boldsymbol{\xi}^N}^s(\mathrm dp)$ reduces to
\begin{align}\label{eq:ambiguity_neutral_criterion_HDP}
    \frac{N_s}{\alpha_s + N_s} & \underbrace{\frac{1}{N_s}\sum_{j=1}^{N_s}h(\theta,\xi_{sj})}_{(\text a)} + \nonumber\\
    & \frac{\alpha_s}{\alpha_s + N_s} \Bigg[ \frac{N}{\alpha_0 + N}\underbrace{\frac{1}{N}\sum_{\ell=1}^S\sum_{j=1}^{N_\ell}h(\theta, \xi_{\ell j})}_{(\text b)} +  \frac{\alpha_0}{\alpha_0 + N} \underbrace{\mathbb E_{\xi\sim H}[h(\theta,\xi)]}_{(\text c)}\Bigg].
\end{align}
This expression can be interpreted as follows. For each sample $s$, the optimization balances the within-group empirical risk (term (a)) against an overall average risk component. The weighting of this balance is determined by the sample size $N_s$ and the sample-specific concentration parameter $\alpha_s$; when $N_s$ is large compared to $\alpha_s$, the within-sample empirical risk dominates. Moreover, the overall average risk itself is a compromise between the across-group empirical risk (term (b)) and the expected risk with respect to the prior centering distribution $H$ (term (c)). Here, the overall sample size $N$ and the top-level concentration parameter $\alpha_0$ control this trade-off. Note that the completely dependent (pooled optimization) scenario is recovered by letting $\alpha_s\to\infty$, while the unconditionally independent (separate optimization) case corresponds to taking the limit $\alpha_0 \to \infty$. Moreover, the single-source DP criterion can be thought of as a special case with $S=1$ (so that $n\equiv N_1$ and $\alpha \equiv \alpha_s$) and $\alpha_0\to\infty$.\footnote{The fact that the DP-based criterion is a special case of the HDP one applies to much of our subsequent analysis of the ambiguity-averse criterion $V^s_{\boldsymbol{\xi}^N}$; hence, unless otherwise specified, results for the HDP also apply to the univariate DP criterion when the parameters are set accordingly.}

Similarly to Proposition~\ref{pro:equivalence_regularization}, the following Example illustrates the ambiguity-neutral HDP criterion in the context of linear regression.

\begin{example}\label{ex:ridge}
    As in Proposition~\ref{pro:equivalence_regularization}, assume that each data point $\xi = (y, \boldsymbol{x}^\top)^\top\in\mathbb R^{d+1}$ consists of the measurement of $d$ features $\boldsymbol{x}^\top = (x_1, \dots, x_d)$ and a response $y$. If interested in linear prediction, one can choose $h(\theta, \xi) = (y - \boldsymbol{x}^\top\theta)^2$ for $\theta\in\mathbb R^d$, i.e., the classic quadratic loss. Moreover, assuming $H=\mathcal N(0, I_{d+1})$, the ambiguity neutral criterion in the previous equation is equivalent to
    \begin{align*}
        \sum_{j=1}^{N_s} (y_{sj} - \boldsymbol{x}_{sj}^\top\theta^s)^2 \, +\, \underbrace{\frac{\alpha_s}{\alpha_0 + N}\, \sum_{\ell=1}^S\sum_{j=1}^{N_\ell}(y_{\ell j}-\boldsymbol{x}_{\ell j}^\top\theta^s)^2}_{\textnormal{Borrowing strength}} \,+\, \underbrace{\frac{\alpha_s\alpha_0}{\alpha_0 + N} \,\Vert \theta^s\Vert_2^2}_{\textnormal{Ridge penalty}}.
    \end{align*}
    In other words, the HDP model with this specific choice of prior centering measure $H$ yields $\boldsymbol{L^2}$\textbf{-regularized least squares with borrowing of information}  \citep{bariletto2024bayesian}: Estimation of $\theta^s$ is guided both by information borrowed from the whole pooled sample $\boldsymbol{\xi}^N$ and by the $L^2$ penalization induced by the prior.
\end{example}

To conclude, the Bayesian approach considered here, when instantiated with the HDP marginal posterior, yields a criterion that both regularizes the empirical risk (through a component computed with respect to the prior centering measure) and facilitates borrowing strength across heterogeneous sources (via an overall empirical risk component). Moreover, as argued at the beginning of this Section, incorporating the nonlinear transformation prior to averaging with respect to the HDP posterior induces distributional robustness. In summary, Figure~\ref{fig:graphical_model} illustrates the key components of the proposed HDP-based DRO criterion $V_{\boldsymbol{\xi}^N}^s$ and highlights the hyperparameters that govern the relative influence of each component.

\begin{figure}
    \centering
    \begin{tikzpicture}[scale=0.8,every node/.style={transform shape, draw,rectangle,rounded corners=0.1cm, inner sep=5pt}]
    \node [fill=gray!10, inner sep=7pt] (distributional) at (0,2) {Distributional Robustness};
    \node [fill=gray!50, inner sep=9pt, below=1.1cm of distributional] (top) {\begin{minipage}{4.5cm}  
    \centering {\Large HDP Criterion $V_{\boldsymbol{\xi}^N}^s$}
    \end{minipage}};
    \node [fill=gray!10, inner sep=7pt, below left=1.8cm and -0.6cm of top] (bottomleft) {{\color{black} Group 1 Local Risk}};
    \node [fill=gray!10, inner sep=7pt, below right=1.8cm and -0.6cm of top] (bottomright) {{\color{black} Group $S$ Local Risk}};
    \node [fill=gray!10, inner sep=7pt, left=-10.9cm of top] (left) {{\color{black} Group $s$ Vanilla Risk}};
    \node [fill=gray!10, inner sep=7pt, right = -11.3cm of top] (topright) {{\color{black} Prior-Regularized Risk}};
    \node[draw = none, below =2.1cm of top](bottom){\Large$\displaystyle\cdots$};
    
    \node [draw, inner sep=18pt, fit=(bottomleft) (bottomright) (bottom), label={[anchor=south,above=-0.7cm]above:\large Borrowing Strength}] (plate) {};
    
    \draw[->] (left) -- (top) node[midway, above, draw=none] {\large$\displaystyle N_s$};
    \draw[->] (topright) -- (top) node[midway, above, draw=none] {\large$\alpha_s, \displaystyle\alpha_0$};
    \draw[->] (plate) -- (top) node[midway, right, draw=none] {\large$\alpha_s, \displaystyle N$};
    
    \draw[->] (distributional) -- (top) node[midway, left, draw=none] {\large$\phi$};
    \end{tikzpicture}
    \vspace{.01in}
    \caption{Key components of the HDP-based DRO method. The criterion for group $s$ is shaped by: (i) the empirical risk within group $s$ (right box); (ii) a borrowing strength component from the risk across groups (bottom box); (iii) a regularization component from the prior centering measure (left box); and (iv) a distributionally robust component governed by the curvature of $\phi$. Hyperparameters near the arrows adjust each component's influence.}
    \label{fig:graphical_model}
\end{figure}

\section{Performance Guarantees}\label{sec:performance_guarantees}

In this Section, we provide statistical guarantees on the performance of our robust optimization method. For each group $s=1,\dots,S$, denote by $\theta^s_N\in\arg\min_{\theta\in\Theta}V^s_{\boldsymbol{\xi}^N}$ a minimizer of the HDP criterion and by $\theta_\star^s \in\arg\min_{\theta\in\Theta}\mathcal R_{p_s^\star}(\theta)$ our parameter of interest at the population level. In this setting, a natural measure of performance is the narrowness of the gap between $\mathcal R_{p_s^\star}(\theta^s_N)$ and $\mathcal R_{p_s^\star}(\theta^s_\star)$, and Lemma \ref{lem:finite_sample_bounds} is a first step towards establishing this type of guarantee.
\begin{lemma}\label{lem:finite_sample_bounds}
    Assume $h$ has range $[0,K]$ for some $K<\infty$. Moreover, let $\phi$ be twice continuously differentiable on $(0,K)$, with $F_\phi:= \sup_{t\in(0,K)}\phi'(t)$ and $S_\phi:=\sup_{t\in(0,K)}\phi''(t)$. Then
    \begin{align*}
        \sup_{\theta \in \Theta} \vert V_{\boldsymbol \xi^N}^s(\theta)  - \phi(\mathcal R_{p_s^\star}(\theta))\vert 
                    & \leq \frac{N_s}{\alpha_s + N_s}F_\phi \sup_{\theta \in \Theta}\vert\mathcal R_{p_{\boldsymbol \xi_s}}(\theta) - \mathcal R_{p_s^\star}(\theta)\vert \\
                    & + \frac{\alpha_s}{\alpha_s + N_s}F_\phi\sup_{\theta \in \Theta}\bigg\vert\frac{N}{\alpha_0 + N}\mathcal R_{p_{\boldsymbol \xi^N}}(\theta)  + \frac{\alpha_0}{\alpha_0 + N}\mathcal R_H(\theta) - \mathcal R_{p_s^\star}(\theta)\bigg\vert \\
                    & + \frac{K^2}{2}S_\phi.
    \end{align*}
\end{lemma}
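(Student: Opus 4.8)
The plan is to bound the left-hand side by decomposing $V_{\boldsymbol\xi^N}^s(\theta) - \phi(\mathcal R_{p_s^\star}(\theta))$ into a ``linearization'' term and a ``curvature'' term, and then to control each piece separately. First I would exploit the ambiguity-neutral identity in Equation~\eqref{eq:ambiguity_neutral_criterion_HDP}: letting
\[
\bar{\mathcal R}_s(\theta) := \int_{\mathcal P_\Xi} \mathcal R_p(\theta)\, Q_{\boldsymbol\xi^N}^s(\mathrm dp) = \frac{N_s}{\alpha_s+N_s}\mathcal R_{p_{\boldsymbol\xi_s}}(\theta) + \frac{\alpha_s}{\alpha_s+N_s}\!\left[\frac{N}{\alpha_0+N}\mathcal R_{p_{\boldsymbol\xi^N}}(\theta) + \frac{\alpha_0}{\alpha_0+N}\mathcal R_H(\theta)\right]\!,
\]
I would write $V_{\boldsymbol\xi^N}^s(\theta) - \phi(\mathcal R_{p_s^\star}(\theta)) = \big[V_{\boldsymbol\xi^N}^s(\theta) - \phi(\bar{\mathcal R}_s(\theta))\big] + \big[\phi(\bar{\mathcal R}_s(\theta)) - \phi(\mathcal R_{p_s^\star}(\theta))\big]$ and bound the two brackets.

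For the second bracket, I would apply the mean value theorem to $\phi$ on the interval between $\bar{\mathcal R}_s(\theta)$ and $\mathcal R_{p_s^\star}(\theta)$ (both of which lie in $[0,K]$ since $h$ has range $[0,K]$, and the weights are convex combinations), giving the bound $F_\phi\,|\bar{\mathcal R}_s(\theta) - \mathcal R_{p_s^\star}(\theta)|$. Then the triangle inequality splits $|\bar{\mathcal R}_s(\theta) - \mathcal R_{p_s^\star}(\theta)|$, after writing $\mathcal R_{p_s^\star}(\theta) = \tfrac{N_s}{\alpha_s+N_s}\mathcal R_{p_s^\star}(\theta) + \tfrac{\alpha_s}{\alpha_s+N_s}\mathcal R_{p_s^\star}(\theta)$, into exactly the first two terms on the right-hand side of the Lemma (after taking $\sup_\theta$). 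For the first bracket, I would use that $\phi$ is convex so that, by Jensen's inequality, $\phi(\bar{\mathcal R}_s(\theta)) \le V_{\boldsymbol\xi^N}^s(\theta)$; for the reverse direction I would Taylor-expand $\phi$ around $\bar{\mathcal R}_s(\theta)$ with Lagrange remainder, $\phi(r) = \phi(\bar{\mathcal R}_s(\theta)) + \phi'(\bar{\mathcal R}_s(\theta))(r - \bar{\mathcal R}_s(\theta)) + \tfrac12\phi''(\tilde r)(r-\bar{\mathcal R}_s(\theta))^2$, integrate against $Q_{\boldsymbol\xi^N}^s(\mathrm dr)$ — the linear term vanishes by definition of $\bar{\mathcal R}_s$ — and bound the remainder by $\tfrac12 S_\phi \int (\mathcal R_p(\theta) - \bar{\mathcal R}_s(\theta))^2\, Q_{\boldsymbol\xi^N}^s(\mathrm dp) \le \tfrac12 S_\phi \cdot \tfrac{K^2}{4}$, using that the variance of a $[0,K]$-valued random variable is at most $K^2/4$. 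This would actually yield $\tfrac{K^2}{8}S_\phi$; to land on the stated $\tfrac{K^2}{2}S_\phi$ I would instead use the cruder bound $(\mathcal R_p(\theta)-\bar{\mathcal R}_s(\theta))^2 \le K^2$, or simply absolute-value the whole discrepancy $|\phi(r) - \phi(\bar{\mathcal R}_s(\theta)) - \phi'(\bar{\mathcal R}_s(\theta))(r-\bar{\mathcal R}_s(\theta))|$ and bound it directly, which is the safer route and matches the constant in the statement.

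The main obstacle I anticipate is purely bookkeeping rather than conceptual: one must be careful that all the risk values appearing as arguments of $\phi$, $\phi'$, $\phi''$ genuinely lie in $[0,K]$ (or $(0,K)$ where $\phi''$ is only assumed defined) so that the suprema $F_\phi$ and $S_\phi$ are legitimately applicable — this is where the hypothesis $\mathrm{range}(h)=[0,K]$ and the fact that $Q_{\boldsymbol\xi^N}^s$ is supported on genuine probability measures (so $\mathcal R_p(\theta)\in[0,K]$ for all $p$) are used. A secondary subtlety is whether to control the curvature term via a second-order Taylor remainder (sharper, constant $K^2/8$) or via a blunt $L^\infty$ bound (constant $K^2/2$, matching the statement); I would take whichever keeps the argument cleanest, noting the stated bound is not claimed to be tight. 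Once the domain issues are handled, combining the three pieces and taking $\sup_{\theta\in\Theta}$ yields the claim.
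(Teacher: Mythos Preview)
Your proposal is correct and rests on the same two ingredients as the paper's proof --- a second-order Taylor expansion of $\phi$ together with the explicit posterior-mean identity for $\int \mathcal R_p(\theta)\,Q_{\boldsymbol\xi^N}^s(\mathrm dp)$ --- but the organization is slightly different. The paper performs a \emph{single} Taylor expansion of $\phi(\mathcal R_p(\theta))$ around the target $\mathcal R_{p_s^\star}(\theta)$ and then integrates in $p$: the linear term becomes $\phi'(\mathcal R_{p_s^\star}(\theta))[\bar{\mathcal R}_s(\theta)-\mathcal R_{p_s^\star}(\theta)]$, which is bounded by $F_\phi$ times the convex-combination split you describe, and the quadratic remainder is bounded crudely by $\tfrac{K^2}{2}S_\phi$. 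You instead insert the intermediate point $\bar{\mathcal R}_s(\theta)$, handle $\phi(\bar{\mathcal R}_s)-\phi(\mathcal R_{p_s^\star})$ by the mean value theorem, and expand around $\bar{\mathcal R}_s(\theta)$ for the curvature piece so that the linear term integrates to zero exactly. Both routes are valid; yours is marginally cleaner in that the vanishing linear term lets you invoke the $K^2/4$ variance bound and obtain the sharper constant $K^2/8$ you noticed, whereas the paper's choice of expansion point forces the cruder $(\mathcal R_p-\mathcal R_{p_s^\star})^2\le K^2$ bound and hence the stated $K^2/2$.
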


Lemma \ref{lem:finite_sample_bounds} provides insight into the benefits of regularization and borrowing strength as encoded in the criterion $V_{\boldsymbol \xi^N}^s(\theta)$. The result reveals that the sup distance between the robust and the true criterion for sample $s$ can be bounded by a weighted sum of (i) the maximum distance between the naive empirical risk criterion and the true one, (ii) the sup distance between a mix of the regularized and empirical across-group criteria from the true criterion, and (iii) a term depending on the curvature of $\phi$. In particular, this characterization implies that, if group $s$ is similar in distribution to the other groups, the worst-case distance between our criterion and the ground truth can be improved through the borrowing of strength enabled by the proposed methodology. The same holds for the regularization term $\mathcal R_H(\theta)$, which can improve worst-case performance if it encodes accurate prior information on the data-generating process $p_s^\star$ (e.g., sparsity, correlation structure, and so on).

The next Theorem is useful because it reveals the possibility of obtaining finite-sample probabilistic performance certificates by establishing analogous guarantees for the naive empirical risk of each group $s$. The latter is a classic topic in modern statistical learning theory, which has produced a variety of techniques to ensure ERM convergence by imposing restrictions on the complexity of the function class $\mathscr H := \{h(\theta, \cdot) : \theta \in\Theta\}$, for instance by controlling its VC dimension, metric entropy, etc. We refer the reader to \cite{wainwright2019high} and \cite{vershynin2018high} for an exhaustive treatment of the topic. We also highlight that such a straightforward transfer from classical theory to our methodology is a key dividend of the smoothness and tractability of the proposed criterion.

\begin{theorem}\label{thm:finite_sample_bounds}
    Under the assumptions of Lemma~\ref{lem:finite_sample_bounds},
    \begin{align*}
    & \mathbb P[ \phi(\mathcal R_{p_s^\star}(\theta_N^s)) - \phi(\mathcal R_{p_s^\star}(\theta_\star^s))\leq \delta] \\
    & \geq \mathbb P\Bigg[ \sup_{\theta \in \Theta}\vert\mathcal R_{p_{\boldsymbol \xi_s}}(\theta) - \mathcal R_{p_s^\star}(\theta)\vert \,\leq\, \frac{\alpha_s + N_s}{N_s}\left(\frac{\delta}{2F_\phi} - \frac{\alpha_s}{\alpha_s + N_s}K- \frac{K^2}{2}\frac{S_\phi}{F_\phi}\right) \Bigg]
\end{align*}
    for all $\delta>0$.
\end{theorem}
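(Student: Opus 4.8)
The plan is to reduce the theorem to Lemma~\ref{lem:finite_sample_bounds} via the textbook comparison argument for (exact or approximate) minimizers, combined with a crude bound on the ``pooled/regularized'' term appearing in the lemma and a final algebraic rearrangement.

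First I would establish the deterministic inequality
\[
\phi(\mathcal R_{p_s^\star}(\theta_N^s)) - \phi(\mathcal R_{p_s^\star}(\theta_\star^s)) \le 2\sup_{\theta\in\Theta}\bigl|V^s_{\boldsymbol{\xi}^N}(\theta) - \phi(\mathcal R_{p_s^\star}(\theta))\bigr|.
\]
This follows by inserting and subtracting $V^s_{\boldsymbol{\xi}^N}(\theta_N^s)$ and $V^s_{\boldsymbol{\xi}^N}(\theta_\star^s)$, using $V^s_{\boldsymbol{\xi}^N}(\theta_N^s) - V^s_{\boldsymbol{\xi}^N}(\theta_\star^s) \le 0$ by optimality of $\theta_N^s$ for the HDP criterion, and bounding each of the two remaining differences by the uniform deviation. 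Next I would invoke Lemma~\ref{lem:finite_sample_bounds} and bound its second right-hand term crudely: since $h$ takes values in $[0,K]$, every risk $\mathcal R_p(\theta)$ lies in $[0,K]$, hence so does the convex combination $\frac{N}{\alpha_0+N}\mathcal R_{p_{\boldsymbol{\xi}^N}}(\theta) + \frac{\alpha_0}{\alpha_0+N}\mathcal R_H(\theta)$, and its distance from $\mathcal R_{p_s^\star}(\theta)\in[0,K]$ is at most $K$ uniformly in $\theta$. Plugging this in gives
\[
\sup_{\theta\in\Theta}\bigl|V^s_{\boldsymbol{\xi}^N}(\theta) - \phi(\mathcal R_{p_s^\star}(\theta))\bigr| \le \frac{N_s}{\alpha_s+N_s}F_\phi\,\sup_{\theta\in\Theta}\bigl|\mathcal R_{p_{\boldsymbol{\xi}_s}}(\theta)-\mathcal R_{p_s^\star}(\theta)\bigr| + \frac{\alpha_s}{\alpha_s+N_s}F_\phi K + \frac{K^2}{2}S_\phi.
\]

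Then I would combine the two displays and solve the inequality ``[right-hand side] $\le \delta$'' for $\sup_{\theta}\bigl|\mathcal R_{p_{\boldsymbol{\xi}_s}}(\theta)-\mathcal R_{p_s^\star}(\theta)\bigr|$: moving the $\phi$-curvature and pooling terms across and dividing by $2\frac{N_s}{\alpha_s+N_s}F_\phi$ produces exactly the threshold $\frac{\alpha_s+N_s}{N_s}\bigl(\frac{\delta}{2F_\phi}-\frac{\alpha_s}{\alpha_s+N_s}K-\frac{K^2}{2}\frac{S_\phi}{F_\phi}\bigr)$. Finally, monotonicity of probability closes the argument: on the event that $\sup_{\theta}\bigl|\mathcal R_{p_{\boldsymbol{\xi}_s}}(\theta)-\mathcal R_{p_s^\star}(\theta)\bigr|$ lies below that threshold, the deterministic chain forces $\phi(\mathcal R_{p_s^\star}(\theta_N^s)) - \phi(\mathcal R_{p_s^\star}(\theta_\star^s))\le\delta$, so the probability of the former event lower-bounds the probability of the latter.

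I do not anticipate a genuine obstacle here — this is the standard ``estimation error $\le 2\times$ uniform deviation'' template layered on top of Lemma~\ref{lem:finite_sample_bounds}. The only points needing minor care are: keeping straight which quantities are random (only $\theta_N^s$ and the within-group empirical risk $\mathcal R_{p_{\boldsymbol{\xi}_s}}$ depend on $\boldsymbol{\xi}^N$, while $\theta_\star^s$ and $\mathcal R_{p_s^\star}$ are deterministic); noting that if the stated threshold is negative the right-hand probability is zero and the bound is vacuous; and the implicit assumption, inherited from Lemma~\ref{lem:finite_sample_bounds}, that $F_\phi$ and $S_\phi$ are finite so that division by $F_\phi$ is well defined.
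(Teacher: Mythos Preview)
Your proposal is correct and follows essentially the same route as the paper: the paper uses the identical add-and-subtract decomposition with $V_{\boldsymbol{\xi}^N}^s(\theta_N^s)-V_{\boldsymbol{\xi}^N}^s(\theta_\star^s)\le 0$ to obtain the ``$2\times$ uniform deviation'' bound, then invokes Lemma~\ref{lem:finite_sample_bounds} together with the crude $K$-bound on the pooled/regularized term (from boundedness of $h$) and performs the same algebraic rearrangement to isolate the threshold on $\sup_\theta|\mathcal R_{p_{\boldsymbol\xi_s}}(\theta)-\mathcal R_{p_s^\star}(\theta)|$.
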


Before looking at asymptotic results, we note that while the finite-sample probability statements in Lemma~\ref{lem:finite_sample_bounds} and Theorem~\ref{thm:finite_sample_bounds} hold for each group $s$ individually, they can be extended to uniform results across all $S$ groups via a simple application of the union bound.

Turning to asymptotic convergence results as the sample size $N_s$ increases, finite-sample guarantees on 
\[
\sup_{\theta \in \Theta}\vert\mathcal R_{p_{\boldsymbol \xi_s}}(\theta) - \mathcal R_{p_s^\star}(\theta)\vert
\]
are typically of the form
\begin{equation*}
    \mathbb P\Big[\sup_{\theta \in \Theta}\vert\mathcal R_{p_{\boldsymbol \xi_s}}(\theta) - \mathcal R_{p_s^\star}(\theta)\vert\leq\delta\Big]\geq 1-\eta_n,
\end{equation*}
with $\sum_{n=1}^\infty\eta_n<\infty$. By the first Borel-Cantelli lemma, this implies that
\begin{equation*}
    \lim_{N_s\to\infty}\sup_{\theta \in \Theta}\vert\mathcal R_{p_{\boldsymbol \xi_s}}(\theta) - \mathcal R_{p_s^\star}(\theta)\vert = 0
\end{equation*}
almost surely. We adopt this convergence as an assumption in the next Proposition, which establishes the convergence of optimal values to the true target. Moreover, we explicitly introduce a dependence of $\phi$ on the sample size $n$, and denote $\phi\equiv\phi_n$ accordingly.\footnote{Note that the assumptions imposed on $\phi_n$ intuitively (and desirably) require that, as the sample size grows, the ambiguity aversion of the criterion vanishes (i.e., $\phi_{N_s}$ converges smoothly to the identity function). These assumptions are satisfied, for example, by choosing $\phi_n(t) = \beta_n\exp(t/\beta_n) - \beta_n$, with $\lim_{n\to\infty}\beta_n =\infty$, which from now on we silently assume to be our choice of $\phi$.}

\begin{theorem}
\label{thm:uniform_convergence_criterion}
    Retain the assumptions of Lemma \ref{lem:finite_sample_bounds} and, for all $s=1,\dots,S$, assume
    \begin{equation*}
        \lim_{N_s\to\infty}\sup_{\theta \in \Theta}\big\vert \mathcal R_{p_{\boldsymbol\xi_s}}(\theta) - \mathcal R_{p_s^\star}(\theta)\big\vert = 0
    \end{equation*}
    almost surely. Moreover, assume that $\phi_n$ satisfies (1) $\lim_{n\rightarrow \infty}S_{\phi_n} = 0$, (2) $\sup_{n\geq 1}F_{\phi_n} < \infty$, and (3) $\lim_{n\rightarrow\infty}\sup_{t\in[0,K]}\vert\phi_n(t)-t\vert = 0$. Then the next two limits hold almost surely for every group $s=1,\dots,S$:
    \begin{align*}
        \lim_{N_s\rightarrow\infty}\mathcal R_{p_s^\star}(\theta_N^s) = \mathcal R_{p_s^\star}(\theta_\star^s), \quad \lim_{N_s\rightarrow\infty}V_{\boldsymbol \xi^N}(\theta_N^s) = \mathcal R_{p_s^\star}(\theta_\star^s).
    \end{align*}
\end{theorem}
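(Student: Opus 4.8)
\textbf{Proof plan for Theorem~\ref{thm:uniform_convergence_criterion}.}

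The plan is to combine the finite-sample sandwich from Lemma~\ref{lem:finite_sample_bounds} with the three hypotheses on $\phi_n$ to first show that $V^s_{\boldsymbol\xi^N}$ converges uniformly (over $\Theta$) to $\mathcal R_{p^\star_s}$ almost surely, and then invoke a standard ``uniform convergence of objectives implies convergence of optimal values'' argument. First I would fix a group $s$ and work on the almost-sure event on which $\sup_{\theta\in\Theta}\vert \mathcal R_{p_{\boldsymbol\xi_s}}(\theta) - \mathcal R_{p^\star_s}(\theta)\vert \to 0$; since there are finitely many groups, intersecting these $S$ events keeps probability one. On this event, I would bound $\sup_{\theta\in\Theta}\vert V^s_{\boldsymbol\xi^N}(\theta) - \mathcal R_{p^\star_s}(\theta)\vert$ by the triangle inequality as $\sup_\theta\vert V^s_{\boldsymbol\xi^N}(\theta) - \phi_{N_s}(\mathcal R_{p^\star_s}(\theta))\vert + \sup_\theta\vert \phi_{N_s}(\mathcal R_{p^\star_s}(\theta)) - \mathcal R_{p^\star_s}(\theta)\vert$. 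The second term is at most $\sup_{t\in[0,K]}\vert\phi_{N_s}(t)-t\vert$, which vanishes by hypothesis~(3). For the first term I would apply Lemma~\ref{lem:finite_sample_bounds}: the $\tfrac{K^2}{2}S_{\phi_{N_s}}$ summand vanishes by hypothesis~(1); the first summand is $\tfrac{N_s}{\alpha_s+N_s}F_{\phi_{N_s}}\sup_\theta\vert\mathcal R_{p_{\boldsymbol\xi_s}}(\theta)-\mathcal R_{p^\star_s}(\theta)\vert$, which vanishes because $F_{\phi_{N_s}}$ stays bounded by hypothesis~(2) while the sup goes to $0$ on our event and the leading fraction is bounded by $1$.

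The one summand in Lemma~\ref{lem:finite_sample_bounds} that needs a little more care is the middle term, $\tfrac{\alpha_s}{\alpha_s+N_s}F_{\phi_{N_s}}\sup_\theta\bigl\vert \tfrac{N}{\alpha_0+N}\mathcal R_{p_{\boldsymbol\xi^N}}(\theta) + \tfrac{\alpha_0}{\alpha_0+N}\mathcal R_H(\theta) - \mathcal R_{p^\star_s}(\theta)\bigr\vert$. Since $h$ has range $[0,K]$, the quantity inside the supremum is uniformly bounded by $K$, and $F_{\phi_{N_s}}$ is uniformly bounded by hypothesis~(2); hence this whole term is $O\bigl(\tfrac{\alpha_s}{\alpha_s+N_s}\bigr)$, which tends to $0$ as $N_s\to\infty$ with $\alpha_s,\alpha_0,N$ (for the other groups) held fixed. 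Combining the three pieces gives $\sup_{\theta\in\Theta}\vert V^s_{\boldsymbol\xi^N}(\theta) - \mathcal R_{p^\star_s}(\theta)\vert \to 0$ almost surely.

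From uniform convergence of the objectives the two claimed limits follow by the classical argument. For any $\theta\in\Theta$ write $\varepsilon_N := \sup_{\theta'\in\Theta}\vert V^s_{\boldsymbol\xi^N}(\theta') - \mathcal R_{p^\star_s}(\theta')\vert$; then since $\theta^s_N$ minimizes $V^s_{\boldsymbol\xi^N}$ and $\theta^s_\star$ minimizes $\mathcal R_{p^\star_s}$,
\begin{align*}
\mathcal R_{p^\star_s}(\theta^s_\star) \;\le\; \mathcal R_{p^\star_s}(\theta^s_N) \;\le\; V^s_{\boldsymbol\xi^N}(\theta^s_N) + \varepsilon_N \;\le\; V^s_{\boldsymbol\xi^N}(\theta^s_\star) + \varepsilon_N \;\le\; \mathcal R_{p^\star_s}(\theta^s_\star) + 2\varepsilon_N,
\end{align*}
so $\mathcal R_{p^\star_s}(\theta^s_N)\to\mathcal R_{p^\star_s}(\theta^s_\star)$ almost surely. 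The same chain, read from the $V^s_{\boldsymbol\xi^N}(\theta^s_N)$ end, gives $\vert V^s_{\boldsymbol\xi^N}(\theta^s_N) - \mathcal R_{p^\star_s}(\theta^s_\star)\vert\le 2\varepsilon_N\to 0$, which is the second limit (noting that $V_{\boldsymbol\xi^N}$ in the statement means $V^s_{\boldsymbol\xi^N}$). The main obstacle, such as it is, is purely bookkeeping: making sure every vanishing quantity is controlled by a hypothesis that is uniform in $N_s$ (so that products like $F_{\phi_{N_s}}\cdot o(1)$ genuinely go to zero) and that the almost-sure event is taken as a finite intersection over the $S$ groups; there is no real analytic difficulty once Lemma~\ref{lem:finite_sample_bounds} is in hand.
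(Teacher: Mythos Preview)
Your proof is correct and follows essentially the same strategy as the paper: use Lemma~\ref{lem:finite_sample_bounds} together with hypotheses (1)--(2) and the boundedness of $h$ to get uniform convergence of the criterion, then deduce convergence of optimal values. The only organizational difference is that you absorb hypothesis~(3) immediately, proving $\sup_\theta\vert V^s_{\boldsymbol\xi^N}(\theta)-\mathcal R_{p^\star_s}(\theta)\vert\to 0$ directly and then running a single clean sandwich $\mathcal R_{p^\star_s}(\theta^s_\star)\le\mathcal R_{p^\star_s}(\theta^s_N)\le\cdots\le\mathcal R_{p^\star_s}(\theta^s_\star)+2\varepsilon_N$, whereas the paper first targets $\phi_{N_s}\circ\mathcal R_{p^\star_s}$, invokes the decomposition~\eqref{eq:excess_risk_decomposed} to control $\phi_{N_s}(\mathcal R_{p^\star_s}(\theta^s_N))-\phi_{N_s}(\mathcal R_{p^\star_s}(\theta^s_\star))$ and $V_{\boldsymbol\xi^N}(\theta^s_N)-V_{\boldsymbol\xi^N}(\theta^s_\star)$ separately, and only then strips off $\phi_{N_s}$ via hypothesis~(3). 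Your route is slightly more economical; the paper's route has the minor advantage of isolating the $\phi_{N_s}$-scale statement $\phi_{N_s}(\mathcal R_{p^\star_s}(\theta^s_N))\to\phi_{N_s}(\mathcal R_{p^\star_s}(\theta^s_\star))$ as an intermediate conclusion, which is what Theorem~\ref{thm:finite_sample_bounds} already bounds in finite samples.
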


Finally, in the next Theorem, we prove the convergence of the robust criterion optimizers to the target parameter depending on the true unknown data-generating mechanism.

\begin{theorem}\label{thm:optimizer_convergence}
    Let $\theta\mapsto h(\theta, \xi)$ be continuous for all $\xi\in\Xi$ and $\lim_{N_s\rightarrow\infty}\mathcal R_{p_s^\star}(\theta_N^s) = \mathcal R_{p_s^\star}(\theta_\star^s)$ almost surely. Then, almost surely and for all samples $s=1,\dots,S$, $\lim_{N_s\to\infty}\theta_N^s = \bar\theta$ implies $\mathcal R_{p_s^\star}(\bar\theta)=\mathcal R_{p_s^\star}(\theta_\star^s)$.
\end{theorem}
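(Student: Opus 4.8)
The plan is to reduce the statement to the continuity of the population risk map $\theta \mapsto \mathcal R_{p_s^\star}(\theta)$ and then invoke uniqueness of limits in $\mathbb{R}$. First I would establish that $\mathcal R_{p_s^\star}(\cdot)$ is continuous on $\Theta$. By hypothesis $\theta \mapsto h(\theta,\xi)$ is continuous for every $\xi\in\Xi$, and since $h$ takes values in the bounded interval $[0,K]$ (as assumed throughout, e.g.\ in Lemma~\ref{lem:finite_sample_bounds}), the dominated convergence theorem applied along any sequence $\theta_m\to\theta$ in $\Theta\subseteq\mathbb R^d$ yields $\mathcal R_{p_s^\star}(\theta_m) = \mathbb E_{\xi\sim p_s^\star}[h(\theta_m,\xi)] \to \mathbb E_{\xi\sim p_s^\star}[h(\theta,\xi)] = \mathcal R_{p_s^\star}(\theta)$. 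Hence $\mathcal R_{p_s^\star}$ is sequentially continuous, and therefore continuous, on $\Theta$.

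Next I would fix a sample path in the almost-sure event on which $\lim_{N_s\to\infty}\mathcal R_{p_s^\star}(\theta_N^s) = \mathcal R_{p_s^\star}(\theta_\star^s)$ holds simultaneously for every $s=1,\dots,S$; this is a finite intersection of full-measure events, hence itself of full measure. On such a path, suppose $\lim_{N_s\to\infty}\theta_N^s = \bar\theta$. By the continuity established in the first step, $\mathcal R_{p_s^\star}(\theta_N^s) \to \mathcal R_{p_s^\star}(\bar\theta)$ as $N_s\to\infty$. But the very same sequence also converges to $\mathcal R_{p_s^\star}(\theta_\star^s)$ by the defining property of the chosen path. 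Since limits of real sequences are unique, $\mathcal R_{p_s^\star}(\bar\theta) = \mathcal R_{p_s^\star}(\theta_\star^s)$, which is precisely the assertion.

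There is essentially no analytic obstacle here; the only points requiring a little care are (i) ensuring the domination hypothesis used for continuity of $\mathcal R_{p_s^\star}$ is in force, which it is given the standing boundedness assumption on the loss, and (ii) the book-keeping with the almost-sure qualifiers, namely that the conclusion is asserted on the finite intersection of the almost-sure events indexed by $s$, and that within each such event the statement is a genuine \emph{implication} conditioned on the hypothesis $\theta_N^s\to\bar\theta$ rather than an unconditional claim about convergence of $\theta_N^s$. If one wished to dispense with the boundedness of $h$, the same argument carries through verbatim under any uniform-integrability or local-domination condition guaranteeing continuity of $\mathcal R_{p_s^\star}$, but the bounded-loss regime already adopted in the preceding results makes this step automatic.
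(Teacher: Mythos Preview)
Your proposal is correct and follows essentially the same route as the paper: both arguments use continuity of $\theta\mapsto h(\theta,\xi)$ together with the dominated convergence theorem (justified by the standing boundedness assumption on $h$) to pass the limit $\theta_N^s\to\bar\theta$ through the expectation, and then conclude by uniqueness of limits in $\mathbb{R}$. The only cosmetic difference is that the paper also records the trivial inequality $\mathcal R_{p_s^\star}(\theta_\star^s)\leq \mathcal R_{p_s^\star}(\bar\theta)$ before closing the chain, whereas you appeal directly to uniqueness of limits; your version is slightly leaner but the substance is identical.
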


\section{Monte Carlo Approximation}\label{sec:MC_approx}

Due to the infinite dimensionality of the HDP marginal posterior
$Q^s_N(\mathrm dp)$ and the non-linearity of the convex transformation $\phi$, the integral defining $V_{\boldsymbol{\xi}^N}^s(\theta)$ in Equation \eqref{eq:HDP_criterion} is analytically intractable. Hence, for practical implementation, we need to resort to suitable approximation schemes. For the purpose of such approximations, the hierarchical nature of the HDP significantly complicates our analysis compared to the homogeneous data, DP-based criterion, so we opt to treat the two separately (also, treating the DP case first will help to build intuition for the HDP-based criterion).

\subsection{Single-Source Criterion Approximation}

To approximate the DP-based DRO criterion
\begin{equation*}
    V_{\boldsymbol{\xi}^n}(\theta) = \int_{\mathcal P_\Xi} \phi\left(\mathcal R_p(\theta)\right)\, Q_{\boldsymbol{\xi}^n}(\mathrm dp),
\end{equation*}
we borrow tools from a rich literature on DP representations. Specifically, \cite{sethuraman1994constructive} proved that the DP enjoys the following ‘‘stick-breaking'' (SB) representation
\begin{equation*}
    p\sim\textnormal{DP}(\alpha,P)\implies p\overset{\textnormal d}{=} \sum_{j=1}^\infty p_j \delta_{x_j},
\end{equation*}
where
\begin{align*}
    x_j & \overset{\textnormal{iid}}{\sim} P, \quad j = 1,2,\dots, \\
    p_1 & = B_1,\quad p_j = B_j\prod_{i=1}^{j-1} B_i, \quad j = 2,3,\dots, \\
    B_j & \overset{\textnormal{iid}}{\sim} \textnormal{Beta}(1,\alpha), \quad j = 1,2,\dots
\end{align*}
The name of the procedure comes from the analogy with breaking a stick of length 1 into two pieces of length $B_1$ and $1-B_1$, then the second piece into two sub-pieces of length $(1-B_1)B_2$ and $(1-B_1)(1-B_2)$, and so on.

Therefore, because $Q_{\boldsymbol{\xi}^n}$ is itself a DP with updated parameters \citep{ferguson1973bayesian}, one can draw $M\in\mathbb N$ (approximate) iid samples
\begin{equation*}
    p^m = \sum_{j=0}^T p_j^m \delta_{\xi_{mj}}
\end{equation*}
from $Q_{\boldsymbol{\xi}^n}$ by repeating the above SB procedure (appropriately truncated at some finite number $T$ of steps, see Algorithm~\ref{alg:stickbreaking} in Appendix~\ref{app:background}) and optimizing the criterion
\begin{equation}\label{eq:SB_DP_criterion}
    \hat V_{\boldsymbol{\xi}^n}(\theta, T, M) := \frac{1}{M} \sum_{m=1}^M \phi\left(\sum_{j=0}^T p_j^m h(\theta, \xi_{mj})\right).
\end{equation}
See Algorithm~\ref{alg:approx_criterion} in Appendix~\ref{app:background} for more details on this approximation scheme (as usual, the DP criterion case corresponds to setting $S=1$ and $\alpha_0 = \infty$), as well as for another approximation method based on the Multinomial-Dirichlet (MD) construction\footnote{In all of our experiments, we use this alternative approximation scheme as it generally yields more balanced weights $p_1^m,\dots, p_T^m$ for moderate values of $T$ and $n$, thus stabilizing optimization.} of the DP (Algorithm~\ref{alg:mult_dir}). Also, in Theorems~\ref{thm:uniform_convergence_SB} and \ref{thm:optimizer_convergence_SB} in Appendix~\ref{app:theorems}, we develop convergence guarantees for the optimization procedure based on the approximate criterion $\hat V_{\boldsymbol{\xi}^n}(\theta, T, M)$ to the infinite-dimensional target $V_{\boldsymbol{\xi}^n}(\theta)$.

\subsection{Multi-Source Criterion Optimization}

The multi-source, HDP-based criterion poses some additional challenges in terms of approximation, due to the complex hierarchical structure induced by the HDP posterior. Luckily, we are able to exploit some recent distributional results from the literature on Bayesian nonparametric models for dependent distributions.

\begin{proposition}[\cite{camerlenghi2019distribution}, Thms. 9 and 10, Ex. 5]\label{pro:HDP_post_characterization} Assume that $(p_1,\dots,p_S)$ is modeled as a HDP and that the true laws $(p_1^\star, \dots,p_S^\star)$ are diffuse. Then, for all $s= 1,\dots,S$,
\begin{align*}
    p_0 \mid \boldsymbol \xi_{1:S}& \sim \textnormal{DP} \Bigg(\alpha_0  + N,
    \frac{N}{\alpha_0 + N}\frac{1}{N}\sum_{s=1}^S\sum_{j=1}^{N_s}\delta_{\xi_{sj}} + \frac{\alpha_0}{\alpha_0 + N}H \Bigg), \\
    p_s \mid p_0, \boldsymbol\xi_{1:S} & \overset{\textnormal{id}}{\sim} \textnormal{DP}\Bigg(\alpha_s  + N_s, \,
    \frac{N_s}{\alpha_s + N_s}\frac{1}{N_s}\sum_{j=1}^{N_s}\delta_{\xi_{sj}} + \frac{\alpha_s}{\alpha_s + N_s} p_0 \Bigg).
\end{align*}
\end{proposition}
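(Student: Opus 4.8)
The plan is to pin down the joint posterior of $(p_0,p_1,\dots,p_S)$ given $\boldsymbol\xi_{1:S}$ by proving the two displayed characterizations separately: since the second describes $p_s\mid p_0,\boldsymbol\xi_{1:S}$ for every $s$ and the first describes the marginal $p_0\mid\boldsymbol\xi_{1:S}$, together they determine the full posterior. Throughout I would exploit that, under the diffuseness of the true sampling laws $p_1^\star,\dots,p_S^\star$, the $N$ observed values $\{\xi_{sj}\}$ are pairwise distinct almost surely, and argue on this tie-free event.

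I would dispatch the second assertion first, as it is the easier one and does not even require diffuseness. Under the HDP prior the variables form a tree rooted at $p_0$: conditionally on $p_0$ the pairs $(p_s,\boldsymbol\xi_s)$, $s=1,\dots,S$, are mutually independent, and $\boldsymbol\xi_s$ depends on $(p_1,\dots,p_S)$ only through $p_s$. Hence $p_s\mid p_0,\boldsymbol\xi_{1:S}$ has the same law as $p_s\mid p_0,\boldsymbol\xi_s$, and these laws are conditionally independent across $s$ given $p_0$ (hence also given $(p_0,\boldsymbol\xi_{1:S})$). Since $p_s\mid p_0\sim\textnormal{DP}(\alpha_s,p_0)$ with $\xi_{s1},\dots,\xi_{sN_s}\mid p_s\overset{\textnormal{iid}}{\sim} p_s$, Ferguson's conjugacy of the Dirichlet process gives $p_s\mid p_0,\boldsymbol\xi_s\sim\textnormal{DP}\big(\alpha_s+N_s,\ (\alpha_s p_0+\sum_{j}\delta_{\xi_{sj}})/(\alpha_s+N_s)\big)$, which is the stated expression once the base measure is rewritten as a convex combination.

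For the first assertion I would integrate out $p_1,\dots,p_S$ and pass to the Chinese restaurant franchise representation. Conditionally on $p_0$, each group-$s$ sample then follows a Pólya-urn scheme with concentration $\alpha_s$ and base $p_0$ (independently across $s$), in which the distinct values of $\boldsymbol\xi_s$ are exactly the ``dishes'' of the newly opened ``tables,'' and the across-franchise dish sequence is, conditionally on $p_0$, i.i.d.\ from $p_0$. The decisive observation is that on the tie-free event the latent table/dish configuration is \emph{forced}: no two customers can share a table (they would share a value), and no two tables across the franchise can share a dish, so there are exactly $N$ tables, each serving its own dish, and the $N$ dish values are precisely the $N$ observations. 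Consequently, conditioning $p_0$ on $\boldsymbol\xi_{1:S}$ reduces (on this event) to conditioning $p_0$ on $N$ i.i.d.\ draws from itself taking the values $\{\xi_{sj}\}$, and a second application of Ferguson's conjugacy yields $p_0\mid\boldsymbol\xi_{1:S}\sim\textnormal{DP}\big(\alpha_0+N,\ (\alpha_0 H+\sum_{s,j}\delta_{\xi_{sj}})/(\alpha_0+N)\big)$, as claimed.

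The hard part will be making the ``forced configuration'' and the associated collapse of the posterior into a single Dirichlet process fully rigorous: the measures involved are almost surely discrete, so one cannot argue with densities and must instead work at the level of de Finetti measures / regular conditional distributions, keeping track of the partially exchangeable partition induced by the data. This is exactly what the distribution theory for hierarchical processes of \cite{camerlenghi2019distribution} supplies: their posterior characterization expresses the conditional law of a hierarchical process as a mixture over the latent partition structure, weighted by the partially exchangeable partition probability function, and specializing to all-singleton blocks --- the only configuration compatible with tie-free data --- collapses this mixture to the single Dirichlet process above. I would therefore present the clean urn argument above for intuition and then obtain the statement rigorously by invoking \cite{camerlenghi2019distribution} (Theorems 9 and 10, Example 5) for the HDP, applied to the almost surely tie-free observed data guaranteed by the diffuseness hypothesis.
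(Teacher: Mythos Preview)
Your proposal is correct and mirrors the paper's own treatment: the paper does not give a standalone proof but, in its appendix on the HDP and CRF construction, argues exactly as you do that diffuseness forces the all-singleton table configuration, which collapses the CRF to the two-level Dirichlet update, and then defers to \cite{camerlenghi2019distribution} for the rigorous posterior characterization. Your additional remark that the second display follows directly from Ferguson conjugacy and the conditional-independence tree structure (without needing diffuseness) is a useful sharpening that the paper leaves implicit.
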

Proposition \ref{pro:HDP_post_characterization} allows to set up a two-stage procedure to obtain an approximate Monte Carlo sample from the marginal posterior $Q_N^s$ of $p_s$: First, simulate $p_0\mid \boldsymbol \xi_1, \dots, \boldsymbol \xi_S$, then simulate $p_s \mid p_0, \boldsymbol\xi_1, \dots, \boldsymbol \xi_S$. Finally, the sampled measure $p_s$ (which, as a DP realization, is discrete) can be used to compute $\phi(\mathcal R_{p_s}(\theta))$, and a Monte Carlo average over many such samples approximates $V_N^s(\theta) = \int \phi(\mathcal R_{p_s}(\theta)) Q_N^s(\mathrm dp_s)$. This two-step algorithm simplifies the initial task of drawing samples from the HDP posterior by only requiring to approximately simulate from univariate DP distributions, which is easy by appealing to the same SB and MD constructions of the DP used to approximate the single-source criterion in Equation~\eqref{eq:SB_DP_criterion}.

\begin{remark}
    An important question is how to select the hyperparameters $\alpha_0$ and $\alpha_s$. Throughout our experiments, we employ cross-validation to optimize out-of-sample prediction. While this is a widely-applicable solution, data scenarios involving a high degree of domain expertise (e.g., about the level of dependence across samples) may make user-specified values an attractive choice.
\end{remark}

\section{Gradient-Based Optimization}\label{sec:gradient_optimization}

Given the approximation strategies we proposed for the HDP criterion, in practice one ends up minimizing a function of the form
\begin{equation*}
    \hat V_s(\theta) = \frac{1}{M}\sum_{m=1}^M \phi\left(\sum_{j=1}^T p_j^m h(\theta,\xi_j^m)\right)
\end{equation*}
for each sample $s$. Under mild regularity assumptions on the loss function $h(\theta,\xi)$, $\hat V_s$ is easily optimized via first-order methods. For instance, assuming we have access to the  gradient $\nabla_\theta h(\theta,\xi)$ for all $\xi\in\Xi$, a simple yet scalable solution is to adopt a stochastic gradient descent (SGD) algorithm of the following type: At each iteration $t$, select a (possibly random) index $m_t\in\{1,\dots,M\}$, then perform a gradient-based update of the form
\begin{equation}\label{eq:sgd_step}
    \theta^{t} = \theta^{t-1} - \eta_t \underbrace{\phi'\left(\sum_{j=1}^T p_j^{m_t} h(\theta^{t-1},\xi_j^{m_t})\right) \sum_{j=1}^T p_j^{m_t} \nabla_\theta h(\theta^{t-1},\xi_j^{m_t})}_{(\star)},
\end{equation}
where $\eta_t$ is a pre-specified step size and $(\star)$ is an unbiased Monte Carlo estimate of the gradient of $\hat V_s$ evaluated at $\theta^{t-1}$ (see Algorithm~\ref{alg:SGD_modified} in Appendix~\ref{app:background} for full details).

This procedure highlights several appealing features of our method. First, the smoothness of the criterion opens the door for simple, off-the-shelf optimization procedures. Second, the convexity of $h$ is easily seen to be inherited by $\hat V_s$, so that standard SGD convergence results for convex objectives hold \citep{garrigos2023handbook}. Third, the form of the gradient allows to choose the truncation step $T$ and the number of Monte Carlo samples $M$ by interpreting them as the SGD minibatch size and the number of passes over the data, respectively. In fact, assume that the number of SGD iterations is chosen as a multiple of $M$ and that $m_t$ is chosen deterministically as follows: $m_0 =1$ and $m_t = m_{t-1}+1$. Then, the algorithm requires $T$ gradient evaluations at each step, and it iterates $M$ times over the whole (augmented) data.\footnote{This interpretation of $T$ and $M$ also clarifies why larger values of these parameters are necessary when dealing with larger sample sizes, to ensure that the approximated criterion accurately represents the available data. For practical purposes, especially with large datasets, we recommend keeping $T$ relatively small to maintain manageable per-iteration costs, while increasing $M$ accordingly.} Finally, because $\phi$ is convex, $\phi'$ is increasing, so that $(\star)$ can be interpreted as a form of \emph{robustly weighted gradient}: The worse the current parameter value $\theta^{t-1}$ performs on the selected minibatch $m_t$, the more the procedure weights the corresponding gradient step.

\section{Experiments on Single-Source DP-Based Criterion}\label{sec:DP_experiments}

In this Section, we present the results of numerical experiments conducted on single-source data using our DP-based DRO method.

\paragraph{Simulation Experiments.} We test our method on two different learning tasks featuring a high degree of distributional uncertainty in the data-generating process, and compare its performance with the corresponding ambiguity-neutral (i.e., simply regularized) and unregularized procedures. First, we perform a high-dimensional sparse linear regression simulation experiment. We simulate 200 independent samples of size $n=100$ from a linear model with $d=90$ features (moderately correlated with each other), only the first $s=5$ of which have unitary positive marginal effect on the scalar response $y$; see Figure~\ref{fig:DP_lin_reg_simulations} for a summary of the results. Second, we perform a simulation experiment on high-dimensional sparse logistic regression for binary classification. We set up a data-generating mechanism similar to the linear regression experiment, where a small subset of features linearly influence the log odds ratio.

Appendix~\ref{app:experiments_DP_DRO} collects further details on the above experiments as well as plots summarizing the results of the second experiment (see Figure~\ref{fig:DP_logit_simulations}). All three experiments reveal the ability of our robust method to improve out-of-sample performance and estimation accuracy in two ways, i.e., (i) by yielding good results on average and (ii) by reducing performance variability. The latter is a key robustness property that our method is designed to achieve.

\paragraph{Real Data Applications.} We test our method on three diverse real-world datasets. In the first study, we apply our method to predict diabetes development based on a host of features, as collected in the popular and public Pima Indian Diabetes dataset. Because the outcome is binary, we use logistic regression as implemented (i) with our robust method, (ii) with $L^1$ regularization, and (iii) in its plain, unregularized version. We select hyperparameters via cross-validation and test the out-of-sample performance of the three methods on disjoint batches of training observations to assess performance variability. As reported in Appendix~\ref{app:experiments_DP_DRO}, our robust method outperforms both alternatives on average and does significantly better in reducing variability.

\begin{figure*}[t]
\begin{center}
\centerline{\includegraphics[width=0.9\textwidth]{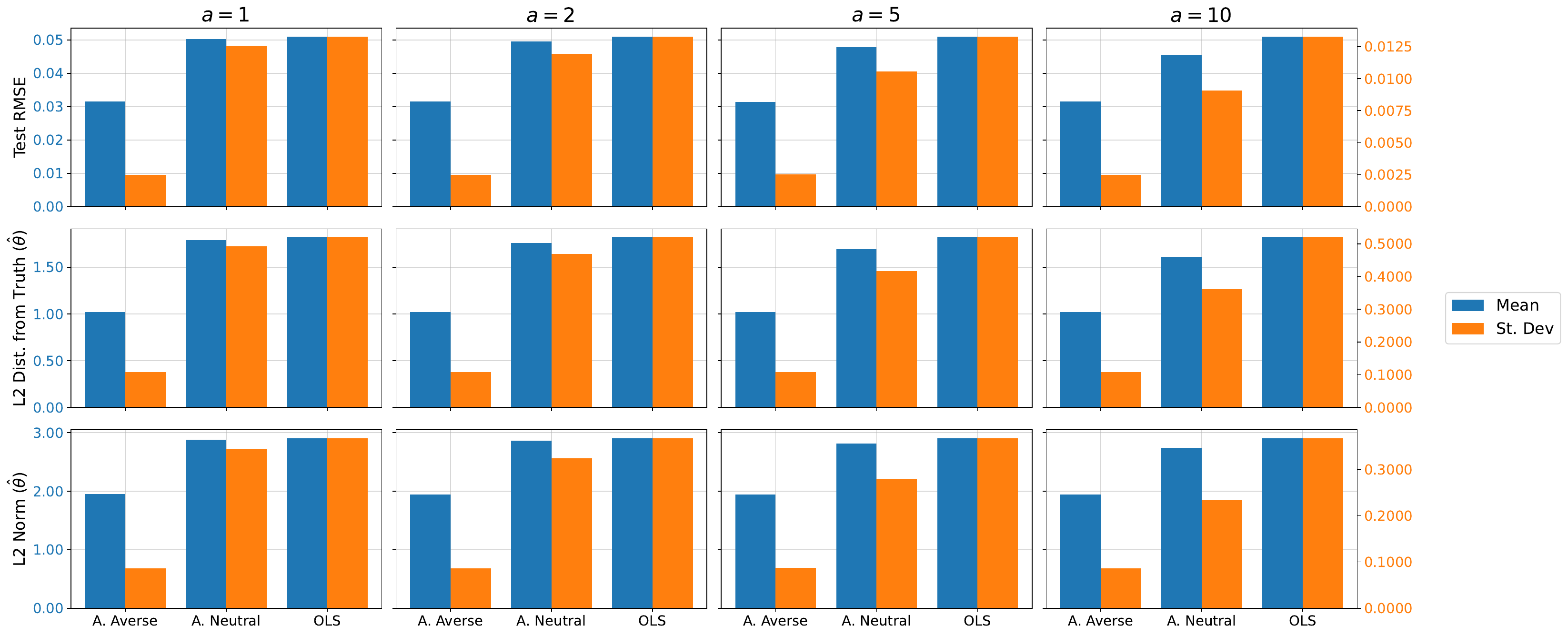}}
\caption{Simulation results for the high-dimensional sparse linear regression experiment. Bars report the mean and standard deviation (across 200 sample simulations) of the test RMSE, $L^2$ distance of the estimated coefficient vector $\hat\theta$ from the data-generating one, and the $L^2$ norm of $\hat\theta$. Results are shown for the ambiguity-averse, ambiguity-neutral, and OLS procedures. Note: The left (blue) axis refers to mean values, the right (orange) axis to standard deviation values.}
\label{fig:DP_lin_reg_simulations}
\end{center}
\end{figure*}

We conduct two further studies on linear regression using two popular UCI Machine Learning Repository datasets: the Wine Quality dataset \citep{misc_wine_quality_186} and the Liver Disorders dataset \citep{misc_liver_disorders_60}. Similarly to the first study, we compare the performance of our method to OLS (unregularized) estimation and $L^1$-penalized (LASSO) regression. After cross-validation for parameter selection, we train the models multiple times on separate batches of data and compute out-of-sample performance on a large held-out set of observations. As the results in Appendix~\ref{app:experiments_DP_DRO} show, in these settings our robust DP-based method performs better than the alternatives both on average and, notably, in terms of lower variability.

\section{Experiments on Multi-Source HDP-Based Criterion}\label{sec:HDP_experiments}

This Section reports the results of numerical experiments performed on multi-source data with our HDP-based DRO method.

\paragraph{Simulation Experiments.} In the first experiment, we test the performance of the HDP robust criterion in a two-sample high-dimensional linear regression task, comparing it to OLS and robust DP estimation (both pooling samples and keeping them separate). In each simulation, we generate two size-100 samples of 95 features and a response, where the response is linearly influenced by only 5 features. Moreover, the 5 non-zero coefficients are simulated at each iteration with positive dependence across samples, and we explore various degrees of dependence. Figure~\ref{fig:lin_reg_main} shows the results of the study with 100 simulations, revealing that the robust HDP method outperforms all of the other methods in terms of both out-of-sample risk and estimation accuracy. Importantly, in addition to performing better on average, HDP-robust estimation displays less variable performance. Figure 3 in Appendix~3 shows similar results when the degree of dependence among group laws is either reduced or increased.

\begin{figure}[t]
    \centering
    \begin{subfigure}[b]{\textwidth}
        \centering
        \includegraphics[width=0.9\textwidth]{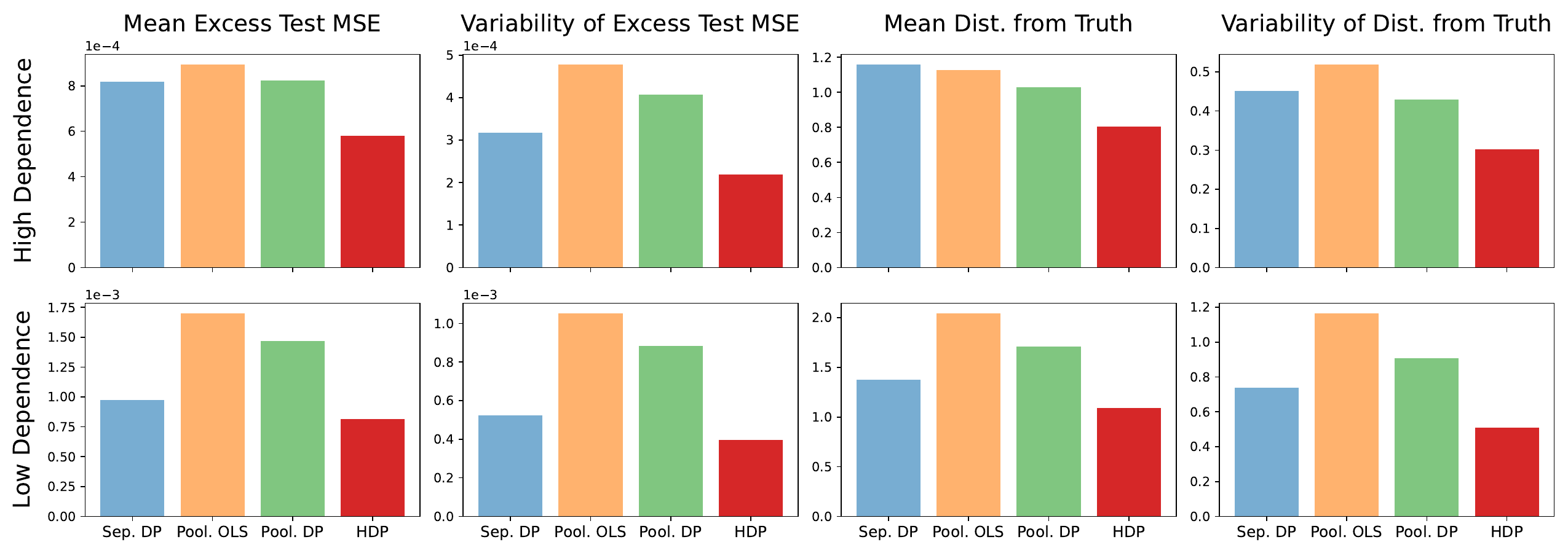}
        \caption{Mean regression.}
        \label{fig:lin_reg_main}
    \end{subfigure}
    
    \vspace{0.3cm}
    \begin{subfigure}[b]{\textwidth}
        \centering
        \includegraphics[width=0.9\textwidth]{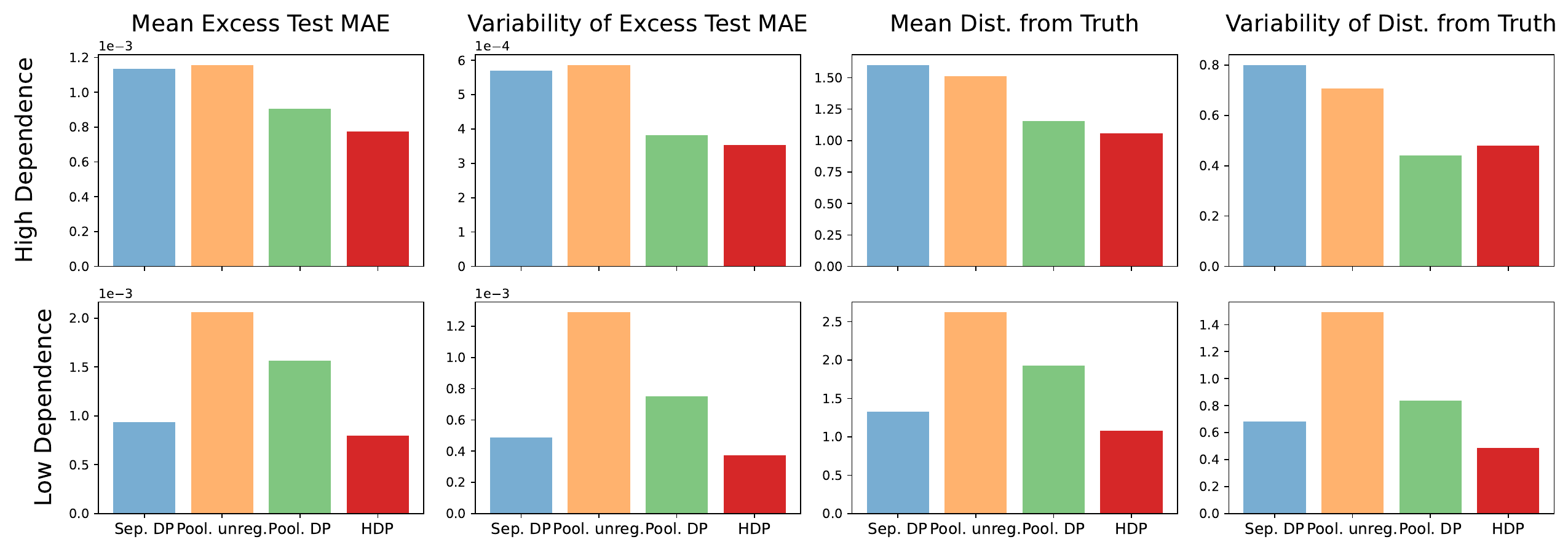}
        \caption{Median regression.}
        \label{fig:med_reg_HDP_main}
    \end{subfigure}
    \vspace{.01in}
    \caption{Comparison of out-of-sample performance and estimation accuracy of different methods in high-dimensional regression experiments. The robust HDP method (in bright red) outperforms the others both in terms of average performance and variability. Note: Distance from truth is measured as the squared $L^2$ distance between the estimated coefficients and the data-generating ones.}
    \label{fig:combined}
\end{figure}

In the second simulation experiment, we test the performance of the HDP robust criterion in a two-sample high-dimensional median linear regression task, comparing it to the same baselines as above. Instead of recovering the conditional mean structure of the data-generating process, this method aims to reconstruct the conditional median of the response variable as a linear function of the features, which makes estimation more robust to outlier data points \citep{koenker2005quantile}. Using a data-generating process analogous to the first experiment, we test the ability of our HDP robust model to improve and stabilize performance when varying degrees of dependence are induced across heterogeneous groups. Figure~\ref{fig:med_reg_HDP_main} (and Figure 4 in Appendix~3) shows results in line with those of the linear regression experiment, with our robust HDP method generally outperforming the baselines on average and in terms of variability.

\paragraph{Real Data Applications.} We further validate the applicability of our method with three real-world learning tasks. In the first two experiments, we apply our method, as well as ERM optimization, to a classification task aimed at predicting diabetes (encoded as binary) from baseline health conditions measured on patients from different age cohorts. In these experiments, we select the loss function to correspond to either logistic regression or support vector machine classification, and each experiment is performed across two age cohorts ($S=2$, differing across experiments). In the third experiment, we apply our method and ERM to a support vector regression task, predicting wine quality from baseline characteristics across red and white wines ($S=2$).

In all three experiments, our HDP method shows favorable out-of-sample predictive performance compared to pooled- and separate-samples baselines, both on average and in terms of variability (see Tables \ref{tab:diabetes_log_reg}, \ref{tab:diabetes_SVM}, and \ref{tab:winequality_SVR} in Appendix~\ref{app:experiments_HDP_DRO}), highlighting the benefit of partial borrowing of information in the context of data-driven DRO.

\section{Incorporating Outlier Robustness}\label{sec:DORO}

In the context of DRO based on ambiguity sets, \cite{zhai2021doro} empirically demonstrated that unadjusted DRO methods become unstable and exhibit suboptimal performance in the presence of outliers. The intuition behind this phenomenon is straightforward: since these methods aim to minimize risk under the worst-case distribution, the presence of outliers makes this distribution highly unstable and potentially far from the true underlying process. To address this issue, \cite{zhai2021doro} proposed a modified optimization procedure called Distributionally and Outlier Robust Optimization (DORO), which dynamically filters out the worst-fitting training instances at each gradient-based parameter update.

While our smooth DRO method does not directly perform worst-case optimization, its ambiguity aversion still implies a tendency to overemphasize outliers during training. Indeed, recall the SGD update in Equation~\eqref{eq:sgd_step}, which assigns greater weight to batches with higher risk under the current parameter value. Consequently, batches containing more outliers may receive disproportionately higher weights, and a persistent presence of outliers may destabilize the optimization process and degrade its performance.

\begin{figure}
    \centering
    \includegraphics[width=0.8\linewidth]{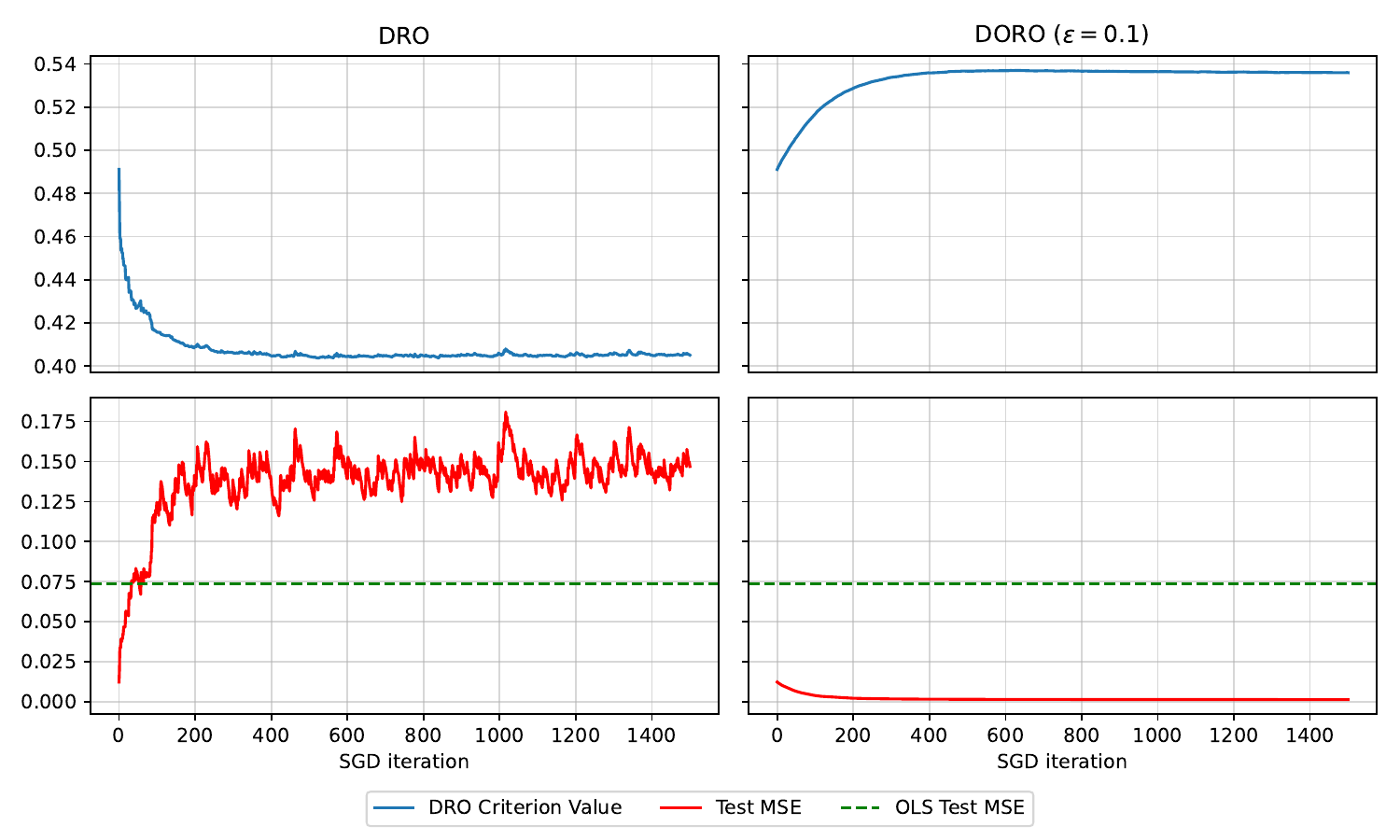}
    \caption{Loss dynamics of DP-based DRO and DORO routines.}
    \label{fig:DRO_DORO_losses}
\end{figure}

To empirically illustrate this issue, we conduct a simple regression experiment using a high-dimensional Gaussian linear model identical to our previous experiments, but with 5\% of observations replaced by samples from a significantly different distribution---effectively injecting outliers. We then apply our DP-based DRO procedure to this contaminated dataset.\footnote{For clarity, this section focuses on the single-source, DP-based method, though the discussion readily extends to the multi-source, HDP-based case.} The first column of Figure~\ref{fig:DRO_DORO_losses} confirms our expectation: while the training loss decreases and stabilizes with more gradient steps (indicating convergence of the SGD algorithm), the out-of-sample performance of the learned parameters degrades and becomes highly unstable. In fact, it deteriorates to the point of performing worse than OLS, which, as seen in our previous experiments, typically struggles in high-dimensional settings. This motivates us to develop a refined procedure that incorporates outlier robustness into our DRO framework.

Formally, given a training sample $\boldsymbol\xi^{\textnormal{tr}}\subset \Xi$ of size $n\in \mathbb{N}$, we define the set
\begin{equation*}
    A^{\textnormal{tr}}_\varepsilon := \left\{\boldsymbol{\xi} \subset \Xi : \exists \boldsymbol\xi' \subset \Xi \textnormal{ such that } p_{\boldsymbol{\xi^{\textnormal{tr}}}} = (1-\varepsilon)p_{\boldsymbol\xi} + \varepsilon p_{\boldsymbol\xi'} \right\}.
\end{equation*}
Using this notation, we consider the following modified optimization problem:
\begin{equation}\label{eq:theoretical_DORO_problem}
    \min_{\theta\in\Theta}V_{\boldsymbol \xi^{\textnormal{tr}}}^{\varepsilon}(\theta), \quad \textnormal{where }V_{\boldsymbol \xi^{\textnormal{tr}}}^{\varepsilon}(\theta) := \inf_{\boldsymbol{\xi}\in A^{\textnormal{tr}}_\varepsilon}V_{\boldsymbol{\xi}}(\theta),
\end{equation}
where $\varepsilon\in[0,1/2)$ represents the level of outlier contamination.\footnote{For simplicity, we use $\varepsilon$ as shorthand for $\lceil n\varepsilon\rceil/n$ when necessary.} Intuitively, this formulation introduces an additional ``optimistic'' layer to the optimization process, filtering out the worst-performing $(\varepsilon \times 100)\%$ of observations in terms of the standard DRO criterion $V_{\boldsymbol{\xi}^n}$. This is particularly beneficial when we suspect that the training data originates from a Huber-type contaminated distribution $\tilde p = (1-\varepsilon_\star)p_\star + \varepsilon_\star p_c$, where $p_\star$ is the target distribution and $p_c$ is a contaminating distribution. In what follows, we assume that this contamination model holds and that we can choose $\varepsilon \geq \varepsilon_\star$ accordingly.

In the next Theorem, we analyze the convergence behavior of the proposed DORO procedure, similar to Theorem~\ref{thm:uniform_convergence_criterion}. Specifically, we establish that if a modified ERM procedure that filters out outliers is consistent in terms of predictive risk, then our method is consistent as well. This result suggests that, as long as the destabilizing effect of outliers can be asymptotically mitigated by filtering them out, our procedure successfully does so while preserving distributional robustness in finite samples.

\begin{theorem}\label{thm:DORO_criterion_convergence}
    Assume that $h$ is bounded between $0$ and $K < \infty$, and that the function $\phi_n(t) = \beta_n \exp(t/\beta_n) - \beta_n$ satisfies $\lim_{n\to\infty} \beta_n = \infty$. Moreover, define 
    \[
    \hat\theta_\varepsilon \in\arg\min_{\theta\in\Theta}V_{\boldsymbol \xi^{\textnormal{tr}}}^{\varepsilon}(\theta), \quad \theta_\star\in\arg\min_{\theta\in\Theta}\mathcal{R}_{p_\star}(\theta).
    \]
    Then, the following holds:
    \begin{equation*}
        \phi_n(\mathcal{R}_{p_\star}(\hat\theta_\varepsilon)) - \phi_n(\mathcal{R}_{p_\star}(\theta_\star)) \leq o(1) + O(1) \cdot \sup_{\theta\in\Theta} \Big\lvert \inf_{\boldsymbol{\xi}\in A^{\textnormal{tr}}_\varepsilon} \mathcal{R}_{p_{\boldsymbol{\xi}}}(\theta) - \mathcal{R}_{p_\star}(\theta) \Big\rvert,
    \end{equation*}
    where the order symbols are with respect to the sample size $n$.
\end{theorem}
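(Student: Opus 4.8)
The plan is to mimic the structure of Theorem~\ref{thm:uniform_convergence_criterion}, but carrying the ``optimistic'' inner infimum over $A^{\textnormal{tr}}_\varepsilon$ through the argument. First, I would establish a DORO-analogue of Lemma~\ref{lem:finite_sample_bounds}: namely, a uniform bound of the form
\begin{equation*}
    \sup_{\theta\in\Theta}\Big\lvert V_{\boldsymbol\xi^{\textnormal{tr}}}^\varepsilon(\theta) - \phi_n\big(\inf_{\boldsymbol\xi\in A^{\textnormal{tr}}_\varepsilon}\mathcal R_{p_{\boldsymbol\xi}}(\theta)\big)\Big\rvert \;\leq\; o(1) + O(1)\cdot\sup_{\theta\in\Theta}\Big\lvert\inf_{\boldsymbol\xi\in A^{\textnormal{tr}}_\varepsilon}\mathcal R_{p_{\boldsymbol\xi}}(\theta) - \mathcal R_{p_\star}(\theta)\Big\rvert.
\end{equation*}
To get this, I would fix $\theta$ and relate $V_{\boldsymbol\xi^{\textnormal{tr}}}^\varepsilon(\theta) = \inf_{\boldsymbol\xi\in A_\varepsilon^{\textnormal{tr}}}V_{\boldsymbol\xi}(\theta)$ to $\inf_{\boldsymbol\xi\in A_\varepsilon^{\textnormal{tr}}}\phi_n(\mathcal R_{p_{\boldsymbol\xi}}(\theta))$: the infimum of $V_{\boldsymbol\xi}(\theta)$ over the finite contamination set $A_\varepsilon^{\textnormal{tr}}$ differs from the infimum of $\phi_n(\mathcal R_{p_{\boldsymbol\xi}}(\theta))$ by at most $\sup_{\boldsymbol\xi\in A_\varepsilon^{\textnormal{tr}}}\lvert V_{\boldsymbol\xi}(\theta) - \phi_n(\mathcal R_{p_{\boldsymbol\xi}}(\theta))\rvert$ (an infimum of two functions that are uniformly close are themselves uniformly close). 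Each $V_{\boldsymbol\xi}(\theta)$ is a single-source DP criterion on the subsample $\boldsymbol\xi$, so Lemma~\ref{lem:finite_sample_bounds} (with $S=1$, $\alpha_0\to\infty$) bounds $\lvert V_{\boldsymbol\xi}(\theta) - \phi_n(\mathcal R_{p_{\boldsymbol\xi}}(\theta))\rvert$ by $\tfrac{n'}{\alpha+n'}F_{\phi_n}\sup_\theta\lvert\mathcal R_{p_{\boldsymbol\xi}} - \mathcal R_{p_\star}\rvert$-type terms plus $\tfrac{K^2}{2}S_{\phi_n}$; the choice $\phi_n(t)=\beta_n\exp(t/\beta_n)-\beta_n$ with $\beta_n\to\infty$ gives $S_{\phi_n}=\beta_n^{-1}e^{K/\beta_n}\to 0$ and $F_{\phi_n}=e^{K/\beta_n}$ bounded, so the curvature term is $o(1)$ and the leading multiplier is $O(1)$. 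Finally, since $\phi_n\to\textnormal{id}$ uniformly on $[0,K]$ (assumption (3), which again holds for this $\phi_n$), one may replace $\phi_n(\inf_{\boldsymbol\xi}\mathcal R_{p_{\boldsymbol\xi}}(\theta))$ by $\inf_{\boldsymbol\xi}\mathcal R_{p_{\boldsymbol\xi}}(\theta)$ up to an additive $o(1)$, and then add/subtract $\mathcal R_{p_\star}(\theta)$ to arrive at the displayed bound.

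Second, I would run the standard ``optimizer comparison'' argument. Writing $G(\theta):=\inf_{\boldsymbol\xi\in A_\varepsilon^{\textnormal{tr}}}\mathcal R_{p_{\boldsymbol\xi}}(\theta)$ and $\Delta:=\sup_{\theta\in\Theta}\lvert G(\theta)-\mathcal R_{p_\star}(\theta)\rvert$, the uniform bound from the first step gives, for the DORO minimizer $\hat\theta_\varepsilon$ and the target $\theta_\star$,
\begin{align*}
    \phi_n(\mathcal R_{p_\star}(\hat\theta_\varepsilon))
    &\leq G(\hat\theta_\varepsilon) + o(1) + O(1)\Delta
    \leq V_{\boldsymbol\xi^{\textnormal{tr}}}^\varepsilon(\hat\theta_\varepsilon) + o(1) + O(1)\Delta \\
    &\leq V_{\boldsymbol\xi^{\textnormal{tr}}}^\varepsilon(\theta_\star) + o(1) + O(1)\Delta
    \leq G(\theta_\star) + o(1) + O(1)\Delta
    \leq \mathcal R_{p_\star}(\theta_\star) + o(1) + O(1)\Delta,
\end{align*}
where the middle inequality uses optimality of $\hat\theta_\varepsilon$ for $V_{\boldsymbol\xi^{\textnormal{tr}}}^\varepsilon$, and each passage between $G$, $V^\varepsilon$, and $\mathcal R_{p_\star}(\cdot)$ absorbs an $o(1)+O(1)\Delta$ term from the first step. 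Subtracting $\phi_n(\mathcal R_{p_\star}(\theta_\star))$ and using once more that $\lvert\phi_n(\mathcal R_{p_\star}(\theta_\star)) - \mathcal R_{p_\star}(\theta_\star)\rvert = o(1)$ yields exactly the claimed inequality $\phi_n(\mathcal R_{p_\star}(\hat\theta_\varepsilon)) - \phi_n(\mathcal R_{p_\star}(\theta_\star)) \leq o(1) + O(1)\cdot\Delta$.

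The step I expect to be the main obstacle is making the first step genuinely \emph{uniform in $\boldsymbol\xi\in A_\varepsilon^{\textnormal{tr}}$} rather than just pointwise in $\boldsymbol\xi$. Lemma~\ref{lem:finite_sample_bounds} is stated for a fixed data configuration; here the infimum ranges over all $\lceil n(1-\varepsilon)\rceil$-subsamples of $\boldsymbol\xi^{\textnormal{tr}}$, so I must check that the constants ($F_{\phi_n}$, $S_{\phi_n}$, the DP concentration weights, and the range $K$ of $h$) do not depend on which subsample is retained --- which they do not, since $h\in[0,K]$ uniformly, $\phi_n$ is fixed, and the DP update parameters depend only on the retained subsample size $n' = \lceil n(1-\varepsilon)\rceil$, which is constant across $\boldsymbol\xi\in A_\varepsilon^{\textnormal{tr}}$. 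The only genuinely nontrivial point is commuting the infimum over the (finite but exponentially large) set $A_\varepsilon^{\textnormal{tr}}$ with the various approximation errors; this is handled cleanly by the elementary fact that $\lvert\inf_x a(x) - \inf_x b(x)\rvert \leq \sup_x\lvert a(x) - b(x)\rvert$, so no uniform convergence over a continuum is actually required, only a uniform-in-$\theta$ (over $\Theta$) bound that is simultaneously valid for every retained subsample. Once this bookkeeping is in place, the rest is the routine telescoping displayed above.
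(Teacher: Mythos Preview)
Your proposal is correct and lands at the right conclusion, but it takes a somewhat more circuitous route than the paper. The paper's proof is more compact: it first performs the telescoping
\[
\phi_n(\mathcal R_{p_\star}(\hat\theta_\varepsilon)) - \phi_n(\mathcal R_{p_\star}(\theta_\star)) \;\leq\; 2\sup_{\theta\in\Theta}\big\lvert V^\varepsilon_{\boldsymbol\xi^{\textnormal{tr}}}(\theta) - \phi_n(\mathcal R_{p_\star}(\theta))\big\rvert,
\]
and then bounds the right-hand side in one shot by Taylor-expanding $\phi_n(\mathcal R_p(\theta))$ \emph{around $\mathcal R_{p_\star}(\theta)$} inside the DP integral, taking the infimum over $\boldsymbol\xi$, and reading off the $o(1)$ (curvature and prior) and $O(1)\cdot\Delta$ (first-order empirical) pieces directly. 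Your approach instead centers the expansion at $\mathcal R_{p_{\boldsymbol\xi}}(\theta)$ for each subsample, commutes infima via $\lvert\inf a-\inf b\rvert\le\sup\lvert a-b\rvert$, and then repeatedly invokes $\phi_n\to\textnormal{id}$ uniformly on $[0,K]$ to shuttle between $G$, $V^\varepsilon$, $\phi_n(G)$, and $\mathcal R_{p_\star}$. The paper never needs the uniform convergence $\phi_n\to\textnormal{id}$ here---centering at $p_\star$ makes it unnecessary. Conversely, your modular reuse of Lemma~\ref{lem:finite_sample_bounds} per subsample makes the uniformity-in-$\boldsymbol\xi$ argument (which you rightly flag as the main bookkeeping point) more explicit.

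One small imprecision worth cleaning up: when you apply Lemma~\ref{lem:finite_sample_bounds} with the expansion centered at $\mathcal R_{p_{\boldsymbol\xi}}(\theta)$ rather than at $\mathcal R_{p_\star}(\theta)$, the first-order ``$\sup_\theta\lvert\mathcal R_{p_{\boldsymbol\xi}}-\mathcal R_{p_\star}\rvert$-type'' term you mention actually disappears (the empirical piece is $\mathcal R_{p_{\boldsymbol\xi}}-\mathcal R_{p_{\boldsymbol\xi}}=0$), leaving only the prior contribution $\tfrac{\alpha}{\alpha+n'}F_{\phi_n}K$ and the curvature term $\tfrac{K^2}{2}S_{\phi_n}$, both $o(1)$. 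This only strengthens your bound, so the argument still goes through.
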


As in the standard DRO framework, the optimization problem in Equation~\eqref{eq:theoretical_DORO_problem} is inherently infinite-dimensional and cannot be solved directly. To make it tractable, we adopt a similar approximation strategy to that used for the DRO criterion and instead optimize:
\begin{equation*}
    \min_{\theta\in\Theta} \hat V_{\boldsymbol \xi^{\textnormal{tr}}}^{\varepsilon}(\theta, T, M), \quad \textnormal{where } \hat V_{\boldsymbol \xi^{\textnormal{tr}}}^{\varepsilon}(\theta, T, M) := \inf_{\boldsymbol{\xi}\in A^{\textnormal{tr}}_\varepsilon}\hat V_{\boldsymbol{\xi}}(\theta, T, M),
\end{equation*}
with $\hat V_{\boldsymbol{\xi}}(\theta, T, M)$ defined in Equation~\eqref{eq:SB_DP_criterion} using one of the previously discussed approximation schemes for the DP posterior. Algorithm~\ref{alg:DORO_SGD} in Appendix~\ref{app:background} details an implementable SGD algorithm for solving this problem, filtering out the worst-fitting $(\varepsilon \times 100)\%$ of observations---most likely outliers---at each gradient step.

The second column of Figure~\ref{fig:DRO_DORO_losses} illustrates the performance of the proposed DORO procedure. As expected, while it does not reduce the in-sample DRO criterion value, it successfully mitigates the sensitivity of DRO to outliers by improving and stabilizing out-of-sample generalization, bringing it back to a level better than OLS. Additionally, Figure~\ref{fig:DORO_lin_reg_histograms} in Appendix~\ref{app:experiments_DORO} presents results from repeating the experiment across 200 independent simulations, demonstrating that DORO effectively stabilizes optimization in the presence of outliers by improving generalization performance on average and reducing variability.

\section{Conclusion}\label{sec:conclusion}

This work explores some deep connections between data-driven Distributionally Robust Optimization (DRO) and Economic Decision Theory under Ambiguity (DTA). By demonstrating that many DRO formulations can be reinterpreted as data-driven analogues of ambiguity-averse decision models, we provide a theoretical synthesis that clarifies their shared foundations. This perspective not only unifies existing approaches but also suggests new methodological pathways for designing robust optimization frameworks grounded in decision theory.  

Building on this synthesis, we introduce a novel DRO approach inspired by smooth ambiguity-averse preferences in DTA, leveraging Bayesian nonparametric techniques to model the data-generating process. Our baseline framework is built on Dirichlet Process (DP) posteriors, which naturally accommodate distributional uncertainty and extend seamlessly to heterogeneous data through Hierarchical Dirichlet Processes (HDPs). In addition, we propose an outlier robust refinement that selectively filters problematic observations during optimization, improving both stability and predictive accuracy. These methodological contributions are supported by theoretical performance guarantees and extensive empirical validation.  

Beyond methodological innovations, our results emphasize the broader potential of integrating ideas from decision theory into data-driven optimization. The conceptual link between DRO and DTA highlights how ambiguity-aware principles can inform the design of robust learning algorithms, moving beyond worst-case formulations toward more flexible, structured approaches to uncertainty. By synthesizing ideas from DRO and DTA, this work contributes both a deeper theoretical understanding of robustness in optimization and practical tools for enhancing reliability in data-driven decision-making. We hope that these insights will inspire further research at the intersection of optimization, statistical learning, and decision theory under uncertainty.




\acks{The authors are thankful to Igor Pr\"unster and Alessandro Rinaldo for helpful comments. NB gratefully acknowledges funding from the ``Giorgio Mortara'' scholarship by the Bank of Italy. }



\appendix
\section{Technical Proofs and Further Theoretical Results}
\label{app:theorems}

In this Appendix, we report the mathematical proofs of the theoretical results reported in the main text, as well as further theoretical results with the corresponding proofs.

\subsection{Proofs of Results Appearing in the Main Text}

\paragraph{Proof of Proposition~\ref{pro:equivalence_regularization}.} Because $p_0 = \mathcal N(0,I_{d+1})$ and given the form of the criterion in Equation~\eqref{eq:ambiguity_neutral_criterion}, we are left to establish an expression for $\mathcal R_{p_0}(\theta) \equiv \mathbb E_{(y, x)\sim \mathcal N(0, I_{d+1})}[(y-\theta^\top x)^2]$. Notice that, for any $\theta = (\theta_1,\dots,\theta_d)$, $(y,x)\sim\mathcal N(0, I_{d+1})$ implies $-\theta_j x_j\overset{\textnormal{id}}{\sim}\mathcal N(0,\theta_j^2)$ independendently of $y\sim \mathcal N(0,1)$, so that $y-\theta^\top x \sim \mathcal N(0, 1 + \Vert\theta\Vert_2^2)$. Therefore, $\mathcal R_{p_0}(\theta) = 1 + \Vert\theta\Vert_2^2$. Therefore
\begin{equation*}
    \mathbb E_{p\sim Q_{\boldsymbol{\xi}^n}}[\mathcal R_p(\theta)] = \frac{n}{\alpha +n} \frac{1}{n}\sum_{i=1}^n (y_i - \theta^\top x_i)^2 + \frac{\alpha}{\alpha+n}(1 + \Vert\theta\Vert_2^2),
\end{equation*}
Because $r\mapsto \frac{\alpha + n}{n}r -\frac{\alpha}{n}$ is a strictly increasing transformation on $\mathbb R$, the desired equivalence is obtained.

\paragraph{Proof of Lemma~\ref{lem:finite_sample_bounds}.} First note that, by the stated assumptions, it follows from Taylor's theorem that
\begin{equation*}
    \phi(\mathcal R_p(\theta)) = \phi(\mathcal R_{p_s^\star}(\theta)) + \phi'(\mathcal R_{p_s^\star}(\theta))[\mathcal R_{p}(\theta)-\mathcal R_{p_s^\star}(\theta)] +\frac{\phi''(c_{p,\theta})}{2}[\mathcal R_{p}(\theta)-\mathcal R_{p_s^\star}(\theta)]^2
\end{equation*}
for all $p\in\mathscr P_\Xi$ and $\theta\in\Theta$ and for some $c_{p,\theta}\in[0,K]$. Then
\begin{align*}
    \sup_{\theta \in \Theta} \vert V_{\boldsymbol \xi^N}^s(\theta) - \phi(\mathcal R_{p_s^\star}(\theta))\vert & = \sup_{\theta \in \Theta} \bigg\vert \phi'(\mathcal R_{p_s^\star}(\theta))\int_{\mathscr P_\Xi}\mathcal [\mathcal R_{p}(\theta)-\mathcal R_{p_s^\star}(\theta)]Q_{\boldsymbol\xi^N}^s(\mathrm dp) \\
                                  & + \int_{\mathscr P_\Xi} \frac{\phi''(c_{p,\theta})}{2}[\mathcal R_{p}(\theta)-\mathcal R_{p_s^\star}(\theta)]^2 Q_{\boldsymbol\xi^N}^s(\mathrm dp)\bigg\vert \\
                                  & \leq \frac{N_s}{\alpha_s + N_s}F_\phi \sup_{\theta \in \Theta}\vert\mathcal R_{p_{\boldsymbol \xi_s}}(\theta) - \mathcal R_{p_s^\star}(\theta)\vert \\
                                  & + \frac{\alpha_s}{\alpha_s + N_s}F_\phi\sup_{\theta \in \Theta}\left\vert\frac{N}{\alpha_0 + N}\mathcal R_{p_{\boldsymbol \xi^N}}(\theta) + \frac{\alpha_0}{\alpha_0 + N}\mathcal R_H(\theta) - \mathcal R_{p_s^\star}(\theta)\right\vert\\
                                  & + \frac{K^2}{2}S_\phi.
\end{align*}

\paragraph{Proof of Theorem~\ref{thm:finite_sample_bounds}.} Notice the following decomposition:
\begin{align}\label{eq:excess_risk_decomposed}
    & \underbrace{\phi(\mathcal R_{p_s^\star}(\theta_N^s)) - \phi(\mathcal R_{p_s^\star}(\theta^s_\star))}_{\geq 0} \\ 
                                                          & = \phi(\mathcal R_{p_s^\star}(\theta_N^s)) - V_{\boldsymbol \xi^N}^s(\theta_N^s) + \underbrace{V_{\boldsymbol \xi^N}^s(\theta_N^s) - V_{\boldsymbol \xi^N}^s(\theta^s_\star)}_{\leq 0} + V_{\boldsymbol \xi^N}^s(\theta^s_\star) - \phi(\mathcal R_{p_s^\star}(\theta^s_\star))\nonumber \\
                                                          & \leq 2\sup_{\theta \in \Theta} \vert V_{\boldsymbol \xi^N}^s(\theta) - \phi(\mathcal R_{p_s^\star}(\theta))\vert. \nonumber
\end{align}
Then, Lemma~\ref{lem:finite_sample_bounds} and the boundedness of $h$ implies that, for all $\delta > 0$,
\begin{align*}
    & \mathbb P[\mathcal \phi(\mathcal R_{p_s^\star}(\theta_N^s)) - \phi(\mathcal R_{p_s^\star}(\theta^s_\star))\leq \delta] \\
    & \geq \mathbb P\left[\sup_{\theta \in \Theta} \vert V_{\boldsymbol \xi^N}^s(\theta) - \phi(\mathcal R_{p_s^\star}(\theta))\vert \leq \delta/2\right] \\
    & = \mathbb P\Bigg[ \sup_{\theta \in \Theta}\vert\mathcal R_{p_{\boldsymbol \xi_s}}(\theta) - \mathcal R_{p_s^\star}(\theta)\vert \leq \frac{\alpha_s + N_s}{N_s}\left(\frac{\delta}{2F_\phi} - \frac{\alpha_s}{\alpha_s + N_s}K- \frac{K^2}{2}\frac{S_\phi}{F_\phi}\right) \Bigg].
\end{align*}

\paragraph{Proof of Theorem~\ref{thm:uniform_convergence_criterion}.} Since
\begin{equation*}
    \lim_{N_s\rightarrow \infty}\sup_{\theta \in \Theta}\vert\mathcal R_{p_{\boldsymbol\xi_s}}(\theta) - \mathcal R_{p_s^\star}(\theta)\vert = 0 
\end{equation*}
almost surely and given assumptions (1) and (2) on $\phi_n$ in the statement of Theorem~\ref{thm:uniform_convergence_criterion}, by Lemma~\ref{lem:finite_sample_bounds} we obtain
\begin{equation*}
    \lim_{N_s\rightarrow \infty}\sup_{\theta \in \Theta}\vert V_{\boldsymbol \xi^N}(\theta) - \mathcal \phi_{N_s}(\mathcal R_{p_s^\star}(\theta))\vert = 0
\end{equation*}
almost surely. Then, by decomposition (\ref{eq:excess_risk_decomposed}),
\begin{equation*}
    \lim_{N_s\rightarrow \infty} \left[\phi_{N_s}(\mathcal R_{p_s^\star}(\theta_N^s)) - \phi_{N_s}(\mathcal R_{p_s^\star}(\theta^s_\star)) \right] = 0
\end{equation*}
almost surely and
\begin{equation*}
    \lim_{N_s\rightarrow \infty} \left[V_{\boldsymbol \xi^N}(\theta_N^s) - V_{\boldsymbol \xi^N}(\theta^s_\star) \right] =0
\end{equation*}
almost surely. As a consequence,
\begin{align*}
    \lim_{N_s\rightarrow\infty}\vert V_{\boldsymbol \xi^N}(\theta_N^s) & - \mathcal \phi_{N_s}(\mathcal R_{p_s^\star}(\theta^s_\star))\vert \\
                                                           & \leq \lim_{N_s\rightarrow\infty} \left[\vert V_{\boldsymbol \xi^N}(\theta_N^s) - V_{\boldsymbol \xi^N}(\theta^s_\star) \vert + \vert V_{\boldsymbol \xi^N}(\theta^s_\star) - \phi_{N_s}(\mathcal R_{p_s^\star}(\theta^s_\star)) \vert \right] \\
                                                           & \leq \lim_{N_s\rightarrow\infty} \left[\vert V_{\boldsymbol \xi^N}(\theta_N^s) - V_{\boldsymbol \xi^N}(\theta^s_\star) \vert + \sup_{\theta \in \Theta}\vert V_{\boldsymbol \xi^N}(\theta) - \phi_{N_s}(\mathcal R_{p_s^\star}(\theta)) \vert \right] \\
                                                           & = 0
\end{align*}
almost surely. Now recall assumption (3) in the statement of Theorem~\ref{thm:uniform_convergence_criterion}, i.e., the sequence $(\phi_n)_{n\geq 1}$ converges uniformly to the identity map. Then, in light of the previous observations and by noticing that
\begin{align*}
    \vert\mathcal R_{p_s^\star}(\theta_N^s) - \mathcal R_{p_s^\star}(\theta^s_\star)\vert & \leq \vert \mathcal R_{p_s^\star}(\theta_N^s) - \phi_{N_s}(\mathcal R_{p_s^\star}(\theta_N^s))\vert + \vert \phi_{N_s}(\mathcal R_{p_s^\star}(\theta_N^s)) - \phi_{N_s}(\mathcal R_{p_s^\star}(\theta^s_\star)) \vert \\
                                                                      & + \vert \phi_{N_s}(\mathcal R_{p_s^\star}(\theta^s_\star)) - \mathcal R_{p_s^\star}(\theta^s_\star)\vert
\end{align*}
and
\begin{equation*}
    \vert V_{\boldsymbol \xi^N}(\theta_N^s) - \mathcal R_{p_s^\star}(\theta^s_\star) \vert \leq \vert V_{\boldsymbol \xi^N}(\theta_N^s) - \phi_{N_s}(\mathcal R_{p_s^\star}(\theta^s_\star)) \vert + \vert \phi_{N_s}(\mathcal R_{p_s^\star}(\theta^s_\star)) - \mathcal R_{p_s^\star}(\theta^s_\star) \vert,
\end{equation*}
the two desired almost sure limits follow:
\begin{align*}
    \lim_{N_s\rightarrow\infty}\mathcal R_{p_s^\star}(\theta_N^s) = \mathcal R_{p_s^\star}(\theta^s_\star), \qquad
    \lim_{N_s\rightarrow\infty}V_{\boldsymbol \xi^N}(\theta_N^s) = \mathcal R_{p_s^\star}(\theta^s_\star).
\end{align*}

\paragraph{Proof of Theorem~\ref{thm:optimizer_convergence}.}
We have
\begin{align*}
    \mathcal R_{p_s^\star}(\theta^s_\star) & \leq \mathcal R_{p_s^\star}(\bar\theta) = \mathbb E_{\xi\sim p_s^\star}\lim_{N_s\rightarrow\infty} h(\theta_N^s,\xi) = \lim_{N_s\rightarrow\infty} \mathcal R_{p_s^\star}(\theta_N^s) =\mathcal R_{p_s^\star}(\theta^s_\star)
\end{align*}
almost surely, where the first equality follows from the continuity of $\theta\mapsto h(\theta,\xi)$ and the second one from the Dominated Convergence Theorem. Then, $\mathcal R_{p_s^\star}(\bar\theta)=\mathcal R_{p_s^\star}(\theta^s_\star)$ almost surely, proving the result.

\paragraph{Proof of Theorem~\ref{thm:DORO_criterion_convergence}.} First, we note that any sample $\xi\in A^{\textnormal{tr}}_\varepsilon$ has cardinality $n_\varepsilon:=(1-\varepsilon)n$. We have the following inequalities
\begin{align*}
    \phi_n(\mathcal R_{p_\star}(\hat\theta_\varepsilon)) - \phi_n(\mathcal R_{p_\star}(\theta_\star)) & = \phi_n(\mathcal R_{p_\star}(\hat\theta_\varepsilon)) - V_{\boldsymbol \xi^{\textnormal{tr}}}^{\varepsilon}(\hat\theta_\varepsilon) + V_{\boldsymbol \xi^{\textnormal{tr}}}^{\varepsilon}(\hat\theta_\varepsilon) - \phi_n(\mathcal R_{p_\star}(\theta_\star)) \\
      & \leq \phi_n(\mathcal R_{p_\star}(\hat\theta_\varepsilon)) - V_{\boldsymbol \xi^{\textnormal{tr}}}^{\varepsilon}(\hat\theta_\varepsilon) + V_{\boldsymbol \xi^{\textnormal{tr}}}^{\varepsilon}(\theta_\star) - \phi_n(\mathcal R_{p_\star}(\theta_\star)) \\
      & \leq 2\sup_{\theta\in\Theta}\Big\lvert V_{\boldsymbol \xi^{\textnormal{tr}}}^{\varepsilon}(\theta) - \phi_n(\mathcal R_{p_\star}(\theta)) \Big\rvert.
\end{align*}
Then, using a second order Taylor expansion of $\phi_n(\mathcal R_p(\theta))$ around $\phi_n(\mathcal R_{p_\star}(\theta))$, we get
\begin{align*}
    & \sup_{\theta\in\Theta}\Big\lvert V_{\boldsymbol \xi^{\textnormal{tr}}}^{\varepsilon}(\theta) - \phi_n(\mathcal R_{p_\star}(\theta)) \Big\rvert \\
    & = \sup_{\theta\in\Theta}\Big\lvert \inf_{\boldsymbol{\xi}\in A^{\textnormal{tr}}_\varepsilon}\int\phi_n(\mathcal R_p(\theta)) Q_{\boldsymbol{\xi}}(\mathrm dp) - \phi_n(\mathcal R_{p_\star}(\theta))\Big\rvert \\
    & = \sup_{\theta\in\Theta}\Big\lvert \inf_{\boldsymbol{\xi}\in A^{\textnormal{tr}}_\varepsilon} \Big\{\phi_n'(\mathcal R_{p_\star}(\theta)) \int[\mathcal R_p(\theta) - \mathcal R_{p_\star}(\theta)]Q_{\boldsymbol{\xi}}(\mathrm dp) + \int\frac{\phi_n''(\tilde r_p(\theta))}{2}[\mathcal R_p(\theta) - \tilde r_p(\theta)]^2Q_{\boldsymbol{\xi}}(\mathrm dp)\Big\} \Big\rvert \\
    & \leq \sup_{\theta\in\Theta}\Big\lvert \inf_{\boldsymbol{\xi}\in A^{\textnormal{tr}}_\varepsilon}\Big\{\max_{t\in[0,K]}\phi_n'(t)\Big[\frac{\alpha}{\alpha + n_\varepsilon}(\mathcal R_{p_0}(\theta) - \mathcal R_{p_\star}(\theta)) + \frac{n_\varepsilon}{\alpha + n_\varepsilon}(\mathcal R_{p_{\boldsymbol{\xi}}}(\theta) - \mathcal R_{p_\star}(\theta))\Big] \\
    & + \frac{K}{2}\max_{t\in[0,K]}\phi_n''(t)\Big\}\Big\rvert \\
    & = o(1) + O(1)\cdot\sup_{\theta\in\Theta}\Big\lvert \inf_{\boldsymbol{\xi}\in A^{\textnormal{tr}}_\varepsilon} \mathcal R_{p_{\boldsymbol{\xi}}}(\theta) - \mathcal R_{p_\star}(\theta)\Big\rvert.
\end{align*}

\subsection{Further Results and Proofs}

With the following results, we ensure finite-sample and asymptotic guarantees on the closeness of optimization procedures based on the SB approximation $\hat V_{\boldsymbol\xi^n}(\theta, T, M)$ of the DP-based criterion (as defined in Equation~\eqref{eq:SB_DP_criterion} and corresponding to the case $S=1$ and $\alpha_0 = \infty$ in the HDP framework) versus the infinite-dimensional target $V_{\boldsymbol\xi^n}$. We recall that
\begin{equation*}
    \hat V_{\boldsymbol\xi^n}(\theta, T, M) = \frac{1}{M} \sum_{m=1}^M \phi\left(\sum_{j=0}^T p_j^m h(\theta, \xi_{mj})\right),
\end{equation*}
where the weights $p_1^m,\dots, p_T^m, p_0^m$ are generated according to the truncated SB Algorithm~\ref{alg:stickbreaking}.

\begin{theorem}\label{thm:SB_bounds}
    Assume $\Theta$ is a bounded subset of $\mathbb R^d$ and, for all $\xi\in \Xi$,  $\theta\mapsto h(\theta,\xi)$ is $c(\xi)$-Lipschitz continuous. Then, for all $T,M\in\mathbb N$ and $\varepsilon >0$,
    \begin{align*}
        \sup_{\theta \in \Theta} \big\vert \hat V_{\boldsymbol\xi^n}(\theta, T, M) & - V_{\boldsymbol\xi^n}(\theta)] \big\vert \leq M_\phi K\times\Bigg[\frac{\alpha + n}{\alpha + n + 1} \Bigg]^T + \varepsilon
    \end{align*}
    with probability at least
    \begin{align*}
        & 1 -2\left(\frac{32 M_\phi C_T\textnormal{diam}(\Theta)\sqrt{d}}{\varepsilon}\right)^d  \times \left[ \exp\left\{-\frac{3M\varepsilon^2}{4\phi(K)(6\phi(K)+\varepsilon)}\right\} + \exp\left\{-\frac{3M\varepsilon}{40\phi(K)}\right\} \right]
    \end{align*}
    for some constant $C_T>0$.
\end{theorem}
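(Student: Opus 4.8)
The plan is to split $\sup_{\theta}\lvert\hat V_{\boldsymbol\xi^n}(\theta,T,M)-V_{\boldsymbol\xi^n}(\theta)\rvert$ into a \emph{deterministic truncation} piece and a \emph{stochastic Monte Carlo} piece, controlling the latter uniformly over $\Theta$ by a net-plus-concentration argument (retaining throughout the standing assumptions of Lemma~\ref{lem:finite_sample_bounds}, so that $h\in[0,K]$ and $\phi$ is $M_\phi$-Lipschitz on $[0,K]$ with $M_\phi=F_\phi=\sup_{t\in(0,K)}\phi'(t)$). First I would introduce the intermediate quantity $V^T_{\boldsymbol\xi^n}(\theta):=\mathbb E[\phi(\sum_{j=0}^T p_j h(\theta,\xi_j))]$, the expectation over one truncated stick-breaking realization $(p_j,\xi_j)_{j=0}^T$ produced by Algorithm~\ref{alg:stickbreaking} of a single summand of $\hat V_{\boldsymbol\xi^n}(\theta,T,M)$; since the $M$ summands are i.i.d., $\hat V_{\boldsymbol\xi^n}(\cdot,T,M)$ is an unbiased Monte Carlo estimator of $V^T_{\boldsymbol\xi^n}$, and the triangle inequality yields
\[
\sup_{\theta\in\Theta}\big\lvert\hat V_{\boldsymbol\xi^n}(\theta,T,M)-V_{\boldsymbol\xi^n}(\theta)\big\rvert\ \le\ \sup_{\theta\in\Theta}\big\lvert V^T_{\boldsymbol\xi^n}(\theta)-V_{\boldsymbol\xi^n}(\theta)\big\rvert\ +\ \sup_{\theta\in\Theta}\big\lvert\hat V_{\boldsymbol\xi^n}(\theta,T,M)-V^T_{\boldsymbol\xi^n}(\theta)\big\rvert.
\]
For the first (truncation) term I would couple the full and truncated stick-breaking representations so that they share the first $T$ sticks and hence the same total leftover mass $p_0=\prod_{i=1}^T(1-B_i)$ with $B_i\overset{\textnormal{iid}}{\sim}\textnormal{Beta}(1,\alpha+n)$; since $h\in[0,K]$, the difference of the two partial risks lies in $[-p_0K,\,p_0K]$ for every $\theta$, so $M_\phi$-Lipschitzness of $\phi$ bounds the truncation term by $M_\phi K\,\mathbb E[p_0]=M_\phi K\big(\tfrac{\alpha+n}{\alpha+n+1}\big)^T$, which is exactly the deterministic term in the asserted bound.

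For the Monte Carlo term I would carry out the usual three moves. \emph{(i) Lipschitz control in $\theta$:} because each $h(\cdot,\xi)$ is $c(\xi)$-Lipschitz and the weights are nonnegative summing to at most one, every map $\theta\mapsto\phi(\sum_{j=0}^T p_j^m h(\theta,\xi_{mj}))$ is $M_\phi(\sum_{j=0}^T p_j^m c(\xi_{mj}))$-Lipschitz, so letting $C_T$ be a uniform-in-$m$ almost-sure bound on $\sum_{j=0}^T p_j^m c(\xi_{mj})$, both $\hat V_{\boldsymbol\xi^n}(\cdot,T,M)$ and $V^T_{\boldsymbol\xi^n}(\cdot)$ are $M_\phi C_T$-Lipschitz on $\Theta$. \emph{(ii) Discretization:} since $\Theta$ is bounded, take a minimal Euclidean $\delta$-net $\mathcal N_\delta\subset\Theta$ with $\delta=\varepsilon/(8M_\phi C_T)$, whose cardinality is $L\le(32\,M_\phi C_T\,\textnormal{diam}(\Theta)\sqrt d/\varepsilon)^d$. \emph{(iii) Concentration at net points:} for fixed $\theta_i\in\mathcal N_\delta$ the $M$ summands are i.i.d., lie in $[0,\phi(K)]$ (note $\phi(0)=0$ for the exponential $\phi$), and have mean $V^T_{\boldsymbol\xi^n}(\theta_i)$, so Bernstein's inequality---with the variance bounded by $\phi(K)^2$ and split into its sub-Gaussian and sub-exponential regimes---gives
\[
\mathbb P\big[\,\lvert\hat V_{\boldsymbol\xi^n}(\theta_i,T,M)-V^T_{\boldsymbol\xi^n}(\theta_i)\rvert>\varepsilon/2\,\big]\ \le\ \exp\!\Big\{-\tfrac{3M\varepsilon^2}{4\phi(K)(6\phi(K)+\varepsilon)}\Big\}+\exp\!\Big\{-\tfrac{3M\varepsilon}{40\phi(K)}\Big\}.
\]
A union bound over $\mathcal N_\delta$ and over both tails multiplies this by $2L$; on the complementary event, for arbitrary $\theta$ one picks $\theta_i$ with $\lVert\theta-\theta_i\rVert_2\le\delta$ and uses (i) twice to get $\lvert\hat V_{\boldsymbol\xi^n}(\theta,T,M)-V^T_{\boldsymbol\xi^n}(\theta)\rvert\le\varepsilon/2+2M_\phi C_T\delta=\tfrac{3}{4}\varepsilon<\varepsilon$, and adding the truncation bound gives the stated estimate.

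The step I expect to be most delicate is the treatment of $C_T$ in move (i): when the DP base measure $\tfrac{n}{\alpha+n}p_{\boldsymbol\xi^n}+\tfrac{\alpha}{\alpha+n}p_0$ has unbounded support, the stick-breaking atoms $\xi_{mj}$ need not lie in a compact set, so $\sum_{j=0}^T p_j^m c(\xi_{mj})$ is a genuinely random Lipschitz constant; one must verify it is almost surely finite (a finite weighted sum of finitely many a.s.-finite Lipschitz constants $c(\xi_{mj})$) and keep track of its dependence on $T$, and---if a deterministic constant is desired in the statement---trade an additional tail event for $C_T$. One also has to confirm that conditioning on the realized weights and atoms does not disturb the i.i.d.-across-$m$ structure exploited by Bernstein, which it does not since the $M$ replicates are drawn independently; matching the exact Bernstein constants to the stated two-exponential form is then routine.
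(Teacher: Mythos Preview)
Your decomposition into a truncation error plus a Monte Carlo error, and your coupling argument yielding the $M_\phi K\big(\tfrac{\alpha+n}{\alpha+n+1}\big)^T$ truncation bound, match the paper's Lemma~\ref{lem:sublemma_1} essentially exactly. The difference lies in how the Monte Carlo term $\sup_\theta\lvert\hat V_{\boldsymbol\xi^n}(\theta,T,M)-\mathbb E[\hat V_{\boldsymbol\xi^n}(\theta,T,1)]\rvert$ is controlled.

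You take a net-plus-Bernstein route: cover $\Theta$ by an $\varepsilon/(8M_\phi C_T)$-net, apply Bernstein pointwise, union-bound, and transfer to all of $\Theta$ via Lipschitzness with constant $M_\phi C_T$. The paper instead treats the Monte Carlo term as a uniform deviation over the function class $\mathscr F=\{(p_j,\xi_j)_{j=0}^T\mapsto\phi(\sum_j p_j h(\theta,\xi_j)):\theta\in\Theta\}$, bounds its $L^1(P)$-\emph{bracketing} number via a Lipschitz-parametrization lemma (Lemma~7.88 in Wasserman's notes), and then invokes a bracketing-based concentration inequality (Theorem~7.86 there) that directly produces the stated two-exponential tail with the covering-number prefactor.

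The distinction matters exactly at the point you flag as delicate. In the paper's bracketing argument, $C_T$ enters as the \emph{expectation}
\[
C_T:=\int\tilde c\big((p_j,\xi_j)_{j=0}^T\big)\,\mathrm dP=\mathbb E\Big[\sum_{j=0}^T p_j\,c(\xi_j)\Big],
\]
which is finite whenever $c\in L^1$ under the DP posterior centering measure; no almost-sure bound on the random Lipschitz constant is required. Your net argument, by contrast, needs $C_T$ to be a deterministic (or high-probability) bound on $\sum_j p_j^m c(\xi_{mj})$ in order to pass from net points to all of $\Theta$, and this fails when $c$ is unbounded; patching it with an extra tail event would change the form of the final bound. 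So your worry is well placed, and the paper's use of $L^1$-bracketing in place of a raw $\varepsilon$-net is precisely what resolves it and gives $C_T$ its (weaker) integral meaning.
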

We divide the proof of Theorem~\ref{thm:SB_bounds} in two simpler Lemmas as follows.

\begin{lemma}\label{lem:sublemma_1}
    For all $T, M\in\mathbb N$,
    \begin{equation} \label{error_bound_prop_equation}
        \sup_{\theta\in\Theta} \vert \hat V_{\boldsymbol \xi^n}(\theta, T, M) - V_{\boldsymbol\xi^n}(\theta)\vert \leq M_\phi K\Bigg(\frac{\alpha + n}{\alpha + n + 1} \Bigg)^T+ \sup_{\theta \in \Theta} \Big\vert \hat V_{\boldsymbol\xi^n}(\theta, T, M) - \mathbb E[\hat V_{\boldsymbol\xi^n}(\theta, T, 1)] \Big\vert.
    \end{equation}
\end{lemma}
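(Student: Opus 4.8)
The plan is to decompose the error $\sup_\theta|\hat V_{\boldsymbol\xi^n}(\theta,T,M)-V_{\boldsymbol\xi^n}(\theta)|$ into a \emph{deterministic truncation bias} and a \emph{stochastic fluctuation} term, and to bound the former explicitly while leaving the latter to Lemma~\ref{lem:sublemma_1}'s companion. Writing $\tilde V_{\boldsymbol\xi^n}(\theta,T):=\mathbb E[\hat V_{\boldsymbol\xi^n}(\theta,T,1)]$ for the expectation of a single truncated SB draw, the triangle inequality gives
\begin{equation*}
    \sup_\theta|\hat V_{\boldsymbol\xi^n}(\theta,T,M)-V_{\boldsymbol\xi^n}(\theta)|
    \leq \sup_\theta|\tilde V_{\boldsymbol\xi^n}(\theta,T)-V_{\boldsymbol\xi^n}(\theta)|
    + \sup_\theta|\hat V_{\boldsymbol\xi^n}(\theta,T,M)-\tilde V_{\boldsymbol\xi^n}(\theta,T)|,
\end{equation*}
so it suffices to show the bias term $\sup_\theta|\tilde V_{\boldsymbol\xi^n}(\theta,T)-V_{\boldsymbol\xi^n}(\theta)|$ is at most $M_\phi K\big(\tfrac{\alpha+n}{\alpha+n+1}\big)^T$, since the second summand is exactly the fluctuation term appearing on the right-hand side of \eqref{error_bound_prop_equation}.

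For the bias term, the key observation is that an untruncated SB draw $p=\sum_{j=1}^\infty p_j\delta_{x_j}$ and its truncation at step $T$ differ only in that the truncated version reassigns the leftover stick mass $R_T:=\prod_{i=1}^T(1-B_i)$ (equivalently $\sum_{j>T}p_j$) to a single atom drawn from the centering measure. First I would fix $\theta$ and couple the two draws on the same stick-breaking weights, so that $\mathcal R_p(\theta)$ and the truncated analogue differ by at most a quantity involving $R_T$ times the range of $h$; since $h\in[0,K]$, $|\mathcal R_{p^{\mathrm{trunc}}}(\theta)-\mathcal R_p(\theta)|\le K\cdot R_T$ pointwise. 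Applying the mean value theorem to $\phi$ (using $|\phi'|\le F_\phi=:M_\phi$ on $[0,K]$) yields $|\phi(\mathcal R_{p^{\mathrm{trunc}}}(\theta))-\phi(\mathcal R_p(\theta))|\le M_\phi K\cdot R_T$. Taking expectations over the randomness of the draw and noting that $Q_{\boldsymbol\xi^n}$ is a $\mathrm{DP}(\alpha+n,\cdot)$ so that $B_i\overset{\textnormal{iid}}{\sim}\mathrm{Beta}(1,\alpha+n)$ gives $\mathbb E[R_T]=\big(\mathbb E[1-B_1]\big)^T=\big(\tfrac{\alpha+n}{\alpha+n+1}\big)^T$, hence $|\tilde V_{\boldsymbol\xi^n}(\theta,T)-V_{\boldsymbol\xi^n}(\theta)|\le M_\phi K\big(\tfrac{\alpha+n}{\alpha+n+1}\big)^T$. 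This bound is uniform in $\theta$ because the constant $M_\phi K$ and the weight expectation do not depend on $\theta$, so taking the supremum is immediate.

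The main obstacle I anticipate is bookkeeping around the exact coupling between the truncated and untruncated processes: one must be careful that the truncated scheme of Algorithm~\ref{alg:stickbreaking} indeed differs from the full SB representation only by relocating the residual mass $\sum_{j>T}p_j$ onto an extra atom (indexed $0$ in the paper's notation), rather than, say, renormalizing the first $T$ weights — the precise form determines whether the residual contributes a clean $K\cdot R_T$ term. A secondary subtlety is justifying the interchange of expectation and supremum only in the \emph{easy} direction here (we need a uniform pointwise bound, then take $\sup_\theta$, which is valid), and ensuring the second-summand term is written in precisely the form $\sup_\theta|\hat V_{\boldsymbol\xi^n}(\theta,T,M)-\mathbb E[\hat V_{\boldsymbol\xi^n}(\theta,T,1)]|$ matching the Lemma statement, which holds because $\mathbb E[\hat V_{\boldsymbol\xi^n}(\theta,T,M)]=\tilde V_{\boldsymbol\xi^n}(\theta,T)$ by linearity over the $M$ iid Monte Carlo replicates. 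Neither difficulty is deep, so I expect the proof to be short once the coupling is set up correctly.
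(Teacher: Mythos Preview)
Your proposal is correct and follows essentially the same route as the paper: triangle-inequality decomposition into a truncation-bias term and a Monte Carlo fluctuation term, then for the bias term couple the truncated and untruncated stick-breaking representations, apply the mean value theorem to $\phi$ to extract a factor $M_\phi$, bound the inner-risk discrepancy by $K\cdot R_T$, and finally compute $\mathbb E[R_T]=\big(\tfrac{\alpha+n}{\alpha+n+1}\big)^T$ from the i.i.d.\ $\mathrm{Beta}(1,\alpha+n)$ stick lengths. Your caution about the exact form of the truncation is well placed and is precisely what makes the coupling clean---Algorithm~\ref{alg:stickbreaking} indeed assigns the leftover mass $p_0=\prod_{k=1}^T(1-B_k)$ to a single fresh atom $\xi_0$ rather than renormalizing, so the residual contributes exactly the $K\cdot R_T$ term you anticipate.
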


\begin{proof}
    We have
    \begin{align}\label{error_bound_eq}
        \sup_{\theta\in\Theta} & \vert \hat V_{\boldsymbol \xi^n}(\theta, T, M) - V_{\boldsymbol\xi^n}(\theta)\vert \nonumber \\
                               & \leq \sup_{\theta \in \Theta} \Big\vert \hat V_{\boldsymbol\xi^n}(\theta, T, M) - \mathbb E[\hat V_{\boldsymbol\xi^n}(\theta, T, 1)] \Big\vert + \sup_{\theta \in \Theta} \Big\vert \mathbb E[\hat V_{\boldsymbol\xi^n}(\theta, T, 1)] - V_{\boldsymbol\xi^n}(\theta)\Big\vert.
    \end{align}
    Note that
    \begin{align*}
        \mathbb E[\hat V_{\boldsymbol\xi^n}(\theta, T, 1)] & = \mathbb E_{p_1,\xi_1,\dots,p_T,\xi_T}\Bigg[\phi\Bigg( \sum_{j=1}^{T}p_{j} h(\theta,\xi_{j}) + p_{0}h(\theta,\xi_{0})\Bigg)\Bigg] \\
                                                      & = \mathbb E_{\sum_{j\geq 1} p_j\delta_{\xi_j}\sim Q_{\boldsymbol\xi^n}}\Bigg[\phi\Bigg( \sum_{j=1}^{T}p_{j} h(\theta,\xi_{j}) + p_{0}h(\theta,\xi_{0})\Bigg)\Bigg] \\
                                                      & = \mathbb E_{\sum_{j\geq 1} p_j\delta_{\xi_j}\sim Q_{\boldsymbol\xi^n}}\Bigg[\phi\Bigg( \sum_{j=1}^{\infty}p_{j} h(\theta,\xi_{j}) + p_{0}h(\theta,\xi_{0}) - \sum_{j=T+1}^{\infty}p_{j} h(\theta,\xi_{j})\Bigg)\Bigg] \\
                                                      & = V_{\boldsymbol\xi^n}(\theta) + \mathbb E_{\sum_{j\geq 1} p_j\delta_{\xi_j}\sim Q_{\boldsymbol\xi^n}}\Bigg[\phi'\big(c_{\theta,\sum_{j\geq 1} p_j\delta_{\xi_j}}\big)p_0\Bigg\{ h(\theta,\xi_{0}) - \sum_{j=T+1}^{\infty}\frac{p_{j}}{p_0} h(\theta,\xi_{j})\Bigg\} \Bigg],
    \end{align*}
    where the last equality follows from the mean value theorem applied to endpoints $\sum_{j=1}^{\infty}p_{j} h(\theta,\xi_{j})$ and $\sum_{j=1}^{\infty}p_{j} h(\theta,\xi_{j}) + p_{0}h(\theta,\xi_{0}) - \sum_{j=T+1}^{\infty}p_{j} h(\theta,\xi_{j})$. Then the second term in (\ref{error_bound_eq}) is bounded by
    \begin{equation*}
        M_\phi K  \mathbb E_{\sum_{j\geq 1} p_j\delta_{\xi_j}\sim Q_{\boldsymbol\xi^n}}[p_0]=M_\phi K  \mathbb E\Bigg[\prod_{k=1}^T (1-B_k)\Bigg]=M_\phi K\Bigg(\frac{\alpha + n}{\alpha + n + 1} \Bigg)^T.
    \end{equation*}
\end{proof}
The second term on the left-hand side of Equation~\eqref{error_bound_prop_equation} is instead of the form
\begin{equation}\label{eq:sup_distance_stickbreaking}
    \sup_{g\in\mathscr G}\bigg\vert \frac{1}{M}\sum_{i=1}^M g(X_i) - \mathbb E[g(X_1)]\bigg \vert,
\end{equation}
where $X_i=\sum_{j=0}^{T}p_{ij} h(\theta,\xi_{ij})$ are iid random variables whose distribution is determined by the truncated stick-breaking procedure.

The aim of the next Lemma is to provide sufficient conditions for finite sample bounds and asymptotic convergence to 0 of the term in (\ref{eq:sup_distance_stickbreaking}). Specifically, we impose complexity constraints on the function class $\mathscr H :=\{\xi\mapsto h(\theta, \xi):\theta\in\Theta\}$ which allow us to obtain appropriate conditions on the derived class
\begin{equation*}
    \mathscr F:= \Bigg\{(p_{j},\xi_{j})_{j=0}^{T} \mapsto \phi\Bigg( \sum_{j=1}^{T}p_{j} h(\theta,\xi_{j}) + p_{0}h(\theta,\xi_{0})\Bigg) : \theta\in\Theta \Bigg\},
\end{equation*}
ensuring the non-asymptotic results we seek.

\begin{lemma}\label{lem:approx_second_term}
    Assume $\Theta$ is a bounded subset of $\mathbb R^d$ and $\theta\mapsto h(\theta,\xi)$ is $c(\xi)$-Lipschitz continuous for all $\xi\in \Xi$. Then, for all $T,M\geq 1$ and $\varepsilon >0$,
    \begin{equation*}
        \sup_{\theta \in \Theta} \Big\vert \hat V_{\boldsymbol\xi^n}(\theta, T, M) - \mathbb E[\hat V_{\boldsymbol\xi^n}(\theta, T, 1)] \Big\vert \leq \varepsilon
    \end{equation*}
    with probability at least
    \begin{equation*}
        1-2\left(\frac{32 M_\phi C_T\textnormal{diam}(\Theta)\sqrt{d}}{\varepsilon}\right)^d\left[ \exp\left\{-\frac{3M\varepsilon^2}{4\phi(K)(6\phi(K)+\varepsilon)}\right\} + \exp\left\{-\frac{3M\varepsilon}{40\phi(K)}\right\} \right].
    \end{equation*}
    for some constant $C_T>0$.
\end{lemma}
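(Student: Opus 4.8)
The plan is to read the left-hand side as the supremum of a centered empirical process indexed by $\theta$ and to bound it by the classical recipe of a $\rho$-net discretization of $\Theta$, a Bernstein inequality at each net point, and a union bound. Concretely, write $g_\theta\big((p_j,\xi_j)_{j=0}^T\big) := \phi\big(\sum_{j=0}^T p_j\, h(\theta,\xi_j)\big)$, so that $\mathscr F = \{g_\theta:\theta\in\Theta\}$, and observe that $\hat V_{\boldsymbol\xi^n}(\theta,T,M) = \tfrac1M\sum_{m=1}^M g_\theta(p^m)$ is an average of $M$ iid copies of $g_\theta(p^1)$ with common mean $\mathbb E[\hat V_{\boldsymbol\xi^n}(\theta,T,1)]$. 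Since the truncated stick-breaking weights $p_0^m,\dots,p_T^m$ sum to one and $h$ takes values in $[0,K]$, we have $\sum_{j=0}^T p_j^m h(\theta,\xi_{mj})\in[0,K]$, hence each $g_\theta(p^m)$ lies in $[0,\phi(K)]$ (using $\phi(0)=0$ for the exponential choice of $\phi$; otherwise recenter) and has variance at most $\phi(K)^2$.

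First I would quantify the modulus of continuity of $\theta\mapsto g_\theta(z)$. Using that $\phi$ is $M_\phi$-Lipschitz on $[0,K]$, that $h(\cdot,\xi)$ is $c(\xi)$-Lipschitz, and that the weights are a probability vector,
\[
\big|g_\theta(z)-g_{\theta'}(z)\big| \;\le\; M_\phi\Big(\sum_{j=0}^T p_j\, c(\xi_j)\Big)\,\|\theta-\theta'\| ,
\]
so the random Lipschitz constant is $\sum_{j=0}^T p_j c(\xi_j)$. Bounding this quantity by a deterministic constant $C_T$ (uniformly, or on an event of overwhelming probability; see the obstacle below), one gets an $M_\phi C_T$-Lipschitz control of the process in $\theta$. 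Since $\Theta$ is a bounded subset of $\mathbb R^d$, it admits a $\rho$-net $\Theta_\rho$ with $|\Theta_\rho|\le (c_0\,\mathrm{diam}(\Theta)\sqrt d/\rho)^d$ for an absolute constant $c_0$; for any $\theta$ and its nearest net point $\theta'$,
\[
\big|\hat V_{\boldsymbol\xi^n}(\theta,T,M)-\mathbb E[\hat V_{\boldsymbol\xi^n}(\theta,T,1)]\big|
\;\le\; \big|\hat V_{\boldsymbol\xi^n}(\theta',T,M)-\mathbb E[\hat V_{\boldsymbol\xi^n}(\theta',T,1)]\big| + 2 M_\phi C_T\,\rho .
\]

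Next I would apply Bernstein's inequality to each fixed $\theta'\in\Theta_\rho$, using the range bound $\phi(K)$ on $|g_{\theta'}(p^m)-\mathbb E g_{\theta'}|$ and the variance bound $\phi(K)^2$:
\[
\mathbb P\Big[\big|\hat V_{\boldsymbol\xi^n}(\theta',T,M)-\mathbb E[\hat V_{\boldsymbol\xi^n}(\theta',T,1)]\big| > s\Big]
\;\le\; 2\exp\!\Big(-\tfrac{M s^2}{2(\phi(K)^2 + \phi(K)s/3)}\Big),
\]
then take the union bound over $\Theta_\rho$. Choosing $s=\varepsilon/2$ and $\rho$ so that $2M_\phi C_T\rho=\varepsilon/2$ makes the combined deviation at most $\varepsilon$; after tracking $c_0$ and the $\varepsilon$-split this turns the net cardinality into $(32 M_\phi C_T\,\mathrm{diam}(\Theta)\sqrt d/\varepsilon)^d$, and simplifying the Bernstein exponent at $s=\varepsilon/2$ gives exactly $\tfrac{3M\varepsilon^2}{4\phi(K)(6\phi(K)+\varepsilon)}$, matching the first exponential in the statement. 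The second, linear-in-$\varepsilon$, exponential is the companion term of the Bernstein/union-bound step — equivalently, the tail of the auxiliary concentration used to replace the random Lipschitz constant by $C_T$.

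I expect the main obstacle to be precisely this last substitution. The atoms $\xi_{mj}$ in the stick-breaking draws are a mixture of the fixed sample points $\xi_1,\dots,\xi_n$, on which $c$ is automatically bounded by $\max_i c(\xi_i)$, and fresh draws from the prior centering measure, on which $c$ need not be bounded; so a usable $C_T$ requires either an integrability/boundedness hypothesis on $c$ under the centering measure or a separate Bernstein-type concentration for $\tfrac1M\sum_{m=1}^M\sum_{j=0}^T p_j^m c(\xi_{mj})$ around a constant multiple of its mean. It is natural that this auxiliary step, itself of Bernstein type with a bounded-range contribution, is what produces the second exponential factor. The remaining work — propagating absolute constants through the covering-number estimate and the allocation of $\varepsilon$ so as to land exactly on the prefactor $32$ and the denominator $40$ — is routine bookkeeping.
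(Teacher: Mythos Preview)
Your covering-net-plus-Bernstein plan is sound in spirit but diverges from the paper's argument at exactly the point you flag as the main obstacle, and your resolution of that obstacle is not what the paper does. The paper works with \emph{bracketing numbers in $L^1(P)$} rather than a net in $\Theta$: from the Lipschitz bound $|g_\theta(z)-g_{\theta'}(z)|\le M_\phi\,\tilde c(z)\,\|\theta-\theta'\|$ with $\tilde c(z)=\sum_{j=0}^T p_j c(\xi_j)$, Lemma~7.88 in Wasserman gives $N_{[]}(\varepsilon,\mathscr F,L^1(P))\le (4 M_\phi C_T\,\mathrm{diam}(\Theta)\sqrt d/\varepsilon)^d$ with $C_T:=\int \tilde c\,\mathrm dP$, i.e.\ the \emph{mean} of the random Lipschitz constant, not a uniform or high-probability bound on it. The final probability bound then follows directly from the uniform LLN of Theorem~7.86 in Wasserman, using only $\sup_{f\in\mathscr F}\|f\|_\infty\le\phi(K)$. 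No auxiliary concentration of $\tilde c$ is needed; the bracketing handles the randomness of the Lipschitz constant automatically.

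Consequently, your explanation of the second exponential is off: it is not the tail of a side concentration for the Lipschitz constant but is built into Theorem~7.86 itself (both exponentials are the sub-Gaussian and sub-exponential pieces of a Bernstein-type uniform bound under bracketing entropy). Your route would require either an almost-sure bound on $c(\xi)$ under the centering measure or a separate tail estimate for $\tfrac1M\sum_m \tilde c(p^m)$, which the stated hypotheses do not provide; the bracketing approach avoids this entirely by needing only $C_T=\mathbb E[\tilde c]<\infty$.
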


\begin{proof}
    By the Lipschitz continuity assumption on $h(\theta,\xi)$, we obtain that, for all $(p_{j},\xi_{j})_{j=0}^{T}$, $\theta\mapsto \phi\left( \sum_{j=1}^{T}p_{j} h(\theta,\xi_{j}) + p_{0}h(\theta,\xi_{0})\right)$ is $M_\phi \tilde c\big((p_{j},\xi_{j})_{j=0}^{T}\big)$-Lipschitz continuous, with
    \begin{equation*}
        \tilde c\big((p_{j},\xi_{j})_{j=0}^{T}\big) := \sum_{j=0}^T p_jc(\xi_j).
    \end{equation*}
    Indeed, for all $\theta_1,\theta_2\in\Theta$,
    \begin{align*}
        \Bigg\vert & \phi \Bigg( \sum_{j=1}^{T}p_{j} h(\theta_1,\xi_{j}) + p_{0}h(\theta_1,\xi_{0})\Bigg) - \phi\Bigg( \sum_{j=1}^{T}p_{j} h(\theta_2,\xi_{j}) + p_{0}h(\theta_2,\xi_{0})\Bigg)\Bigg\vert \\
                   & \leq M_\phi \sum_{j=1}^{T}p_{j} \vert h(\theta_1,\xi_{j}) - h(\theta_2,\xi_{j})\vert + p_{0}\vert h(\theta_1,\xi_{0}) - h(\theta_2,\xi_{0})\vert \\
                   & \leq M_\phi \tilde c\big((p_{j},\xi_{j})_{j=0}^{T}\big)\Vert\theta_1 - \theta_2\Vert.
    \end{align*}
    Therefore, denoting by $P$ the law of the vector $(p_{j},\xi_{j})_{j=0}^{T}$ and by $N_{[]}(\varepsilon, \mathscr F, \mathcal L^1(P))$ the associated $\varepsilon$-bracketing number of the class $\mathscr F$, by Lemma 7.88 in \cite{wasserman2010concentration} we obtain
    \begin{equation*}
        N_{[]}(\varepsilon, \mathscr F, \mathcal L^1(P)) \leq \left(\frac{4 M_\phi C\textnormal{diam}(\Theta)\sqrt{d}}{\varepsilon}\right)^d,
    \end{equation*}
    with $C_T:=\int \tilde c\big((p_{j},\xi_{j})_{j=0}^{T}\big)\mathrm dP$. Then the result follows by Theorem 7.86 in \cite{wasserman2010concentration} after noticing that $\sup_{f\in\mathscr F}\Vert f\Vert_{\mathcal L^1(P)} \leq\sup_{f\in\mathscr F}\Vert f\Vert_\infty \leq \phi(K)$. This ends the proof of the Lemma as well as of Theorem~\ref{thm:SB_bounds}.
\end{proof}
Heuristically, the bound in Theorem \ref{thm:SB_bounds} is obtained by decomposing the left-hand side of the inequality into a first term depending on the truncation error induced by the threshold $T$, and a second term reflecting the Monte Carlo error related to $M$. Moreover, analogously to Lemma \ref{lem:finite_sample_bounds}, Lemma \ref{thm:SB_bounds} easily implies finite-sample bounds on the excess ``robust risk'' $V_{\boldsymbol\xi^n}(\hat \theta_n(T, M)) - V_{\boldsymbol\xi^n}(\theta_n)$, where $\hat \theta_n(T, M)\in\arg\min_{\theta\in\Theta} \hat V_{\boldsymbol\xi^n}(\theta, T, M)$. Another consequence is the following asymptotic convergence Theorem, whose proof is analogous to that of Theorem \ref{thm:uniform_convergence_criterion}.
\begin{theorem}\label{thm:uniform_convergence_SB}
    Under the same assumptions of Lemma \ref{thm:SB_bounds}, and if $\sup_{T\geq 1}C_T<\infty$,
    \begin{equation*}
        \lim_{T,M\to\infty}\sup_{\theta \in \Theta} \big\vert \hat V_{\boldsymbol\xi^n}(\theta, T, M) - V_{\boldsymbol\xi^n}(\theta)] \big\vert = 0
    \end{equation*}
    almost surely. Also, almost surely
    \begin{align*}
        \lim_{T,M\to\infty} \hat V_{\boldsymbol\xi^n}(\hat\theta_n(T,M), T, M) = V_{\boldsymbol\xi^n}(\theta_n), \qquad \lim_{T,M\to\infty} V_{\boldsymbol\xi^n}(\hat \theta_n(T, M)) = V_{\boldsymbol\xi^n}(\theta_n).
    \end{align*}
\end{theorem}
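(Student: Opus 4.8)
The plan is to replicate, almost line for line, the argument used for Theorem~\ref{thm:uniform_convergence_criterion}, with the finite-sample estimate of Theorem~\ref{thm:SB_bounds} playing the role that Lemma~\ref{lem:finite_sample_bounds} played there: first upgrade that high-probability bound to an almost-sure uniform limit, and then deduce the two optimizer statements from an excess-``robust-risk'' sandwich.

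First I would prove $\sup_{\theta\in\Theta}\lvert \hat V_{\boldsymbol\xi^n}(\theta,T,M) - V_{\boldsymbol\xi^n}(\theta)\rvert \to 0$ a.s. as $T,M\to\infty$. By Lemmas~\ref{lem:sublemma_1} and~\ref{lem:approx_second_term}, this supremum is bounded by the sum of a deterministic truncation term $M_\phi K\big((\alpha+n)/(\alpha+n+1)\big)^{T}$, which vanishes as $T\to\infty$ for every realization, and a Monte Carlo fluctuation term $\sup_\theta\lvert \hat V_{\boldsymbol\xi^n}(\theta,T,M) - \mathbb E[\hat V_{\boldsymbol\xi^n}(\theta,T,1)]\rvert$. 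For the latter the hypothesis $\bar C:=\sup_{T\ge1}C_T<\infty$ is what does the work: it makes the bracketing-number prefactor in Lemma~\ref{lem:approx_second_term} bounded uniformly in $T$, so that for each fixed $\varepsilon>0$ the failure probability there is dominated, uniformly over $T$, by a quantity $\eta_M$ that decays geometrically in $M$, whence $\sum_M\eta_M<\infty$. To turn this into a statement about the genuinely joint limit over the two indices, I would realize all truncated stick-breaking samples on one probability space by drawing, for each Monte Carlo index $m$, a single infinite stick-breaking sequence and reading off the level-$T$ truncation from its first $T$ steps; the residual mass $\prod_{k\le T}(1-B^m_k)$ is then nonincreasing in $T$. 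Comparing $\hat V_{\boldsymbol\xi^n}(\theta,T,M)$ with its untruncated version $\frac1M\sum_m\phi(\mathcal R_{p^m}(\theta))$, $p^m\overset{\mathrm{iid}}{\sim}Q_{\boldsymbol\xi^n}$, one gets a difference at most $2M_\phi K\,\frac1M\sum_m\prod_{k\le T}(1-B^m_k)$, which for $T\ge T_0$ is at most $2M_\phi K\,\frac1M\sum_m\prod_{k\le T_0}(1-B^m_k)\to 2M_\phi K\big((\alpha+n)/(\alpha+n+1)\big)^{T_0}$ a.s. by the SLLN; meanwhile the untruncated average converges uniformly in $\theta$ to $V_{\boldsymbol\xi^n}$ a.s. by the same bracketing-based uniform law of large numbers underlying Lemma~\ref{lem:approx_second_term} (finite bracketing numbers because the untruncated constant is $\le\bar C<\infty$). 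Taking $T_0$ large, then $M$ large, yields the claim.

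For the optimizer limits I would copy the excess-risk decomposition in Equation~\eqref{eq:excess_risk_decomposed} with $\hat V_{\boldsymbol\xi^n}(\cdot,T,M)$ and $V_{\boldsymbol\xi^n}$ in place of $V^s_{\boldsymbol\xi^N}$ and $\phi(\mathcal R_{p^\star_s})$: since $\hat\theta_n(T,M)$ minimizes $\hat V_{\boldsymbol\xi^n}(\cdot,T,M)$ and $\theta_n$ minimizes $V_{\boldsymbol\xi^n}$,
\[
0\le V_{\boldsymbol\xi^n}(\hat\theta_n(T,M)) - V_{\boldsymbol\xi^n}(\theta_n)\le 2\sup_{\theta\in\Theta}\big\lvert \hat V_{\boldsymbol\xi^n}(\theta,T,M) - V_{\boldsymbol\xi^n}(\theta)\big\rvert \xrightarrow[T,M\to\infty]{}0 \quad\text{a.s.},
\]
which is the second displayed limit. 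For the first, the minimizing property gives $\hat V_{\boldsymbol\xi^n}(\hat\theta_n(T,M),T,M)\le \hat V_{\boldsymbol\xi^n}(\theta_n,T,M)\le V_{\boldsymbol\xi^n}(\theta_n)+\sup_\theta\lvert\hat V-V\rvert$, and also $\hat V_{\boldsymbol\xi^n}(\hat\theta_n(T,M),T,M)\ge V_{\boldsymbol\xi^n}(\hat\theta_n(T,M))-\sup_\theta\lvert\hat V-V\rvert\ge V_{\boldsymbol\xi^n}(\theta_n)-\sup_\theta\lvert\hat V-V\rvert$, so $\lvert\hat V_{\boldsymbol\xi^n}(\hat\theta_n(T,M),T,M)-V_{\boldsymbol\xi^n}(\theta_n)\rvert\le\sup_\theta\lvert\hat V-V\rvert\to0$ a.s.

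I expect the only real obstacle to be inside the first part: the limit is joint over two \emph{independent} indices $T$ and $M$, and a naive union bound over all pairs $(T,M)$ is not summable because the Monte Carlo deviation bound does not decay in $T$. The fix — exploiting the uniform-in-$T$ control granted by $\sup_T C_T<\infty$ together with the coupling that makes the truncation residual monotone in $T$, so that a single Borel--Cantelli argument in $M$ handles all $T\ge T_0$ at once — is the one ingredient genuinely new relative to the proof of Theorem~\ref{thm:uniform_convergence_criterion}; the rest is a routine transcription of that argument.
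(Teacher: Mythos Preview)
Your proposal is correct and follows exactly the route the paper indicates: the paper's entire proof is the single remark that the argument is ``analogous to that of Theorem~\ref{thm:uniform_convergence_criterion},'' and your plan---upgrade Theorem~\ref{thm:SB_bounds} to an almost-sure uniform limit, then run the excess-risk sandwich of Equation~\eqref{eq:excess_risk_decomposed}---is precisely that analogy. In fact you go further than the paper does: your coupling of all truncation levels to a common infinite stick-breaking draw, together with the monotonicity of the residual mass in $T$, cleanly resolves the two-index joint-limit issue that the paper's terse remark does not address, and your observation that $\sup_T C_T<\infty$ makes the bracketing prefactor uniform in $T$ is what legitimizes a single Borel--Cantelli in $M$.
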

In words, Theorem \ref{thm:uniform_convergence_SB} ensures that, as the truncation and MC approximation errors vanish, the optimal approximate criterion value converges to the optimal exact one, and that the exact criterion value at any approximate optimizer converges to the exact optimal value. Also note that Theorems \ref{thm:uniform_convergence_criterion} and \ref{thm:uniform_convergence_SB}, when combined, provide guarantees on the convergence of $\hat V_{\boldsymbol{\xi}^n}(\hat\theta_n(T,M), T, M)$ (the empirical criterion one has optimized in practice) to $\mathcal R_{p_\star}(\theta_\star)$ (the theoretical optimal target) as the sample size increases and the DP approximation improves.

Finally, as a byproduct of Theorem \ref{thm:uniform_convergence_SB}, convergence of any approximate robust optimizer to an exact one is established as follows.

\begin{theorem}\label{thm:optimizer_convergence_SB}
    Let $\theta\mapsto h(\theta,\xi)$ be continuous for all $\xi\in\Xi$. Moreover, assume
    \begin{equation*}
        \lim_{T,M\to\infty}V_{\boldsymbol\xi^n}(\hat \theta_n(T, M)) = V_{\boldsymbol\xi^n}(\theta_n)
    \end{equation*}
    almost surely (e.g., as ensured above). Then, almost surely, $\lim_{T,M\to\infty}\hat \theta_n(T, M) = \bar\theta_n$ implies $V_{\boldsymbol\xi^n}(\bar\theta_n)=V_{\boldsymbol\xi^n}(\theta_n)$.
\end{theorem}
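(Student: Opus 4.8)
The plan is to mirror the proof of Theorem~\ref{thm:optimizer_convergence}, the only genuine addition being that one must pass continuity of the loss through both the inner expectation defining $\mathcal R_p$ and the outer $Q_{\boldsymbol\xi^n}$-average defining $V_{\boldsymbol\xi^n}$. Concretely, first I would establish that $\theta\mapsto V_{\boldsymbol\xi^n}(\theta)$ is continuous on $\Theta$, and then read off the conclusion from this fact together with the two limiting hypotheses.

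For the continuity claim, fix $p\in\mathcal P_\Xi$ and a sequence $\theta_k\to\theta$. Since $\theta\mapsto h(\theta,\xi)$ is continuous, $h(\theta_k,\xi)\to h(\theta,\xi)$ for every $\xi$, and since $h$ takes values in $[0,K]$ (the standing boundedness assumption carried from Lemma~\ref{lem:finite_sample_bounds}), dominated convergence with dominating constant $K$ gives $\mathcal R_p(\theta_k)\to\mathcal R_p(\theta)$. As $\phi$ is continuous, it follows that $\phi(\mathcal R_p(\theta_k))\to\phi(\mathcal R_p(\theta))$; moreover these quantities are uniformly bounded, e.g.\ by $\max\{|\phi(0)|,|\phi(K)|\}$, since $\phi$ is increasing and $\mathcal R_p(\theta)\in[0,K]$ for all $\theta$. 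A second application of dominated convergence, this time against the probability measure $Q_{\boldsymbol\xi^n}$, then yields $V_{\boldsymbol\xi^n}(\theta_k)\to V_{\boldsymbol\xi^n}(\theta)$, which is the desired continuity.

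To conclude, I would work on the almost-sure event on which $\hat\theta_n(T,M)\to\bar\theta_n$ as $T,M\to\infty$ and on which $\lim_{T,M\to\infty}V_{\boldsymbol\xi^n}(\hat\theta_n(T,M))=V_{\boldsymbol\xi^n}(\theta_n)$ (the latter being the hypothesis of the Theorem, guaranteed for instance by Theorem~\ref{thm:uniform_convergence_SB}). Continuity of $V_{\boldsymbol\xi^n}$ at $\bar\theta_n$ forces $V_{\boldsymbol\xi^n}(\hat\theta_n(T,M))\to V_{\boldsymbol\xi^n}(\bar\theta_n)$; since continuity at a point governs every approach to that point, the double index $(T,M)$ causes no difficulty. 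Uniqueness of limits then gives $V_{\boldsymbol\xi^n}(\bar\theta_n)=V_{\boldsymbol\xi^n}(\theta_n)$, and, exactly as in the proof of Theorem~\ref{thm:optimizer_convergence}, one may display the full chain $V_{\boldsymbol\xi^n}(\theta_n)\le V_{\boldsymbol\xi^n}(\bar\theta_n)=\lim_{T,M\to\infty}V_{\boldsymbol\xi^n}(\hat\theta_n(T,M))=V_{\boldsymbol\xi^n}(\theta_n)$, in which the first inequality uses that $\theta_n$ minimizes $V_{\boldsymbol\xi^n}$.

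I do not anticipate a substantive obstacle: the argument is a continuity-plus-dominated-convergence routine parallel to Theorem~\ref{thm:optimizer_convergence}. The two points that merit a line of care are (i) the justification of the two interchanges of limit and integral, which rest on the boundedness of $h$ and hence of $\phi\circ\mathcal R_p$, and (ii) the handling of the limit in the pair $(T,M)$, which I would dispatch cleanly by proving continuity of $V_{\boldsymbol\xi^n}$ as a function on $\Theta$ rather than manipulating the double-indexed family directly.
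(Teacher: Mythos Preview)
Your proposal is correct and follows essentially the same approach as the paper: both arguments establish the chain $V_{\boldsymbol\xi^n}(\theta_n)\le V_{\boldsymbol\xi^n}(\bar\theta_n)=\lim_{T,M\to\infty}V_{\boldsymbol\xi^n}(\hat\theta_n(T,M))=V_{\boldsymbol\xi^n}(\theta_n)$ by combining continuity of $h$ and $\phi$ with two applications of dominated convergence justified by the boundedness $h\in[0,K]$. The only cosmetic difference is that the paper embeds the continuity/DCT step directly into the displayed chain (writing iterated limits $\lim_{M\to\infty}\lim_{T\to\infty}$), whereas you isolate continuity of $\theta\mapsto V_{\boldsymbol\xi^n}(\theta)$ as a preliminary lemma, which makes the handling of the double index slightly cleaner.
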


\begin{proof}
We have
\begin{align*}
    V_{\boldsymbol\xi^n}(\theta_n) & \leq V_{\boldsymbol\xi^n}(\bar\theta_n) \\
                           & =\mathbb E_{p\sim Q_{\boldsymbol\xi^n}}\left[\lim_{M\to\infty}\lim_{T\to\infty}\phi(\mathcal R_p(\hat \theta_n(T, M)))\right] \\
                           & = \lim_{M\to\infty}\lim_{T\to\infty}V_{\boldsymbol\xi^n}(\hat \theta_n(T, M)) \\
                           & = V_{\boldsymbol\xi^n}(\theta_n)
\end{align*}
almost surely, where the first two equalities follow from the continuity of $\theta\mapsto h(\theta,\xi)$ and $\phi$ as well as from an iterated application of the Dominated Convergence Theorem (recall that $h(\theta,\xi)\in[0,K]$ by assumption for all $\theta$ and $\xi$). This implies $V_{\boldsymbol\xi^n}(\theta_n) = V_{\boldsymbol\xi^n}(\bar\theta_n)$ almost surely.
\end{proof}



\section{Further background and Algorithms}\label{app:background}

In this Appendix, we give further theoretical background on some topics from Bayesian nonparametrics that we touch upon throughout the paper, and we formally spell out the Algorithms heuristically proposed in the main text.

\subsection{The HDP Prior And Its CRF Construction}

The hierarchical Dirichlet process \citep{teh2004sharing, teh2006hierarchical} serves as a prior on a vector of dependent probability measures $(p_1, \dots, p_S)$, and is specified as follows:
\begin{align*}
    \xi_{sj} \mid (p_1, \dots, p_S) & \stackrel{\text{ind}}{\sim} p_s, \qquad s=1,\dots,S, \, j=1,\dots,N_j \\
    p_s \mid p_0     & \overset{\text{iid}}{\sim} \text{DP}(\alpha_s, p_0), \qquad s=1,\dots,S, \\
    p_0              & \sim \text{DP}(\alpha_0, H).
\end{align*}
where $H$ is a continuous distribution on $(\Xi, \mathscr B(\Xi))$ and $\text{DP}(\alpha, P)$ denotes the distribution of a Dirichlet process (DP) with concentration parameter $\alpha>0$ and centering distribution $P$. This construction implies that the observations $\xi_{sj}$ are \emph{partially exchangeable}: exchangeability (i.e., distributional invariance under finite index permutations) holds within each group $s$, but not necessarily across different groups $s\neq s'$, thus allowing for (partial) heterogeneity.

In order to derive Equation~\eqref{eq:ambiguity_neutral_criterion_HDP} in the main text, we need a characterization of the predictive distribution for group $s$, that is, $\mathbb E[p_s\mid \boldsymbol\xi_1, \dots,\boldsymbol\xi_S]$. This is possible by leveraging the Chinese restaurant franchise construction of \cite{teh2004sharing}. The metaphor goes as follows: A franchise of $S$ restaurants shares dishes (unique values) drawn from a franchise-wide menu $p_0$, which is a weighted collection of dishes drawn from a DP with base measure $H$ (the latter can be thought of as an infinitely rich source of recipes). Each restaurant has infinite capacity, meaning that it contains an infinite number of tables, each able to host an infinite number of customers---the only restriction is that customers seating at the same table will share the same dish. Now fix a restaurant $s$ and assume we are given the configuration of the $N_s$ customers $(\xi_{s1},\dots,\xi_{s N_s})$ into $T_s$ tables: Table $\theta_{s1}$ seats $t_{s1}$ customers, table $\theta_{s2}$ seats $t_{s2}$ customers, etc., with the obvious constraint $\sum_{j=1}^{T_s} t_{sj} = N_s$. Each table corresponds to an iid draw $\theta_{sj}$ from the franchise-level menu $p_0$, which we hold fixed for now. Then, because the HDP model places a $\text{DP}(\alpha_s, p_0)$ at the level of restaurant $s$, the Chinese restaurant construction of the DP \citep{blackwell1973ferguson} implies that the next customer (observation) of restaurant $s$ will be seated to table $\theta_{sj}$ with probability proportional to $t_{sj}$, or to a yet unoccupied table with probability proportional to $\alpha_s$. In formulas,\footnote{To keep the notation parsimonious, we identify the customer label $\phi$ with the table $\theta$ at which they sit.}
\begin{equation*}
    \xi_{s N_{s}+1} \mid \boldsymbol{t}_s, \boldsymbol{\theta}_s, p_0 \sim \sum_{j=1}^{T_s}\frac{t_{sj}}{N_{s} + \alpha_s} \delta_{\theta_{sj}} + \frac{\alpha_s}{N_s + \alpha_s}\delta_{\theta_{\textnormal{new}}},
\end{equation*}
with $\theta_{\textnormal{new}}\sim p_0 \mid \boldsymbol{t}_s, \boldsymbol{\theta}_s$. This procedure takes care of partitioning customers into tables within each restaurant. Notice that, because different tables can be assigned the same dish ($\Xi$ value), the table configuration is only latent and instrumental to describe the predictive structure of the HDP.

Given the customer-table configurations of all restaurants, assume there are $K$ distinct dishes being served in the whole franchise. That is, the tables $\theta_{sj}$ only feature $K$ unique values $\xi_1^\star, \dots,\xi_K^\star$, with $m_k$ tables serving dish $\xi_k^\star$ and, clearly, $\sum_{k=1}^K m_k = \sum_{s=1}^S T_s$. Then, because the HDP model places a $\text{DP}(\alpha_0, H)$ prior on $p_0$, the Chinese restaurant process predictive construction of the DP implies\footnote{Again for parsimony of notation, we identify each table with the dish served at it.}
\begin{equation*}
    \theta_{\textnormal{new}} \sim \sum_{k=1}^K \frac{m_k}{\sum_{\ell=1}^K m_\ell + \alpha_0} \delta_{\xi_k^\star} + \frac{\alpha_0}{\sum_{\ell=1}^K m_\ell + \alpha_0} H.
\end{equation*}

See Figure \ref{fig:CRF} for a graphical illustration of this construction. Now recall our assumptions that the data is generated from a continuous distribution, implying that there are no ties among observations: $\xi_{sj}\neq \xi_{s'j'}$ for all $s,s'=1,\dots,S$, $j=1,\dots,N_s$, and $j'=1,\dots,N_{s'}$. This immediately implies that the only consistent table configuration in the Chinese restaurant franchise metaphor is the one in which all customers seat at a different table, each eating a different dish. In turn, this implies the predictive
\begin{equation*}
    \mathbb E[p_s\mid \boldsymbol\xi_1, \dots,\boldsymbol\xi_S] = \frac{N_s}{\alpha_s + N_s}\frac{1}{N_s}\sum_{j=1}^{N_s}\delta_{\xi_{sj}}+  \frac{\alpha_s}{\alpha_s + N_s}\Bigg[ \frac{N}{\alpha_0 + N}\frac{1}{N}\sum_{\ell=1}^S\sum_{j=1}^{N_\ell}\delta_{\xi_{\ell j}} + \frac{\alpha_0}{\alpha_0 + N} H\Bigg],
\end{equation*}
yielding Equation~\eqref{eq:ambiguity_neutral_criterion_HDP} in the main text. This observation on the simplification of the table configuration in our continuous setting also yields the posterior characterization of Proposition 4.1. In fact, \cite{camerlenghi2019distribution} provided a two-stage posterior characterization of the HDP that relies on the same type of latent table configuration appearing in the Chinese restaurant franchise. However, the no-ties assumptions in our setting makes it possible to simplify the characterization as in Proposition~\ref{pro:HDP_post_characterization}.

\begin{figure}[th]
    \centering
    \begin{tikzpicture}
  \node[circle, draw, fill=gray!20, minimum size=3em, inner sep=0pt] (xi11) at (0,0) {{ $\theta_{11}=\xi^\star_{1}$}};
  \node[circle, draw, fill=gray!20, minimum size=3em, inner sep=0pt, right=1.5cm of xi11] (xi12) {{$\theta_{12}=\xi^\star_{2}$}};
  \node[circle, draw, fill=gray!20, minimum size=3em, inner sep=0pt, right=1.5cm of xi12] (xi13) {{$\theta_{13}=\xi^\star_{1}$}};
  \node[circle, draw, fill=gray!20, minimum size=3em, inner sep=0pt, right=1.75cm of xi13] (xi14) {};
  \node[right=0.5cm of xi14] (dots1) {{\Large$\cdots$}};

  \node[above left=0cm and 0cm of xi11] (phi11) {$\xi_{11}$};
  \node[above right=0cm and 0cm of xi11] (phi13) {$\xi_{13}$};
  \node[below left=0cm and 0cm of xi11] (phi14) {$\xi_{14}$};
  \node[below right=0cm and 0cm of xi11] (phi18) {$\xi_{18}$};

  \node[above left=0cm and 0cm of xi12] (phi12) {$\xi_{12}$};
  \node[above right=0cm and 0cm of xi12] (phi15) {$\xi_{15}$};
  \node[below left=0cm and 0cm of xi12] (phi16) {$\xi_{16}$};

  \node[above left=0cm and 0cm of xi13] (phi17) {$\xi_{17}$};

  \node[circle, draw, fill=gray!20, minimum size=3em, inner sep=0pt, below=2cm of xi11] (xi21) {{ $\theta_{21}=\xi^\star_{1}$}};
  \node[circle, draw, fill=gray!20, minimum size=3em, inner sep=0pt, right=1.5cm of xi21] (xi22) {{$\theta_{22}=\xi^\star_{3}$}};
  \node[circle, draw, fill=gray!20, minimum size=3em, inner sep=0pt, right=1.5cm of xi22] (xi23) {{$\theta_{23}=\xi^\star_{2}$}};
  \node[circle, draw, fill=gray!20, minimum size=3em, inner sep=0pt, right=1.5cm of xi23] (xi24) {{$\theta_{24}=\xi^\star_{2}$}};
  \node[right=0.5cm of xi24] (dots2) {{\Large $\cdots$}};

  \node[above left=0cm and 0cm of xi21] (phi21) {$\xi_{21}$};
  \node[below left=0cm and 0cm of xi21] (phi22) {$\xi_{22}$};

  \node[above left=0cm and 0cm of xi22] (phi23) {$\xi_{23}$};
  \node[below left=0cm and 0cm of xi22] (phi24) {$\xi_{24}$};
  \node[above right=0cm and 0cm of xi22] (phi26) {$\xi_{26}$};
  
  \node[above left=0cm and 0cm of xi23] (phi25) {$\xi_{25}$};

  \node[above left=0cm and 0cm of xi24] (phi27) {$\xi_{27}$};
  \node[below left=0cm and 0cm of xi24] (phi28) {$\xi_{28}$};

  \node[circle, draw, fill=gray!20, minimum size=3em, inner sep=0pt, below=2cm of xi21] (xi31) {{$\theta_{31} =\xi^\star_{2}$}};
  \node[circle, draw, fill=gray!20, minimum size=3em, inner sep=0pt, right=1.5cm of xi31] (xi32) {{$\theta_{32}=\xi^\star_{4}$}};
  \node[circle, draw, fill=gray!20, minimum size=3em, inner sep=0pt, right=1.5cm of xi32] (xi33) {};
  \node[circle, draw, fill=gray!20, minimum size=3em, inner sep=0pt, right=2.05cm of xi33] (xi34) {};
  \node[right=0.5cm of xi34] (dots3) {{\Large $\cdots$}};

  \node[above left=0cm and 0cm of xi31] (phi31) {$\xi_{31}$};
  \node[below left=0cm and 0cm of xi31] (phi32) {$\xi_{32}$};
  \node[above right=0cm and 0cm of xi31] (phi35) {$\xi_{35}$};
  \node[below right=0cm and 0cm of xi31] (phi36) {$\xi_{36}$};

  \node[above left=0cm and 0cm of xi32] (phi33) {$\xi_{33}$};
  \node[below left=0cm and 0cm of xi32] (phi34) {$\xi_{34}$};

  \begin{scope}
    \node[draw=black, rounded corners, inner sep=8mm, fit=(xi11) (dots1)] {};
    \node[draw=black, rounded corners, inner sep=8mm, fit=(xi21) (dots2)] {};
    \node[draw=black, rounded corners, inner sep=8mm, fit=(xi31) (dots3)] {};
  \end{scope}

\end{tikzpicture}
\caption{Illustration of the Chinese restaurant franchise construction of the HDP prior. In this example, there are $S=3$ restaurants (represented by the rectangles) each hosting, at the current stage of the generative process, respectively 8, 8, and 6 customers. The restaurants seat their customers at 3, 4, and 2 tables, respectively, and a total number of $K=4$ dishes $\xi^\star_{1}, \dots,\xi^\star_{4}$ is served in the whole franchise.}
\label{fig:CRF}
\end{figure}

\paragraph{Monte Carlo Approximation for the HDP Robust Criterion.} Algorithm \ref{alg:approx_criterion} summarizes the simulation strategy for the HDP robust criterion outlined in Section 4 of the main text: Given (i) the posterior characterization of the HDP in the case with no ties among observations (see Proposition Proposition~\ref{pro:HDP_post_characterization}) and (ii) a method to simulate from the DP (see the next two paragraphs for examples of such methods), one can repeatedly (a) simulate from the posterior of the top-level distribution $p_0$, and (b) given the realization of $p_0$, simulate from the posterior of the group-level distributions $p_s$. Finally, each group-specific criterion is approximated as a Monte Carlo average of the risks computed with respect to the simulated group-level distributions. We note that the algorithm scales linearly both in the number of Monte Carlo samples $M$, and is fully parallelizable along this dimension. It is also parallelizable across groups $s=1,\dots,S$.

\begin{algorithm}[ht]
   \caption{Monte Carlo Approximate HDP Criterion for Group $s$ ($\text{HDP-MC}_s$)}
   \label{alg:approx_criterion}
\begin{algorithmic}
   \STATE {\bfseries Input:} Data $\boldsymbol\xi_1, \dots, \boldsymbol\xi_d$, loss function $h$, function $\phi$, concentration parameters $\alpha_s, \alpha_0$, top-level centering probability $H$, approximation type $\text{AP} \in\{\text{SB, MD}\}$, truncation criteria $T_s\, (s\in\{1,\dots,S\})$ and $T_0$, number of MC samples $M$
   \FOR{$m=1$ {\bfseries to} $M$}
   \STATE $(p_{0j}^m, \xi_{0j}^m)_{j=0}^{T_0} = \text{AP}\big(\alpha_0 + N, \frac{\alpha_0}{\alpha_0 + N}H + \frac{N}{\alpha_0 + N}\frac{1}{N}\sum_{\ell=1}^S\sum_{j=1}^{N_\ell}\delta_{\xi_{\ell j}}, T_0\big)$
   \STATE $\hat p_0^m = \sum_{j=0}^{T_0}p_{0j}^m \delta_{\xi_{0j}^m}$
   \STATE $(p_{sj}^m, \xi_{sj}^m)_{j=0}^{T_i} = \text{AP}\big(\alpha_s + N_s, \frac{\alpha_s}{\alpha_s + N_s}\hat p_0^m + \frac{N_s}{\alpha_s + N_s}\frac{1}{N_s}\sum_{j=1}^{N_s}\delta_{\xi_{sj}},T_s\big)$
   \ENDFOR
   \STATE {\bfseries Return: $\theta \mapsto M^{-1}\sum_{m=1}^M \phi\big( \sum_{j=0}^{T_s}p_{sj}^m h(\theta,\xi_{sj}^m)\big)$}
\end{algorithmic}
\end{algorithm}

\paragraph{Truncated Stick-Breaking Approximation of the Dirichlet Process.}
 Algorithm \ref{alg:stickbreaking} presents a truncated version of the stick-breaking procedure described in Section~\ref{sec:MC_approx} in the main text, stopping the theoretically infinite procedure at step $T$. The remaining portion of the stick is then allocated to one further atom drawn from the centering measure $P$. We note that the complexity of the algorithm is linear in the truncation threshold $T$.

\begin{algorithm}[ht]
   \caption{Truncated Dirichlet Process Stick-Breaking Algorithm (SB)}
   \label{alg:stickbreaking}
\begin{algorithmic}
   \STATE {\bfseries Input:} Concentration parameter $\alpha$, centering probability $P$, truncation criterion $T\in\mathbb N$
   \STATE Set $\prod_{k=1}^{0}(1-B_k)\equiv 1$
   \FOR{$j=1$ {\bfseries to} $T$}
   \STATE Draw $\xi_{j}\sim \pi$
   \STATE Draw $B_{j} \sim \textnormal{Beta}(1, \alpha)$
   \STATE Set $p_{j} = B_j\prod_{k=1}^{j-1}(1-B_k)$
   \ENDFOR
   \STATE Draw $\xi_{0}\sim \pi$
   \STATE Set $p_{0} = \prod_{k=1}^{T}(1-B_k)$
   \STATE {\bfseries Return: $(p_j, \xi_j)_{j=0}^T$}
\end{algorithmic}
\end{algorithm}

\paragraph{Multinomial-Dirichlet Construction of the Dirichlet Process.} Another finite approximation of $ p\sim\textnormal{DP}(\alpha,P)$ is $p_T = \sum_{j=1}^T p_j\delta_{x_j}$, with $x_j\overset{\textnormal{iid}}{\sim} P$ and
$$(p_1,\dots,p_j)\sim\textnormal{Dirichlet}(T;\alpha/T,\dots,\alpha/T).$$
As $T\to\infty$, $p_T$ approaches $p$; see Theorem 4.19 in \cite{ghosal2017fundamentals}. Hence, one can approximately sample from a DP as in Algorithm~\ref{alg:mult_dir}. For all of our experiments, we choose to simulate from the DPs at both levels of the hierarchy of the HDP via the MD approximation. This is because, even for moderate $T$, this method tends to assign more balanced weights than the stick-breaking constructions, making practical optimization of the HDP robust criterion more stable. We also note that the complexity of the algorithm is linear in the truncation of the threshold $T$.

\begin{algorithm}[ht]
   \caption{Truncated Dirichlet Process Multinomial-Dirichlet Algorithm (MD)}
   \label{alg:mult_dir}
\begin{algorithmic}
   \STATE {\bfseries Input:} Concentration parameter $\alpha$, centering probability $P$, truncation criterion $T\in\mathbb N$
   \STATE Initialize $\boldsymbol p = (p_1, \dots, p_T) \in\mathbb R^T$
   \FOR{$j=1$ {\bfseries to} $T$}
   \STATE Sample $\xi_j \sim P$
   \STATE Update $p_j\sim \textnormal{Gamma}(\alpha/T,1)$
   \ENDFOR
   \STATE Normalize $\boldsymbol p = \frac{\boldsymbol p}{\sum_{j=1}^n p_j}$
   \STATE {\bfseries Return: $(p_j, \xi_j)_{j=1}^T$}
\end{algorithmic}
\end{algorithm}

\paragraph{Stochastic Gradient Descent Algorithm.} Algorithm \ref{alg:SGD_modified} spells out the details of the SGD algorithm used to optimize the HDP criterion in practice. As a standard SGD routine, it enjoys all of the computational properties of such optimization methods. We further note that, while an explicit dependence on the group sample sizes is not present, in order to make the MC draws from the HDP predictive representative of such samples, it is necessary to increase the truncation threshold $T$ and/or the number of MC samples $M$ as the sample sizes increase. Another possibility to enhance the computational efficiency of the method, which we do not explore further, is to avoid random sampling and assign to each batch (MC sample) in the HDP approximation process a separate subset of the data, then add a number of observations from the prior centering measure $H$ by respecting the proportions dictated by the concentration parameters and the sample sizes.

\begin{algorithm}[ht]
   \caption{Stochastic Gradient Descent Algorithm (SGD)}
   \label{alg:SGD_modified}
\begin{algorithmic}
   \STATE {\bfseries Input:} Approximate criterion parameters $\{(p_j^m, \xi_j^m):j=1,\dots,T, \, m=1,\dots,M\}$, loss function $h$, function $\phi$ step size schedule $(\eta_t)_{t\geq 1}$, starting value $\theta^0$, number of iterations $I$
   \FOR{$t=1$ {\bfseries to} $I$}
   \STATE Choose $m_t\in\{1,\dots,M\}$
   \STATE Update $\theta^{t} = \theta^{t-1} - \eta_t \phi'\left(\sum_{j=1}^T p_j^{m_t} h(\theta^{t-1},\xi_j^{m_t})\right) \sum_{j=1}^T p_j^{m_t} \nabla_\theta h(\theta^{t-1},\xi_j^{m_t})$
   \ENDFOR
   \STATE {\bfseries Return: $\theta^{I}$}
\end{algorithmic}
\end{algorithm}

\paragraph{Modified SGD for DORO Routine.} Our DORO procedure consists in performing a modified SGD algorithm on the approximated DP-based criterion
\begin{equation*}
    \hat V_{\boldsymbol{\xi}^n}(\theta, T, M) := \frac{1}{M} \sum_{m=1}^M \phi\left(\sum_{j=0}^T p_j^m h(\theta, \xi_{mj})\right).
\end{equation*}
In particular, while the weights $p^m_j$ are drawn according to either Algorithm~\ref{alg:stickbreaking} or Algorithm~\ref{alg:mult_dir}, the atoms $\xi_{mj}$ are drawn iid from the DP centering probability measure $p_0$ with probability $\propto \alpha$ or drawn from $p_{\boldsymbol \xi^{\textnormal{tr}}}$ with probability $\propto n$. Let this fact be recorded in a set of $M$ binary vectors $\boldsymbol \omega_m = (\omega_{m1}, \dots, \omega_{mT})$, where $\omega_{mj}=1$ if $\xi_{mj}\sim p_{\boldsymbol \xi^{\textnormal{tr}}}$  and $\omega_{mj}=0$ otherwise. Based on this, Algorithm \ref{alg:DORO_SGD} spells out a SGD procedure for practical $\varepsilon$-DORO. In essence, at each iteration, the $\varepsilon\times 100\%$ worst fitting observations sampled from the training set are excluded from gradient computations (and probability weights are re-normalized accordingly).

\begin{algorithm}[ht]
   \caption{DORO Stochastic Gradient Descent algorithm}
   \label{alg:DORO_SGD}
\begin{algorithmic}
   \STATE {\bfseries Input:} Approximate criterion parameters $\{(p^m_j, \xi_{mj}, \omega_{mj}):m=1,\dots,M, j=1,\dots,T\}$, contamination level $\varepsilon$, step size schedule $(\eta_t)_{t\geq 0}$, starting value $\theta^0$, number of passes $P$, iteration tracker $t=0$
   \FOR{$p=1$ {\bfseries to} $P$}
   \STATE Initialize $I=\{1,\dots,M\}$
   \FOR{$m'=1$ {\bfseries to} $N$}
   \STATE Sample uniformly $m\in I$
   \STATE Rank $[h(\theta,\xi_{mj}): \omega_{mj} = 1]$ from lowest to highest
   \STATE Let $\tilde{\boldsymbol{\xi}}_m = [\xi_{mj}:\omega_{mj} = 0 \textnormal{ or } (\omega_{mj} = 1 \textnormal{ and } \textnormal{rank}(h(\theta^t,\xi_{mj})) < \varepsilon\cdot\boldsymbol{\omega}_m^\top\boldsymbol{1}_T)]$, retaining the index ordering
   \STATE Let $\tilde{\boldsymbol{p}}_m = [p^m_j:\omega_{mj} = 0 \textnormal{ or } (\omega_{mj} = 1 \textnormal{ and } \textnormal{rank}(h(\theta^t,\xi_{mj})) < \varepsilon\cdot\boldsymbol{\omega}_m^\top\boldsymbol{1}_T)]$, retaining the index ordering
   \STATE Normalize $\tilde{\boldsymbol{p}}_m = (\tilde{\boldsymbol{p}}_m^\top\boldsymbol{1})^{-1}\tilde{\boldsymbol{p}}_m$
   \STATE Update $\theta^{t+1} = \theta^t - \eta_t \cdot \phi'\left(\tilde{\boldsymbol{p}}_m^{\top}h(\theta^t, \tilde{\boldsymbol{\xi}}_m)\right) \cdot \tilde{\boldsymbol{p}}_m^{\top}\nabla_\theta h(\theta^t, \tilde{\boldsymbol{\xi}}_m)$
   \STATE Update $I=I\setminus\{m\}$
   \STATE Update $t = t + 1$
   \ENDFOR
   \ENDFOR
   \STATE {\bfseries Return: $\theta^{PM + 1}$}
\end{algorithmic}
\end{algorithm}

\section{Experiments on Single-Source DP-Based DRO}\label{app:experiments_DP_DRO}

\subsection{High-Dimensional Linear Regression Experiment}\label{app:DP_lin_reg_experiment}

\paragraph{Setting.} In this experiment, we test the performance of our robust criterion in a high-dimensional sparse linear regression task. The high-dimensional and sparse nature of the data-generating process is expected to induce distributional uncertainty, and our method is meant to address this. In this context, we use the quadratic loss function $(\theta, y, x)\mapsto10^{-3}(y-\theta^\top x)^2$, where the $10^{-3}$ factor serves to stabilize numerical values in the optimization process. Notice that, by the form of the ambiguity-neutral criterion (\ref{eq:ambiguity_neutral_criterion}), the multiplicative factor on the loss function does not change the equivalence with Ridge.

\paragraph{Data-Generating Process.} The data for the experiment are generated iid across simulations (200) and observations ($n=100$ per simulation) as follows. For each observation $i=1,\dots,n$, the $d$-dimensional ($d=90$) covariate vector follows a multivariate normal distribution with mean 0 and such that (i) each covariate has unitary variance, and (ii) any pair of distinct covariates has covariance 0.3:
\begin{equation*}
    x_i = \begin{bmatrix}
        x_{i1} \\
        \vdots \\
        x_{id}
    \end{bmatrix} \sim \mathcal N(0, \Sigma), \quad \Sigma = \begin{bmatrix}
        1 & 0.3 & \cdots & 0.3 \\
        0.3 & 1 & \cdots & 0.3\\
        \vdots & \vdots & \ddots & \vdots\\
        0.3 & 0.3 & \cdots & 1
    \end{bmatrix} \in \mathbb R^{d\times d}.
\end{equation*}
Then, the response has conditional distribution $y_i\mid x_i \sim \mathcal N(a^\top x_i, \sigma^2)$, with
$$a = (1, 1, 1, 1, 1, 0, \cdots, 0)^\top\in\mathbb R^d$$
and $\sigma = 0.5$. That is, out of 90 covariates, only the first 5 have a unitary positive marginal effect on $y_i$, and additive Gaussian noise is added to the resulting linear combination. Together with 100 training samples, at each simulation we generate 5000 test samples on which we compute out-of-sample RMSE for the ambiguity-averse, ambiguity-neutral, and OLS procedures.

\paragraph{Robust Criterion Parameters.} For each simulated sample, we run our robust procedure setting the following parameter values: $\phi(t)=\beta\exp(t/\beta)-\beta$, $\beta \in\{1, \infty\}$, $\alpha=a/n$ for $a\in\{1, 2, 5, 10\}$, and $p_0 = \mathcal N(0,I)$, where the $\beta = \infty$ setting corresponds to Ridge regression with regularization parameter $\alpha$ (see Proposition \ref{pro:equivalence_regularization} in the main text). Finally, we run 300 Monte Carlo simulations to approximate the criterion, and truncate the Multinomial-Dirichlet approximation at $T=50$.

\paragraph{Stochastic Gradient Descent Parameters} We initialize the algorithm at $\theta = (0,\dots,0)$ and set the step size at $\eta_t = 50/(100 + \sqrt{t})$. The number of passes over data is set after visual inspection of convergence of the criterion value. The run time per SGD run is less than 1 second on our infrastructure (see Appendix \ref{app:computing_resources}).

\subsection{High-Dimensional Logistic Regression Experiment}

\paragraph{Setting.} In this experiment, we test the performance of our robust criterion on a high-dimensional sparse classification task using the framework of logistic regression. As in the linear regression experiment, the high-dimensional and sparse nature of the data-generating process is expected to induce distributional uncertainty, and our method is meant to address this. In this setting, the loss function is $h(\xi, \theta) = \log(1+\exp(-y\cdot x^\top\theta))$. As in the previous experiment, we pre-multiply it by a factor of $10^{-3}$ for numerical stability reasons.

\paragraph{Data-Generating Process.} The data for the experiment are generated iid across simulations (200) and observations ($n=100$ per simulation) as follows. For each observation $i=1,\dots,n$, the $d$-dimensional ($d=90$) covariate vector follows a multivariate normal distribution with mean 0 and such that (i) each covariate has unitary variance, and (ii) any pair of distinct covariates has covariance 0.3:
\begin{equation*}
    x_i = \begin{bmatrix}
        x_{i1} \\
        \vdots \\
        x_{id}
    \end{bmatrix} \sim \mathcal N(0, \Sigma), \quad \Sigma = \begin{bmatrix}
        1 & 0.3 & \cdots & 0.3 \\
        0.3 & 1 & \cdots & 0.3\\
        \vdots & \vdots & \ddots & \vdots\\
        0.3 & 0.3 & \cdots & 1
    \end{bmatrix} \in \mathbb R^{d\times d}.
\end{equation*}
Then, the response has conditional distribution $y_i\mid x_i \sim \textnormal{Binary}(\{1,-1\}, p_x)$, with $p_x =1/(1+\exp(-x^\top a))$ and $a = (1, 1, 1, 1, 1, 0, \cdots, 0)^\top\in\mathbb R^d$. That is, out of 90 covariates, only the first 5 have a unitary positive marginal effect on the log-odds. Together with 100 training samples, at each simulation we generate 5000 test samples on which we compute the out-of-sample average loss for the ambiguity-averse, $L^2$-regularized (with regularization parameter $\alpha$, see below), and un-regularized procedures.

\paragraph{Robust Criterion Parameters.} For each simulated sample, we run our robust procedure setting the following parameter values: $\phi(t)=\beta\exp(t/\beta)-\beta$, $\beta =1$, $\alpha=a/n$ for $\alpha\in\{1, 2, 5, 10\}$, and $p_0 = \textnormal{Binary}(\{1,-1\}, 0.5)\times\mathcal N(0,I)$. Finally, we run 200 Monte Carlo simulations to approximate the criterion, and truncate the Multinomial-Dirichlet approximation at $T=50$.

\paragraph{Stochastic Gradient Descent Parameters} We initialize the algorithm at $\theta = (0,\dots,0)$ and set the step size at $\eta_t = 1000/(100 + \sqrt{t})$. The number of passes over data is set after visual inspection of convergence of the criterion value. The run time per SGD run is 3 seconds on our infrastructure (see Appendix \ref{app:computing_resources}).

\paragraph{Results.}
In Figure \ref{fig:DP_logit_simulations}, we present the results of the simulation study. As for the regression experiment, the ambiguity-averse criterion brings improvement, across $\alpha$ values and compared to the $L^2$-regularized and the unregularized procedures, both in terms of average performance and in terms of the latter's variabiliy (see the first row of the Figure). From the second row of Figure \ref{fig:DP_logit_simulations}, it also emerges that, on average, the ambiguity-averse procedure is more accurate and less variable at estimating the true regression coefficient than the two other methods. Also, our method is able to more effectively shrink the norm of the coefficient vector towards 0 (see the third row). Taken together, these results confirm the theoretical expectation that the ambiguity-averse optimization is effective at hedging against the distributional uncertainty arising in high-dimensional classification problems (in this experimental setting, tackled via logistic regression).

\begin{figure*}[t]
\begin{center}
\centerline{\includegraphics[width=0.9\textwidth]{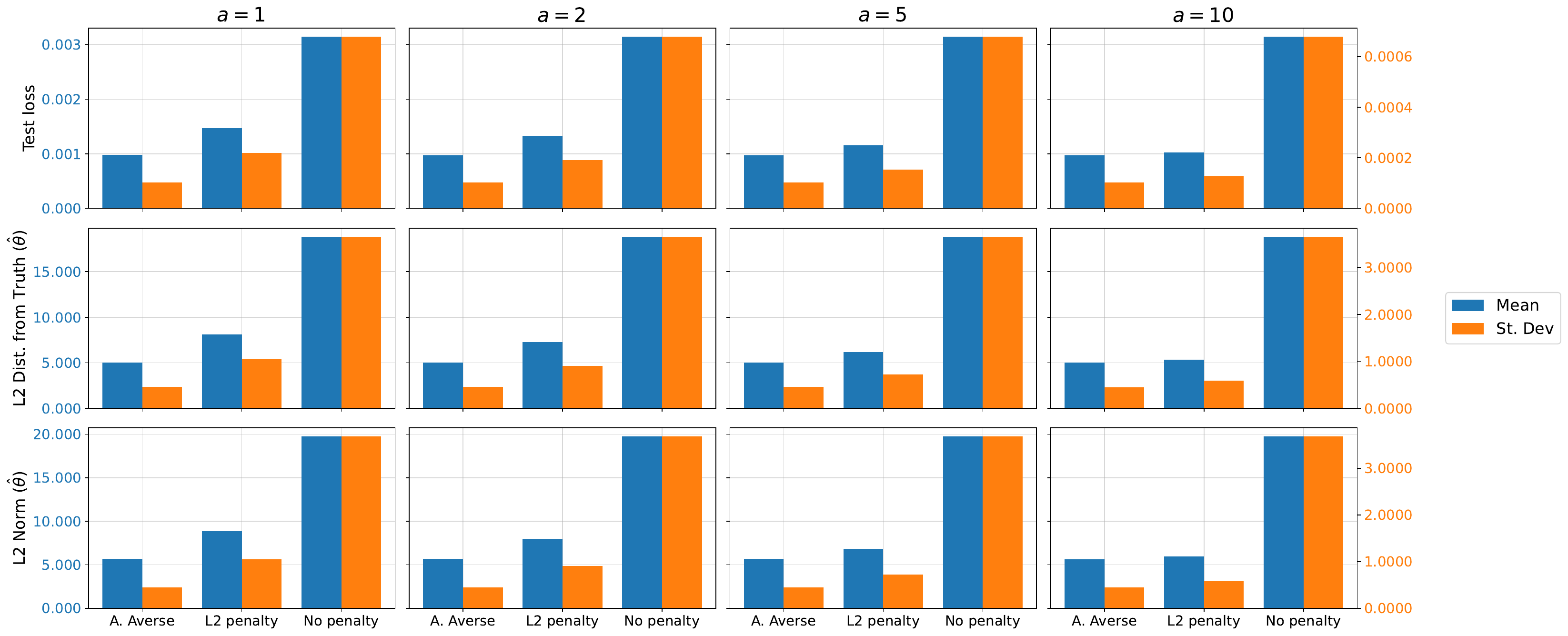}}
\caption{Simulation results for the high-dimensional sparse logistic regression experiment. Bars report the mean and standard deviation (across 200 sample simulations) of the test average loss, $L^2$ distance of estimated coefficient vector $\hat\theta$ from the data-generating one, and the $L^2$ norm of $\hat\theta$. Results are shown for the ambiguity-averse, $L^2$-regularized, and un-regularized procedures. Note: The left (blue) axis refers to mean values, the right (orange) axis to standard deviation values.}
\label{fig:DP_logit_simulations}
\end{center}
\vskip -0.2in
\end{figure*}

\subsection{Pima Indian Diabetes Dataset Experiment}

In this experiment, we use logistic regression for classification on the popular Pima Indians Diabetes dataset,\footnote{Made available by the National Institute of Diabetes and Digestive and Kidney Diseases and downloaded from \url{https://www.kaggle.com/datasets/kandij/diabetes-dataset?resource=download}.} collecting data on 768 women belonging to a Native American group that lives in Mexico and Arizona. The data consists of a binary outcome (whether the subject developed diabetes or not) and 8 features related to her physical condition (these features are standardized before running the analysis).

To test our method, we randomly select 300 training observations and leave out the rest for as a test sample. Then, we randomly split the training data into 15 folds of size 20 and select, via $k$-fold cross validation, the optimal DP concentration parameter $\alpha$ over a wide grid of values. We do the same for the $L^1$-penalty coefficient used to implement regularized logistic regression with the Python library \texttt{scikit-learn} \cite{scikit-learn}. Once the optimal parameters are selected based on out-of-sample risk, we again randomly split the training sample into the same number of folds, and implement our roubust DP method, L1-penalized logistic regression, and unregularized logistic regression on each of the folds.\footnote{All of the implementation details (e.g., parameter values), can be found in our code. This holds for the next two experiments as well.} This splitting procedure allows us (i) to test and compare the performance of our method in a setting with scarce data, where distributional uncertainty is most likely present, and (ii) to asses the sampling variability of the implemented procedures. The run time per SGD run is 19 seconds on our infrastructure (see Appendix \ref{app:computing_resources}).

Table \ref{tab:diabetes} reports the results from the described procedure. Unregularized logistic regression performs quite poorly compared to the other two methods. Instead, the latter yield results in the same orders of magnitude both in terms of average performance and of performance variability, though our DP robust method features almost half of the variability produced by $L^1$-regularized logistic regression.

\begin{table}[ht]
\centering
\begin{tabular}{|l|c|c|c|}
\hline
 & Unregularized & $L^1$ Regularized & \textbf{DP Robust} \\
\hline
 Average & 0.0142 & 0.0007 & 0.0006 \\
\hline
 Standard Deviation & 0.0127 & 6.2253e-05 & 3.9742e-05 \\
\hline
\end{tabular}
\vspace{0.3cm}
\caption{Comparison of average and standard deviation of the out-of-sample performance (out-of-sample expected logistic loss) of the three employed methods for binary classification on the Pima Indian Diabetes dataset.}
\label{tab:diabetes}
\end{table}

\subsection{Wine Quality Dataset Experiment}

In this experiment, we applied linear regression to the popular UCI Machine Learning Repository Wine Quality dataset \citep{misc_wine_quality_186}. Data consists of 4,898 measurements of 11 wines' characteristics and a quality score assigned to each wine. The aim is to predict the latter based on the former (both features and response are standardized before running the analysis). We implement linear regression using our DP-based robust method (with the squared loss function), OLS, and LASSO (the last two methods are implemented using \texttt{scikit-learn} \citep{scikit-learn}).

To test our method, we randomly select 300 training observations and leave out the rest for as a test sample. Then, we randomly split the training data into 10 folds of size 30 and select, via $k$-fold cross validation, the optimal DP concentration parameter $\alpha$ over a wide grid of values. We do the same for the $L^1$-penalty coefficient used to implement LASSO. Once the optimal parameters are selected based on out-of-sample risk, we again randomly split the training sample into the same number of folds, and implement our roubust DP method, LASSO regression, and OLS estimation on each of the folds. This splitting procedure allows us (i) to test and compare the performance of our method in a setting with scarce data, where distributional uncertainty is most likely present, and (ii) to asses the sampling variability of the implemented procedures. The run time per SGD run is 5 seconds on our infrastructure (see Appendix \ref{app:computing_resources}).

Table \ref{tab:wine_quality} reports the results from the described procedure, whose interpretation is very much in line with the results of the previous experiment.

\begin{table}[ht]
\centering
\begin{tabular}{|l|c|c|c|}
\hline
 & Unregularized & $L^1$ Regularized & \textbf{DP Robust} \\
\hline
 Average & 0.0014 & 0.0009 & 0.0009 \\
\hline
 Standard Deviation & 0.0004 & 8.0192e-05 & 6.0076e-05 \\
\hline
\end{tabular}
\vspace{0.3cm}
\caption{Comparison of average and standard deviation of the out-of-sample performance (out-of-sample expected squared loss) of the three employed methods for linear regression on the Wine Quality dataset.}
\label{tab:wine_quality}
\end{table}

\subsection{Liver Disorders Dataset Experiment}

In this experiment, we applied linear regression to the popular UCI Machine Learning Repository Liver Disorders dataset \citep{misc_liver_disorders_60}. Data consists of 345 measurements of 5 blood test results and the number of drinks consumed per day by each subject. The aim is to predict the latter based on the former (both features and response are standardized before running the analysis). We implement linear regression using our DP-based robust method (with the squared loss function), OLS, and LASSO (the last two methods are implemented using \texttt{scikit-learn} \citep{scikit-learn}).

To test our method, we randomly select 200 training observations and leave out the rest for as a test sample. Then, we randomly split the training data into 10 folds of size 20 and select, via $k$-fold cross validation, the optimal DP concentration parameter $\alpha$ over a wide grid of values. We do the same for the $L^1$-penalty coefficient used to implement LASSO. Once the optimal parameters are selected based on out-of-sample risk, we again randomly split the training sample into the same number of folds, and implement our roubust DP method, LASSO regression, and OLS estimation on each of the folds. This splitting procedure allows us (i) to test and compare the performance of our method in a setting with scarce data, where distributional uncertainty is most likely present, and (ii) to asses the sampling variability of the implemented procedures.The run time per SGD run is 15 seconds on our infrastructure (see Appendix \ref{app:computing_resources}).

Table \ref{tab:liver_disorders} reports the results from the described procedure, whose interpretation is very much in line with the results of the previous two experiments.

\begin{table}[ht]
\centering
\begin{tabular}{|l|c|c|c|}
\hline
 & Unregularized & $L^1$ Regularized & \textbf{DP Robust} \\
\hline
 Average & 0.0012 & 0.0009 & 0.0007 \\
\hline
 Standard Deviation & 0.0005 & 0.0001 & 6.6597e-05 \\
\hline
\end{tabular}
\vspace{0.3cm}
\caption{Comparison of average and standard deviation of the out-of-sample performance (out-of-sample expected squared loss) of the three employed methods for linear regression on the Liver Disorders dataset.}
\label{tab:liver_disorders}
\end{table}
\section{Experiments on Multi-Source HDP-Based DRO}\label{app:experiments_HDP_DRO}

\paragraph{High-Dimensional Sparse Linear Regression Simulation Study.} In this experiment, we take $h(\theta,\xi)$ to be the squared loss (as usual in linear regression tasks) and conduct simulations as follows. Denote by $s=1,2$ two distinct yet related samples consisting of $n=100$ observations per sample, where each observation consists of $p=95$ features, collected in a matrix $\boldsymbol X^s\in\mathbb R^{n\times p}$, and a target, collected in a vector $\boldsymbol y^s\in\mathbb R^{n}$. Observations are generated according to the following hierarchical model:
\begin{align*}
    \boldsymbol y^1 \mid \boldsymbol X^1,  \boldsymbol\beta^1 & \sim N(\boldsymbol X^1  \boldsymbol\beta^1, \sigma^2 I),\\
    \boldsymbol y^2 \mid  \boldsymbol X^2, \boldsymbol{\beta}^2 & \sim N(\boldsymbol X^2  \boldsymbol\beta^2, \sigma^2 I), \\
    \boldsymbol{X}^s_{i} & \overset{\text{iid}}{\sim} N(\boldsymbol 0, \boldsymbol\Sigma_{p\times p}), \quad i=1,\dots,n, \, s = 1,2,
\end{align*}
where $\boldsymbol\Sigma_{p\times p}$ takes value 1 on its diagonal and 0.3 off-diagonal, and $\sigma = 0.5$. In order to induce dependence across samples 1 and 2, we generate the coefficient vectors $\boldsymbol{\beta}^1$ and $\boldsymbol{\beta}^2$ as follows. First, to ensure sparsity, we set the last 90 coordinates of both vectors equal to 0. Second, we generate the first 5 coordinates as follows (denote by $\boldsymbol\beta^1_5$ and $\boldsymbol{\beta}^2_5$ the sub-vectors of active coefficients):
\begin{align*}
    (\boldsymbol\beta^1_5, \boldsymbol{\beta}^2_5) \sim N(\boldsymbol{1}_{10}, c\cdot \boldsymbol{V}_{10\times 10}),
\end{align*}
where $\boldsymbol{V}_{10\times 10}$ takes value 1 on its diagonal and 0.3 off-diagonal, while $c\in\{0.1, 0.2, 0.4, 0.5\}$ controls the degree of dependence among coefficients both across and within samples (the smaller $c$, the larger the dependence among coefficients). Finally, we take $H = N(\boldsymbol{0}, I_{n})$ (prior centering distribution), $\phi(t) = \exp(t)-1$ (i.e., $\beta = 1$), $T_s=T_0=100$ (Multinomial-Dirichlet approximation steps at both levels of the hierarchy), $M=300$ (number of MC samples from the HDP posterior), and set the SGD step size $\eta_t = 500/(100 + \sqrt{t})$. We run SGD until visually-inspected convergence, which takes around 2 seconds per run on our infrastructure (see Appendix \ref{app:computing_resources}).

We first conduct 10 simulations through which we select the optimal concentration parameter values for the HDP procedure, the separate-samples DP procedure, and the pooled-samples DP procedure, across a grid of plausible values
\begin{equation*}
    \alpha_0, \alpha_1, \alpha_2 \in \{1, 2, 5, 10, 15, 20, 25, 30, 40, 50, 60, 70, 80, 90, 100\}.
\end{equation*}
The selection is performed by fitting the models on training samples generated as above, then computing the out-of-sample risk on 10,000 additional simulated test observations. Using the optimized parameter values, we run 100 more simulations and, for each of these, compute (1) the excess\footnote{The word ``excess" refers to the risk computed at the true underlying parameter, which, in the context of simulation studies like this, we obviously have access to.} out-of-sample loss (mean squared error) and (2) the squared $L^2$ distance between the estimated and true coefficient vectors. Figure \ref{fig:lin_reg_appendix}, which shows results for $c\in\{0.1, 0.5\}$, confirms the results of Figure 2a in the main body, which shows results for $c\in\{0.2, 0.4\}$: Even in the presence of more extreme dependence structures (i.e., very low dependence or very high dependence, based on the more extreme values of $c$), the HDP method does better both on average and in terms of reduced variability, compared to the alternatives.\footnote{Notice that the Figures do not report results for the separate-samples OLS procedures. This is because, both in terms of average performance and its variability, this method performs worse than the others by one order of magnitude, and including it in the plots would distort relative comparisons among the other methods. Nevertheless, we refer the reader to our code for results on this procedure as well.}

\begin{figure}[ht]
\begin{center}
\centerline{\includegraphics[width=\textwidth]{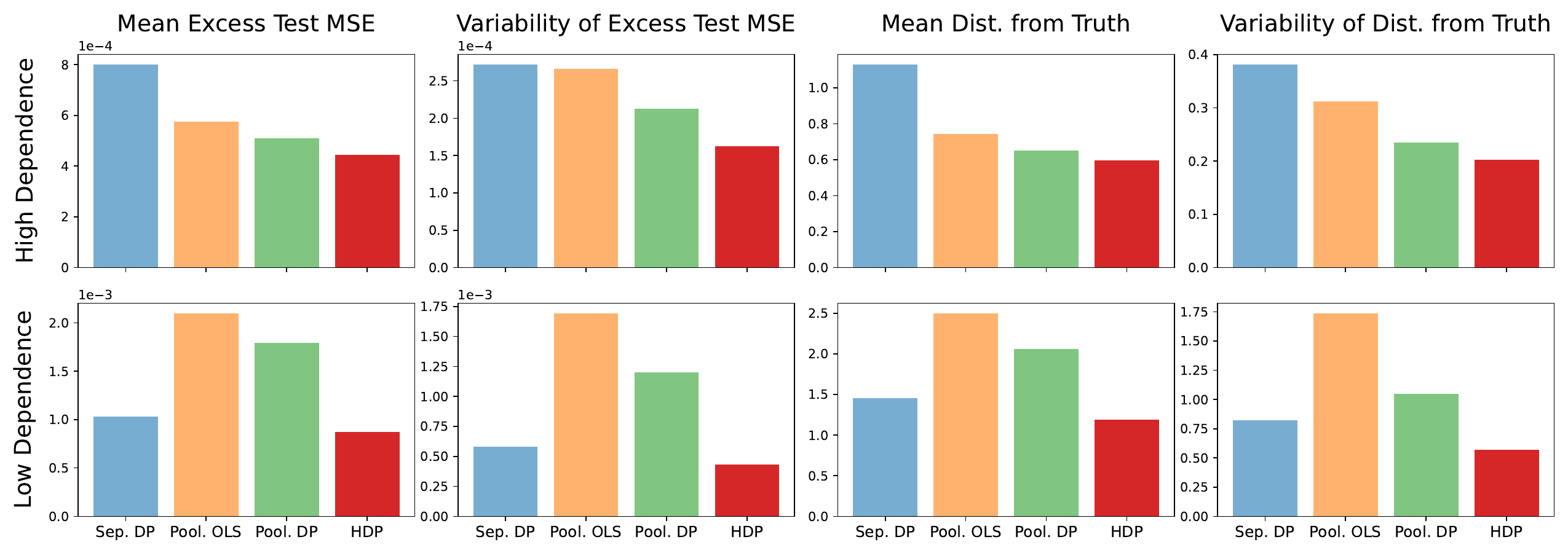}}
\caption{Out-of-sample performance and estimation accuracy of different methods in the high-dimensional linear regression experiment. The HDP method (in bright red) outperforms the others both in terms of the average and the variability of performance. Distance from truth is measured as the square $L^2$ distance between the estimated coefficients and the data-generating ones. Note: OLS estimation was performed using the Python library \texttt{scikit-learn} \citep{scikit-learn}.}
\label{fig:lin_reg_appendix}
\end{center}
\end{figure}

\paragraph{High-Dimensional Sparse Linear Median Regression Simulation Study.} 

In this experiment, we take $h$ to be the pinball loss with quantile parameter 0.5, $h(\theta,\xi)=|y-\theta^\top \boldsymbol x|$, which aims to recover the conditional median of the response variable $y$ given the feature vector $\boldsymbol{x}$. In terms of generative process and simulation setting, we keep everything as in the previous experiment on linear (mean) regression. We also keep the DP and HDP robust criterion parameters as before, and set the SGD step size $\eta_t = 500/(100 + \sqrt{t})$. We run SGD until visually-inspected convergence, which takes around 12 seconds per run on our infrastructure (see Appendix~\ref{app:computing_resources}).

Figure 2b in the main text and Figure \ref{fig:med_reg_HDP_appendix} in this Supplement report the results from 100 simulation after 10 initial ones for parameter selection (analogously to the previous experiment). Figure 2b in the main text reports results for low and high across-groups dependence regimes ($c\in\{0.2, 0.4\}$), and in Figure \ref{fig:med_reg_HDP_appendix} these regimes are taken to even larger extremes. In both cases, the qualitative conclusions highlighted for the linear regression experiment hold as well: The HDP-robust method, compared to the baseline robust DP and naive ERM estimation strategies, is effective at (i) borrowing information across groups to an optimal extent, and (ii) managing distributional uncertainty by reducing performance variability.

\begin{figure}[ht]
\begin{center}
\centerline{\includegraphics[width=\textwidth]{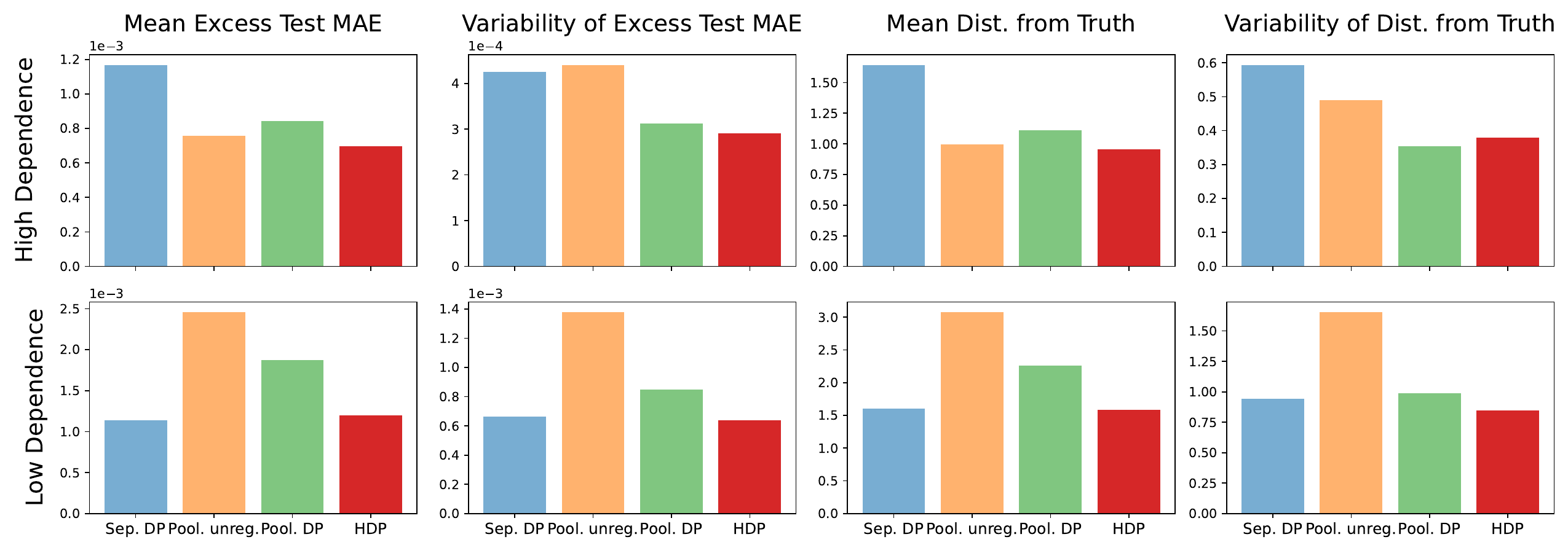}}
\caption{Out-of-sample performance and estimation accuracy of different methods in the high-dimensional linear median regression experiment for dependence parameter values $c\in\{0.1,0.5\}$. The HDP method (in bright red) outperforms the others in terms of performance variability, and does as well on average. Distance from truth is measured as the square $L^2$ distance between the estimated coefficients and the data-generating ones. Note: unregularized estimation was performed using the Python library \texttt{scikit-learn} \citep{scikit-learn}.}
\label{fig:med_reg_HDP_appendix}
\end{center}
\end{figure}

\paragraph{Experiment on Diabetes Prediction With Logistic Regression.} In this experiment, we focus on patient-level diabetes (binary) prediction across age groups based on 20 features, using the publicly available CDC Diabetes Health Indicators Dataset \citep{burrows2017incidence}\footnote{Downloaded from \url{https://www.archive.ics.uci.edu/dataset/891/cdc+diabetes+health+indicators}}. Specifically, the dataset provides baseline health information as well as an indicator of diabetes diagnosis for 253,680 US and Puerto Rico patients belonging to 13 distinct age groups (called groups 1-13 in increasing order of age). Here, we focus on predicting diabetes via logistic regression ($h(\theta, \xi)$ is specialized to the classical log-loss) for age groups 5 (of size 16,157) and 8 (of size 30,832), comparing our HDP-based method to pooled and separate ERM baselines.

To make the task relevant for our interest in distributionally uncertain problems, we focus on a very small training sample size of $50$ patients for each group. The large remaining test samples also allow us to precisely quantify out-of-sample prediction, which is our main quantity of interest. In order to select the optimal hyperparameter values
\begin{align*}
    \alpha_0, \alpha_1, \alpha_2 \in \{ & 15, 16, 17, 18, 19, 20, 22, 24, 26, 28, 30, 35, 40, 45,\\
    & 50, 55, 60, 65, 70, 75, 80, 85, 90, 95, 100, 110, 120\},
\end{align*}
we first perform 10-fold cross validation on a randomly selected sub-sample of 500 patients per group. Then, after parameter selection, we re-sample 500 patients per group and split them into 10 new folds---this helps us quantify the sample variability of the average out-of-sample performance of the different estimators computed on the 10 different folds. Out-of-sample performance is then calculated using every observation not included in the sampled folds.

After standardizing feature variables, we take $H = N(\boldsymbol{0}, I)$ (prior centering distribution), $\phi(t) = \exp(t)-1$ (i.e., $\beta = 1$), $T_s=T_0=100$ (Multinomial-Dirichlet approximation steps at both levels of the hierarchy), $M=10$ (number of MC samples from the HDP posterior), and set the SGD step size $\eta_t = 60,000/(100 + \sqrt{t})$. We run SGD until visually-inspected convergence, which takes around 2 seconds per run on our infrastructure (see Appendix \ref{app:computing_resources}).

Table \ref{tab:diabetes_log_reg} shows the results of the experiment, reporting both the mean and standard deviation of out-of-sample risk (average test loss) computed across the 10 folds. Two facts stand out: First, for both age groups, pooled ERM estimation results in better average performance than separate ERM, suggesting the presence of some degree of across-group homogeneity that is not taken into account when processing the two groups separately. Nevertheless, the borrowing-of-information mechanism implied by our HDP-based method seems in turn to outperform pooled ERM, suggesting that this degree of homogeneity is only partial and is better accounted for by our partially exchangeable modeling. Second, our method displays less variable out-of-sample risk, a likely manifestation of the benefits of distributional robustness and regularization in this data-scarce application.

\begin{table}
\centering

\begin{tabular}{cccccccc}
\toprule
\textbf{Group ($S=2$)} & \textbf{Statistic} & \textbf{Pooled ERM} & \textbf{Separate ERM} & \textbf{HDP} \\
\midrule
\multirow{2}{*}{Age Group 5} & Mean & 0.001058 & 0.005270 & 0.000791 \\
& Standard Deviation & 0.000226 & 0.001734 & 0.000049 \\
\midrule
\multirow{2}{*}{Age Group 8} & Mean & 0.001036 & 0.004285 & 0.000744 \\
& Standard Deviation & 0.000248 & 0.002312 & 0.000033 \\
\bottomrule
\end{tabular}
\caption{Summary of results from logistic regression experiment on the CDC Diabetes Health Indicators Dataset. Mean and standard deviation of out-of-sample risk are computed across 10 training data folds of size 50, after HDP hyperparameter selection performed on a separate validation sample. Note: ERM was performed using the \texttt{scikit-learn} Python library \citep{scikit-learn}, and it includes an $L^2$ regularization parameter of size $0.005$ to stabilize the optimization algorithm.}
\label{tab:diabetes_log_reg}
\end{table}

\paragraph{Experiment on Diabetes Prediction With Support Vector Machine.} This experiment is also based on the CDC Diabetes Health Indicators Dataset and it is analogous to the previous one, except for the following details: First, we perform estimation on age groups 9 (of size 33,244) and 12 (of size 15,980). Second, we perform support vector machine classification \citep{cortes1995support} by choosing $h(\theta,\xi)$ as the smooth hinge loss \citep{rennie2005loss}
\begin{align*}
    h(\theta, (y,\boldsymbol{x})) & = \left(\frac{1}{2}-z(y,\boldsymbol{x})\right) I(z\leq 0) + \frac{1}{2}(1-z)^2 I(0<z(y,\boldsymbol{x})<1),\\
    \textnormal{where } z(y,\boldsymbol{x}) &:= y\cdot \theta^\top\boldsymbol{x},
\end{align*}
for $y\in\{-1,1\}$.

After standardizing feature variables, we take $H = N(\boldsymbol{0}, I)$ (prior centering distribution), $\phi(t) = \exp(t)-1$ (i.e., $\beta = 1$), $T_s=T_0=100$ (Multinomial-Dirichlet approximation steps at both levels of the hierarchy), $M=10$ (number of MC samples from the HDP posterior), and set the SGD step size $\eta_t = 10,000/(100 + \sqrt{t})$. We run SGD until visually-inspected convergence, which takes around 4 seconds per run on our infrastructure (see Appendix \ref{app:computing_resources}).

The results shown in Table \ref{tab:diabetes_SVM} are in line with those of the logistic regression experiment: The borrowing of information mechanism of our HDP method, combined with its distributional robustness, lead to improved test performance both on average and in terms of reduced variability.

\begin{table}[ht]
\centering

\begin{tabular}{cccccccc}
\toprule
\textbf{Group ($S=2$)} & \textbf{Statistic} & \textbf{Pooled ERM} & \textbf{Separate ERM} & \textbf{HDP} \\
\midrule
\multirow{2}{*}{Age Group 9} & Mean & 0.000793 & 0.005173 & 0.000520 \\
& Standard Deviation & 0.000485 & 0.004005 & 0.000042 \\
\midrule
\multirow{2}{*}{Age Group 12} & Mean & 0.000699 & 0.003582 & 0.000523 \\
& Standard Deviation & 0.000246 & 0.004515 & 0.000014 \\
\bottomrule
\end{tabular}
\caption{Summary of results from support vector classifier experiment on the CDC Diabetes Health Indicators Dataset. Mean and standard deviation of out-of-sample risk are computed across 10 training data folds of size 50, after HDP hyperparameter selection performed on a separate validation sample. Note: ERM was performed using numerical optimization from the Python \texttt{scipy} library \citep{2020SciPy-NMeth}.}
\label{tab:diabetes_SVM}
\end{table}

\paragraph{Experiment on Wine Quality Prediction With Support Vector Regression.} In this experiment, we focus on wine quality (numeric) prediction across wine types (red and white) based on 11 features, using the publicly available Wine Quality Dataset \citep{misc_wine_quality_186}\footnote{Downloaded from \url{https://archive.ics.uci.edu/dataset/186/wine+quality}.}. Specifically, the dataset provides a few objective characteristics as well as a 0-10 quality score for 6,497 wines, among which 1,599 are red and 4,898 are white. Here, we focus on predicting quality via support vector regression, by specializing $h(\theta, \xi)$ to be the $\delta$-insensitive loss
\begin{equation*}
    h(\theta, (y,\boldsymbol{x})) = \max\{0, \vert y-\theta^\top\boldsymbol{x}\vert - \delta\}, \qquad \delta = 0.0005,
\end{equation*}
and comparing our HDP-based method to pooled and separate ERM baselines.

Like in the previous experiments, to make the task relevant for our interest in distributionally uncertain problems, we focus on a very small training sample size of $30$ wines for each group, leaving the many remaining observations as test samples. Following the same 10-fold cross validation procedure as for the previous experiments, we perform parameter selection for
\begin{align*}
    \alpha_0, \alpha_1, \alpha_2 \in \{ & 15, 16, 17, 18, 19, 20, 22, 24, 26, 28, 30, 35, 40, 45,\\
    & 50, 55, 60, 65, 70, 75, 80, 85, 90, 95, 100, 110, 120\}.
\end{align*}

After standardizing feature variables, we take $H = N(\boldsymbol{0}, I)$ (prior centering distribution), $\phi(t) = \exp(t)-1$ (i.e., $\beta = 1$), $T_s=T_0=100$ (Multinomial-Dirichlet approximation steps at both levels of the hierarchy), $M=200$ (number of MC samples from the HDP posterior), and set the SGD step size $\eta_t = 3,000/(100 + \sqrt{t})$. We run SGD until visually-inspected convergence, which takes around 1 second per run on our infrastructure (see Appendix \ref{app:computing_resources}).

As in the previous experiments, Table \ref{tab:winequality_SVR} shows that the borrowing of information mechanism of our HDP method, combined with its distributional robustness, leads to improved test performance both on average and in terms of reduced variability.

\begin{table}[ht]
\centering

\begin{tabular}{cccccccc}
\toprule
\textbf{Group ($S=2$)} & \textbf{Statistic} & \textbf{Pooled ERM} & \textbf{Separate ERM} & \textbf{HDP} \\
\midrule
\multirow{2}{*}{Red Wines} & Mean & 0.000687 & 0.000732 & 0.000679 \\
& Standard Deviation & 0.000034 & 0.000093 & 0.000018 \\
\midrule
\multirow{2}{*}{White Wines} & Mean & 0.000743 & 0.000786 & 0.000717 \\
& Standard Deviation & 0.000035 & 0.000040 & 0.000023 \\
\bottomrule
\end{tabular}
\caption{Summary of results from support vector regression experiment on the Wine Quality Dataset. Mean and standard deviation of out-of-sample risk are computed across 10 training data folds of size 30, after HDP hyperparameter selection performed on a separate validation sample. Note: ERM was performed using the \texttt{scikit-learn} Python library \citep{scikit-learn}.}
\label{tab:winequality_SVR}
\end{table}

\section{Experiment on DORO Criterion}\label{app:experiments_DORO}

\paragraph{Setting.} In this experiment, we test the performance of our DORO criterion in the same high-dimensional sparse linear regression task as in Appendix~\ref{app:DP_lin_reg_experiment}, although the samples are contaminated with a small fraction of outliers. Note that the training dynamics reported in Figure~\ref{fig:DRO_DORO_losses} are taken from the first out of 200 independent simulations performed in this experiment (see below for further details).

\paragraph{Data-Generating Process.} The data for the experiment are generated iid across simulations (200) and observations ($n=100$ per simulation) as follows. For each observation $i=1,\dots,100$, the $d$-dimensional ($d=90$) covariate vector follows a multivariate normal distribution with mean 0 and such that (i) each covariate has unitary variance, and (ii) any pair of distinct covariates has covariance 0.3:
\begin{equation*}
    x_i = \begin{bmatrix}
        x_{i1} \\
        \vdots \\
        x_{id}
    \end{bmatrix} \sim \mathcal N(0, \Sigma), \quad \Sigma = \begin{bmatrix}
        1 & 0.3 & \cdots & 0.3 \\
        0.3 & 1 & \cdots & 0.3\\
        \vdots & \vdots & \ddots & \vdots\\
        0.3 & 0.3 & \cdots & 1
    \end{bmatrix} \in \mathbb R^{d\times d}.
\end{equation*}
Then, for the first 95 observations, the response has conditional distribution $y_i\mid x_i \sim \mathcal N(a^\top x_i, \sigma^2)$, with
$$a = (1, 1, 1, 1, 1, 0, \cdots, 0)^\top\in\mathbb R^d$$
and $\sigma = 0.5$. Instead, we generate the last 5 observations from an outlier conditional distribution $y_i\mid x_i \sim \mathcal N(b^\top x_i, \sigma^2)$, where
$$b = -10 \times (1, 1, 1, 1, 1, 1, \cdots, 1)^\top\in\mathbb R^d.$$
Together with these 100 training samples, at each simulation we generate 5000 test samples from the uncontaminated distribution, on which we compute out-of-sample RMSE for the DORO, DRO, ambiguity-neutral (Ridge), and OLS procedures.

\paragraph{Robust Criterion Parameters.} For each simulated sample, we run our DRO and DORO procedures setting the following parameter values: $\phi(t)=\beta\exp(t/\beta)-\beta$, $\beta \in\{1, \infty\}$, $\alpha\in\{1, 2, 5, 10\}$, and $p_0 = \mathcal N(0,I)$, where the $\beta = \infty$ setting corresponds to Ridge regression with regularization parameter $\alpha$. Finally, we run 50 Monte Carlo simulations to approximate the criterion, and truncate the Multinomial-Dirichlet approximation at $T=100$.

\paragraph{Stochastic Gradient Descent Parameters} We initialize the algorithm at $\theta = (0,\dots,0)$ and set the step size at $\eta_t = 200/(100 + \sqrt{t})$. The contamination parameter $\varepsilon$ for the DORO training procedure is set at $0.1 > 0.05 = \varepsilon_\star$. The number of passes over data is set after visual inspection of convergence of the criterion value. The run time per SGD run is less than 1 second on our infrastructure (see Appendix \ref{app:computing_resources}).

\paragraph{Results.} Figure~\ref{fig:DORO_lin_reg_histograms} confirms te expectation that DORO is able to restore the favorable performance of our distributionally robust procedure by filtering out the outliers that make it unstable. In particular, both in terms of out of sample prediction and of parameter estimation accuracy, DORO improves upon plain and regularized ERM and especially DRO (which shows worse performance than ERM in this setting with outliers).

\begin{figure}
    \centering
    \includegraphics[width=0.8\linewidth]{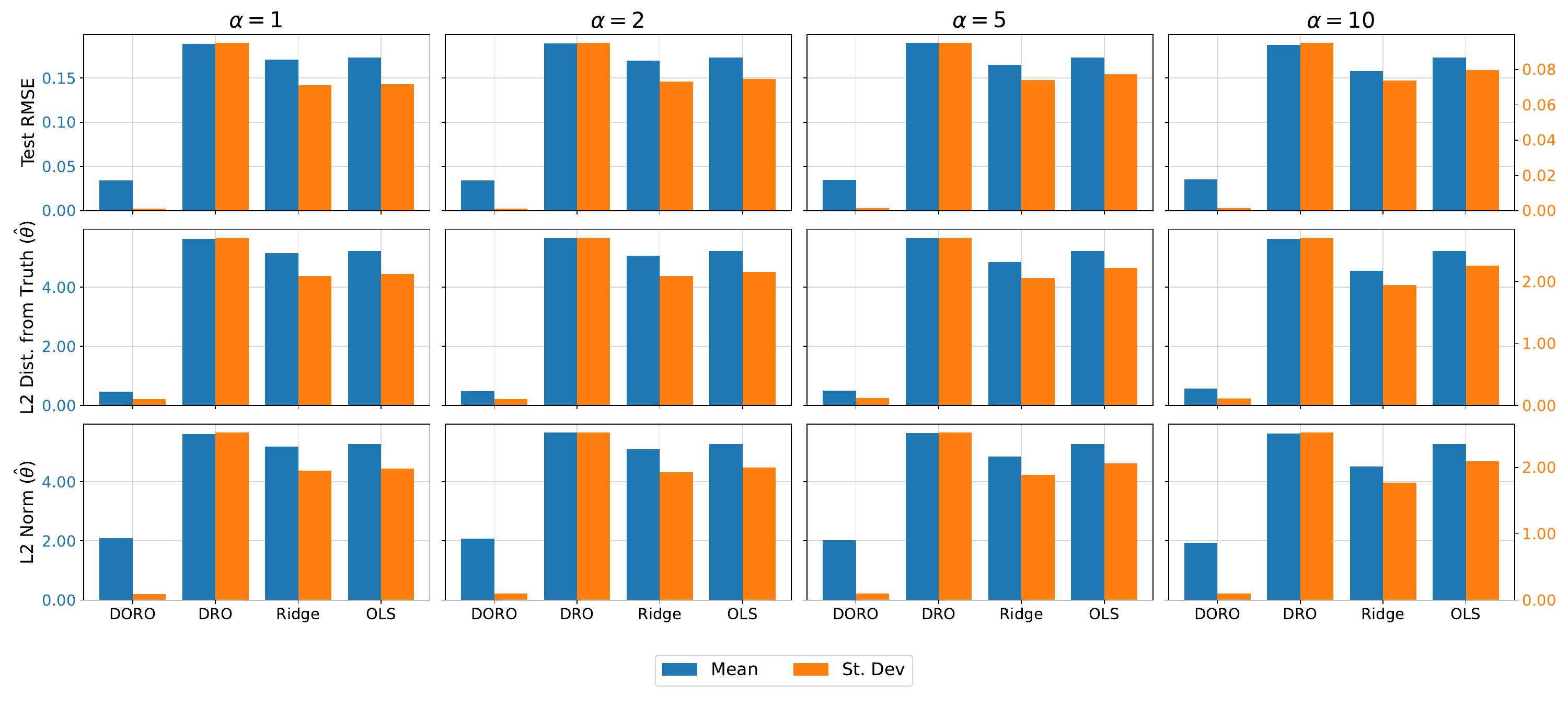}
    \caption{Simulation results for high-dimensional sparse linear regression experiment with 5\% of contaminated sample. Bars report the mean and standard deviation (across 200 sample simulations) of the test RMSE, $L^2$ distance of estimated coefficient vector $\hat\theta$ from the data-generating one, and the $L^2$ norm of $\hat\theta$. Results are shown for the DORO, DRO, ambiguity-neutral, and OLS procedures. Note: The left (blue) axis refers to mean values, the right (orange) axis to standard deviation values.}
    \label{fig:DORO_lin_reg_histograms}
\end{figure}

\section{Computational Infrastructure}\label{app:computing_resources}

All experiments were performed on a desktop with 12th Gen Intel(R) Core(TM) i9-12900H, 2500 Mhz, 14 Core(s), 20 Logical Processor(s) and 32.0 GB RAM.

\vskip 0.2in
\bibliography{sample}

\end{document}